\newtheorem{theorem}{Theorem}[section]
\newtheorem{lemma}[theorem]{Lemma}
\newtheorem{definition}[theorem]{Definition}
\newtheorem{assumption}[theorem]{Assumption}
\newtheorem{remark}[theorem]{Remark}
\newtheorem{problem}[theorem]{Problem}
\title{Simplex-enabled Safe Continual Learning Machine}
\author{
Hongpeng Cao\\
  Technical University of Munich, Germany \\
  \texttt{cao.hongpeng@tum.de}
   \AND
Yanbing Mao \\
   Wayne State Univerity, USA \\
  \texttt{hm9062@wayne.edu}\\
  \And
  Yihao Cai\\
  Wayne State Univerity, USA\\
  \texttt{yihao.cai@wayne.edu} \\
   \And
   Lui Sha \\
   University of Illinois Urbana-Champaign, USA \\
  \texttt{lrs@illinois.edu} \\
   \And
   Marco Caccamo \\
   Technical University of Munich, Germany \\
   \texttt{mcaccamo@tum.de} \\
}
\begin{document}

\maketitle

\begin{abstract}
This paper proposes the \textbf{SeC-Learning Machine:} Simplex-enabled safe continual learning for safety-critical autonomous systems. The SeC-learning machine is built on Simplex logic (that is, ``using simplicity to control complexity'') and physics-regulated deep reinforcement learning (Phy-DRL). The SeC-learning machine thus constitutes HP (high performance)-Student, HA (high assurance)-Teacher, and Coordinator. Specifically, the HP-Student is a pre-trained high-performance but not fully verified Phy-DRL, continuing to learn in a real plant to tune the action policy to be safe. In contrast, the HA-Teacher is a mission-reduced, physics-model-based, and verified design. As a complementary, HA-Teacher has two missions: backing up safety and correcting unsafe learning. The Coordinator triggers the interaction and the switch between HP-Student and  HA-Teacher. Powered by the three interactive components, the SeC-learning machine can i) assure lifetime safety (i.e., safety guarantee in any continual-learning stage, regardless of HP-Student's success or convergence), ii) address the Sim2Real gap, and iii) learn to tolerate unknown unknowns in real plants. The experiments on a cart-pole system and a real quadruped robot demonstrate the distinguished features of the SeC-learning machine, compared with continual learning built on state-of-the-art safe DRL frameworks with approaches to addressing the Sim2Real gap.
\end{abstract}

\section{Introduction}
Deep reinforcement learning (DRL) has been integrated into many autonomous systems (see exampels in \cref{fig:enter-label}) and have demonstrated breakthroughs in sequential and complex decision-making in broad areas, ranging from autonomous driving \cite{kendall2019learning,kiran2021deep} to chemical processes \cite{savage2021model,he2021deep} to robot locomotion \cite{ibarz2021train,levine2016end}. Such learning-integrated systems promise to revolutionize many processes in different industries with tangible economic impact \cite{market1,market2}. However, the public-facing AI incident database \cite{AID} reveals that machine learning (ML) techniques, including DRL, can deliver remarkably high performance but no safety assurance \cite{brief2021ai}. Hence, the high-performance DRL with verifiable safety assurance is even more vital today, aligning well with the market’s need for safe ML technologies.

\subsection{Related Work on Safe DRL} \label{sare}
Significant efforts have been devoted to promoting safe DRL in recent years, including developing safety-embedded rewards and residual action policies and deriving verifiable safety, as detailed below.     

The safety-embedded reward is crucial for a DRL agent to learn a high-performance action policy with verifiable safety. The control Lyapunov function (CLF) is the potential safety-embedded reward \cite{perkins2002lyapunov, berkenkamp2017safe,chang2021stabilizing, zhao2023stable}. Meanwhile, the seminal work \cite{westenbroek2022lyapunov} revealed that a CLF-like reward can enable DRL with verifiable stability. At the same time, enabling verifiable safety is achievable by extending CLF-like rewards with given safety conditions or regulations. However, systematic guidance for constructing such CLF-like rewards remains open.  

\begin{wrapfigure}{r}{0.646\textwidth}
\vspace{-0.00cm}
\begin{center}
\includegraphics[width=0.646\textwidth]{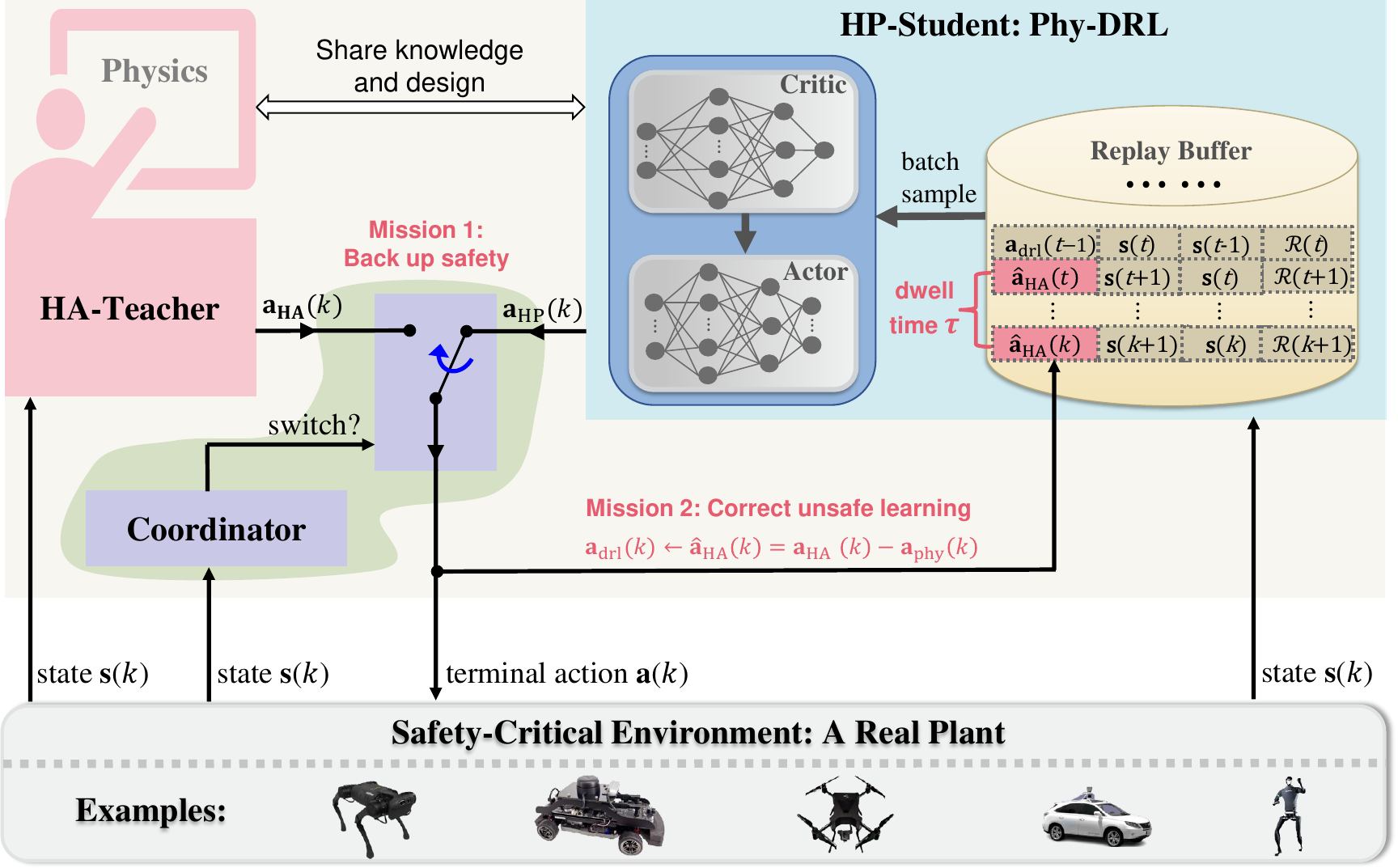}
  \end{center}
  \vspace{-0.1cm}
\caption{SeC-Learning Machine.}
  \vspace{0.cm}
  \label{fig:enter-label}
  \vspace{-0.55cm}
\end{wrapfigure}
The residual action policy is another shift in the focus of safe DRL, which integrates data-driven DRL action policy and physics-model-based action policy. The existing residual diagrams focus on stability guarantee \cite{rana2021bayesian,li2022equipping,cheng2019control,johannink2019residual}, with the exception being \cite{cheng2019end} on safety guarantee. However, the physics models considered are nonlinear and intractable, which thwarts delivering a verifiable safety guarantee or assurance, if not impossible.  

The verifiable safety (i.e., having verifiable conditions of safety guarantee) is enabled in  
the recently developed Phy-DRL (physics-regulated DRL) framework \cite{Phydrl1, Phydrl2}.  Meanwhile, Phy-DRL can address the aforementioned open problems. Summarily, Phy-DRL simplifies a nonlinear system dynamics model as an analyzable and tractable linear one. This linear model can then be a model-based guide for constructing the safety-embedded (CLF-like) reward and residual action policy. 

\subsection{Challenges and Open Problems}
Although safe DRL has developed significantly, DRL-enabled autonomous systems still face formidable safety challenges, rooting in the Sim2Real gap and unknown unknowns on real plants.  

\textbf{Challenge 1: Sim2Real Gap.} Due to the expense of data acquisition and potential safety concerns in real-world settings, the prevalent DRL involves training a policy within a simulator using synthetic data and deploying it onto the physical platforms. The discrepancy between the simulated environment and the real scenario thus leads to the Sim2Real gap that degrades the performance of the pre-trained DRL in a real plant. Numerous approaches have been developed to address Sim2Real gap \cite{Peng_2018, nagabandi2019learning, tan2018sim, yu2017preparing,cloudedge,imai2022vision, du2021autotuned,vuong2019pick, yang2022safe}.  The common aim of these approaches is to enhance realism in the simulator through, for example, domain randomization \cite{sadeghi2017cad2rl} and delay randomization (mimicking asynchronous communication and sampling) \cite{imai2022vision}. These approaches can address the Sim2Real gap to different degrees. However, the unrevealed gap still persistently impedes safety assurance if DRL (also other ML) is not trained or learning in the \underline{real} plant in \underline{real} environments, using \underline{real}-time data.

\textbf{Challenge 2: Unknown Unknowns.} The unknown unknowns generally refer to outcomes, events, circumstances, or consequences that are not known in advance and cannot be predicted in time and distributions \cite{bartz2019we}. The dynamics of many learning-enabled systems (e.g., autonomous vehicles \cite{rajamani2011vehicle} and airplanes \cite{roskam1995airplane}) are governed by conjunctive known knowns (e.g., Newton's laws of motion), known unknowns (e.g., Gaussian noise without knowing mean and variance), and unknown unknowns (due to, for example, unforeseen operating environments and DNNs' huge parameter space, intractable activation and hard-to-verify). The safety assurance also requires resilience to unknown unknowns, which is very challenging. The reasons root in characteristics of unknown unknowns: almost zero historical data and unpredictable time and distributions, leading to unavailable models for scientific discoveries and understanding.

Intuitively, enabling the continual learning of the DRL agent in the \underline{real} plant -- using its \underline{real}-time data generated in \underline{real} environments --  is the way to address Challenges 1 and 2. However, two safety problems related to the prospect of continual learning arise. 
\begin{problem}
\vspace{0.05cm}
\textbf{How do we teach or assist the DRL agent to correct his unsafe continual learning?}
\label{defa1a}
\vspace{0.00cm}
\end{problem}

\begin{problem}
\vspace{0.00cm}
\textbf{Facing an unsafe DRL agent, how can we guarantee the real-time safety of a system?}
\label{defa2a}
\vspace{0.00cm}
\end{problem}

\subsection{Contribution: Simplex-enabled Safe Continual Learning Machine} \label{cocondre}
If successful, continual learning in a real plant can directly address the Sim2Real gap and learn to tolerate unknown unknowns. However, the DRL's real-time action policy during continual learning cannot be fully verified and can have software faults. Continual learning shall run on a fault-tolerant architecture to address this safety concern. Simplex -- using simplicity to control complexity \cite{sha2001using,bak2014real} -- is a successful software architecture for complex safety-critical autonomous systems. The core of Simplex uses verified and simplified high-assurance controller to control the unverified high-performance and complex controller. Meanwhile, recently developed Phy-DRL theoretically and experimentally features fast training with verifiable safety \cite{Phydrl1, Phydrl2}. These motivate us to develop the \textbf{Simplex-enabled safe continual learning (SeC-learning) machine} to address \cref{defa1a} and \cref{defa2a}, which is built on Simplex architecture and Phy-DRL. As shown in \cref{fig:enter-label}, the SeC-learning machine constitutes HP (high performance)-Student, HA (high assurance)-Teacher, and coordinator. The HP-Student is a pre-trained Phy-DRL and continues to learn to tune the action policy to be safe in a real plant. The HA-Teacher action is a verified, mission-reduced, and physics-model-based design. As a complementary, HA-Teacher has two missions: backing up safety and teaching to correct unsafe learning of HP-Student. The coordinator triggers the switch and the interaction between the HP-Student and HA-Teacher to ensure lifetime safety (i.e., safety guarantee in any stage of continual learning, regardless of HP-Student's success or convergence).

\begin{table}[ht]
\centering
\caption{Notations throughout Paper}
\vspace{0.1cm}
\begin{tabular}{|l|l|}
\hline
$\mathbb{R}^{n}$  &   set of $\emph{n}$-dimensional real vectors        \\ \hline
$\mathbb{N}$   &  set of natural numbers       \\ \hline
$[\mathbf{x}]_{i}$ & $i$-th entry of vector $\mathbf{x}$       \\ \hline
$[\mathbf{W}]_{i,:}$ & $i$-th row of matrix $\mathbf{W}$ \\ \hline$[\mathbf{W}]_{i,j}$ & matrix $\mathbf{W}$'s element at row $i$ and column $j$\\ \hline
$\mathbf{P} \succ \!(\prec)~0$ & matrix $\mathbf{P}$ is positive (negative) definite \\ \hline  
$\top$ & matrix or vector transposition  \\ \hline
$\left|  \cdot  \right|$ & set cardinality,  or absolute value \\ \hline
$\mathbf{I}_{n}$  &   $n$-dimensional identity matrix       \\ \hline
$\mathbf{O}_{m \times n}$  &   $m \times n$-dimensional zero matrix       \\ \hline
$\lceil x \rceil$  &   $m \times n$-dimensional zero matrix       \\ \hline
\end{tabular} \label{notation}
\end{table}

\section{Preliminaries: Safety Definition}
The dynamics of a real plant can be described by 
\begin{align}
\mathbf{s}(k+1) = {\mathbf{A}} \cdot \mathbf{s}(k) + {\mathbf{B}} \cdot \mathbf{a}(k) + \mathbf{f}(\mathbf{s}(k), \mathbf{a}(k)), ~k \in \mathbb{N}  \label{realsys}
\end{align}
where $\mathbf{f}(\mathbf{s}(k), \mathbf{a}(k)) \in \mathbb{R}^{n}$ is the \underline{unknown} model mismatch, ${\mathbf{A}} \in \mathbb{R}^{n \times n}$ and ${\mathbf{B}} \in \mathbb{R}^{n \times m}$ denote \underline{known} system matrix and control structure matrix, respectively,  $\mathbf{s}(k) \in \mathbb{R}^{n}$ is real-time system state of real plant, $\mathbf{a}(k) \in \mathbb{R}^{m}$ is real-time action from SeC-learning machine. 

The actions of the SeC-learning machine aims to constrain the states of a real plant to the safety set:
\begin{align}
\text{Safety set}:~{\mathbb{X}} \triangleq \left\{ {\left. \mathbf{s} \in {\mathbb{R}^n} \right|\underline{\mathbf{v}} \le {\mathbf{D}} \cdot \mathbf{s} - \mathbf{v} \le \overline{\mathbf{v}}}, \!~\mathbf{D} \in \mathbb{R}^{h \times n},\right. \left. ~\text{with}~\mathbf{v}, \overline{\mathbf{v}}, \underline{\mathbf{v}} \in \mathbb{R}^{h}  \right\}. \label{aset2}
\end{align}
where $\mathbf{D}$, $\mathbf{v}$, $\overline{\mathbf{v}}$ and $\underline{\mathbf{v}}$ are given in advance for formulating $h \in \mathbb{N}$ safety conditions. In the SeC-learning machine, the safety set is not directly used to embed high-dimensional safety conditions (indicated by $h \in \mathbb{N}$ in \cref{aset2}) into the DRL reward since the reward is a real one-dimensional value. To address the problem, the concept of safety envelope was introduced in \cite{mao2023sl1, Phydrl1, Phydrl2}, whose condition is one-dimensional and which will be designed to be a subset of the safety set $\mathbb{X}$. 
\begin{align}
\text{Safety envelope:}~{\Omega} \triangleq \left\{ {\left. {\mathbf{s} \in {\mathbb{R}^n}} \right|{\mathbf{s}^\top}\cdot{\mathbf{P}}\cdot\mathbf{s} \le 1,~{\mathbf{P}} \succ 0} \right\}, \label{set3}
\end{align}
building on which safety definition is introduced below.  
\begin{definition}
Consider the safety envelope $\Omega$ \eqref{set3} and safety set $\mathbb{X}$ \eqref{aset2}. The real plant \eqref{realsys} is said to be safe, if given any $\mathbf{s}(1) \in \Omega  \subseteq \mathbb{X}$, the $\mathbf{s}(k) \in \Omega \subseteq \mathbb{X}$ holds for any time $k \in \mathbb{N}$.
\label{defsafety}
\end{definition}

\section{Design Overview: SeC-Learning Machine}
The proposed SeC-learning machine aims to address \cref{defa1a} and \cref{defa2a} with capabilities of assuring lifetime safety, addressing the Sim2Real gap, and tolerating unknown unknowns. To do so, as shown in \cref{fig:enter-label}, the learning machine is designed to have three critical interactive  components: 
\begin{itemize}
\vspace{-0.10cm}
    \item \textbf{HP-Student} is a pre-trained Phy-DRL (physics-regulated DRL \cite{Phydrl1, Phydrl2}) model and continues to learn in a safety-critical real plant to tune his action policy to be safe.  
    \item \textbf{HA-Teacher} is a verifiable and analyzable physics-model-based action policy with two missions: backing up the safety of a real plant and correcting unsafe learning of HP-Student. 
    \item \textbf{Coordinator} triggers the switch and interaction between HP-Student and HA-Teacher by monitoring the real-time system states. Specifically, when the real-time states of the real plant under the control of HP-Student approach the safety boundary, the coordinator triggers the switch to HA-Teacher and the correction of unsafe actions in learning. In other words, the HA-Teacher takes over the HP-Student to control the real plant to safe (i.e., backing up safety). Meanwhile, the HA-Teacher uses his safe actions to correct the HP-Student's unsafe actions in the replay buffer for learning. Once the real-time states return to a safe region, the coordinator triggers the switch back to HP-Student and terminates the learning correction. 
\end{itemize}
Next, we detail the designs of the three interactive components in \cref{opijk112345,opijk112346,ttacher}, respectively.

\section{SeC-Learning Machine: HP-Student Component} \label{opijk112345}
The HP-Student builds on Phy-DRL (physics-regulated deep reinforcement learning) proposed in \cite{Phydrl1, Phydrl2}. The critical reason is that Phy-DRL's training mission is pre-defined: searching for an action policy that renders the assigned safety envelope invariant. In this way, HP-Student can share his mission with HP-Student and Coordinator, so they can have a common goal in the learning machine: rendering the safety envelope invariant in a safety-critical real plant in the face of Sim2Real gap and unknown unknowns. We next detail the designs of HP-Student. 

\subsection{HP-Student: Residual Action Policy and Safety-embedded Reward}
Following Phy-DRL in \cite{Phydrl1, Phydrl2}, the HP-Student adopts the concurrent residual action policy and safety-embedded reward, as they can offer fast and stable training and successfully encode the safety envelope ${\Omega}$. The residual action formula is
\begin{align}
\mathbf{a}_{\text{HP}}(k) = \underbrace{\mathbf{a}_{\text{drl}}(k)}_{\text{data-driven}} + \underbrace{\mathbf{a}_{\text{phy}}(k) ~(:=  \mathbf{F} \cdot \mathbf{s}(k))}_{\text{model-based}},\label{residual}
\end{align}
where $\mathbf{a}_{\text{drl}}(k)$ denotes a date-driven action from DRL, while $\mathbf{a}_{\text{phy}}(k)$ is a physics-model-based action ($\mathbf{F}$ is our design). Meanwhile, the safety-embedded reward: 
\begin{align}
\mathcal{R}( {\mathbf{s}(k),\mathbf{a}_{\text{drl}}}(k)) = \underbrace{\mathbf{s}^\top(k) \cdot  \mathbf{H}  \cdot \mathbf{s}(k)  - {\mathbf{s}^\top(k+1)} \cdot \mathbf{P} \cdot \mathbf{s}(k+1)}_{\triangleq ~ r(\mathbf{s}(k),~\mathbf{s}(k+1))}   ~+~ w( \mathbf{s}(k),\mathbf{a}_{\text{HP}}(k)), \label{reward}
\end{align}
where the sub-reward $w( \mathbf{s}(k),\mathbf{a}(k))$ aims at high-performance operations (e.g., minimizing energy consumption of resource-limited robots \cite{yang2021learning,gangapurwala2020guided}). In contrast, the sub-reward $r(\mathbf{s}(k),\mathbf{s}(k+1))$ is safety-critical. \cref{reward} also defines: 
\begin{align}
\mathbf{H} \triangleq  {{\overline{\mathbf{A}}^\top} \cdot \mathbf{P} \cdot \overline{\mathbf{A}}}, ~~~~\text{with}~~~\overline{\mathbf{A}} \triangleq \mathbf{A} + {\mathbf{B} } \cdot {\mathbf{F}}~~\text{and}~~0 ~\prec~  \mathbf{H} ~\prec~ \alpha \cdot \mathbf{P},~~\alpha \in (0,1). \label{th10000br}
\end{align}
The matrices $\mathbf{P}$ and $\mathbf{F}$ are design variables, using the available physics-model knowledge $\left({\mathbf{A},~ \mathbf{B}} \right)$. Their automatic computations will be discussed in \cref{controlable}.

\subsection{HP-Student: Controllable Contribution Ratio} \label{controlable}
\begin{wrapfigure}{r}{0.40\textwidth}
\vspace{-0.6cm}
  \begin{center}
\includegraphics[width=0.40\textwidth]{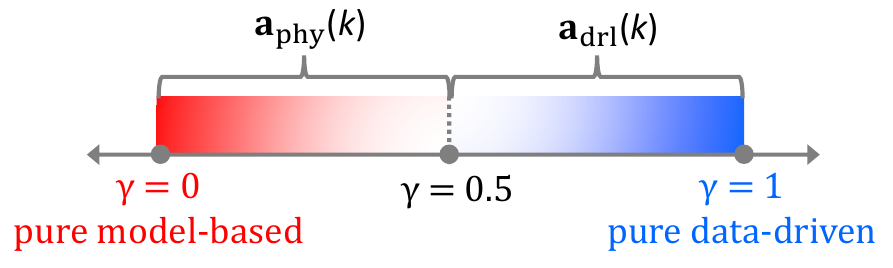}
  \end{center}
  \vspace{-0.35cm}
  \caption{Contribution ratio $\gamma$.}
  \vspace{-0.30cm}
  \label{rartio}
 \vspace{-0.20cm}
\end{wrapfigure}

In residual diagram \eqref{residual}, the contribution ratio between data-driven and model-based policies controls HP-Student's safety and system performance. To understand this, we define the contribution ratio as $\gamma  \triangleq \frac{{space \left\{ {\left| {{\mathbf{a}_{\text{drl}}}\left( k \right)} \right|,\forall k \in \mathbb{N}} \right\}}}{{space \left\{ {\left| {{\mathbf{a}_{\text{drl}}}\left( k \right)} \right|,\forall k \in \mathbb{N}} \right\} + space\left\{ {\left| {{\mathbf{a}_{\text{phy}}}\left( k \right)} \right|,\forall k \in \mathbb{N}} \right\}}}$. As depicted in \cref{rartio}, if $\gamma$ approaches 0, i.e., the physics-model-based action policy dominates the integration, featuring analyzable and verifiable behavior but limited performance. If $\gamma$ approaches 1, i.e., the data-driven policy dominates the integration, featuring high performance but hard-to-analyze and hard-to-verify. Therefore, a controllable contribution ratio is desired. The controllable space of data-driven actions is innate, as Phy-DRL is built on DDPG \cite{lillicrap2015continuous}, which directly maps states to actions within the interval $[-1, 1]$ using the Tanh activation function. The action space can be rescaled with a controllable magnitude factor $m$ to expand the space to $[-m, m]$. So, the remaining job is to enable the controllable space of physics-model-based actions: 
\begin{align}
\!\!\textbf{Action Space:}~{\mathbb{A}_{\text{phy}}} \triangleq \left\{ {\left. {\mathbf{a}_{\text{phy}} \in {\mathbb{R}^m}} ~\right|~\underline{\mathbf{z}} \le {\mathbf{C}} \cdot \mathbf{a}_{\text{phy}} - \mathbf{z} \le \overline{\mathbf{z}}, \!~\text{with}~\mathbf{C} \in \mathbb{R}^{g \times m}, ~\mathbf{z}, \overline{\mathbf{z}}, \underline{\mathbf{z}} \in \mathbb{R}^{m}  }\right\}. \label{org}
\end{align}
where $\mathbf{C}, \mathbf{z}, \overline{\mathbf{z}}$ and $\underline{\mathbf{z}}$ are users' options for controlling the space of model-based actions. \cref{residual} shows the model-based action completely depends on $\mathbf{F}$. So,we shall redesign $\mathbf{F}$ to control model-based action to space \eqref{org}. Due to the page limit, the proposed redesign for delivering $\mathbf{F}$, reward \eqref{reward} and controllable action space \eqref{org} is presented in \cref{Auxmovcma}.

\subsection{HP-Student: Continual Learning}
HP-Student is a Phy-DRL model pre-trained in a simulator or another domain, which takes the \textit{actor-critic} architecture-based DRLs such as \cite{lillicrap2015continuous} \cite{sac} for training. The pre-trained Phy-DRL model has an action policy and an action-value function. When deployed to a new safety-critical environment, the SeC-learning machine enables the pre-trained policy to continually and safely search for a safe policy that maximizes the expected return. 

Sampling efficiency is one of the important considerations for continual learning in the real world. Experience replay (ER) \cite{andrychowicz2017hindsight} allows off-policy algorithms to reuse the experience collected in the past, greatly improving the sampling efficiency and avoiding forgetting the learned knowledge \cite{khetarpal2022towards}. ER is also beneficial for breaking the correlation between adjacent transitions to avoid sampling bias for a stable learning process. Those features are very important in continual learning, where online data is limited due to the expensive interaction on the physical system. During the online inference, we continuously store the real transitions realized by safe high-performance action of HP-Student or corrected unsafe data-driven action by HA-Teacher to the replay buffer. Specifically, as illustrated in \cref{fig:enter-label}, if the HP-Student's action $\mathbf{a}_{\text{HP}}(k)$ leads to unsafe behavior of a real plant, HA-Teacher takes over his role of controlling real plant to be safe, and corrects his unsafe data-driven action $\mathbf{a}_{\text{drl}}(k)$ to $\widehat{\mathbf{a}}_{\text{HA}}(k)$ according to 
\begin{align}
\mathbf{a}_{\text{drl}}(k) \leftarrow \widehat{\mathbf{a}}_{\text{HA}}(k) \triangleq {\mathbf{a}}_{\text{HA}}(k) - \mathbf{a}_{\text{phy}}(k), \label{correctdef}
\end{align}
where $ \mathbf{a}_{\text{phy}}(k)$ is HP-student's model-based action in residual action policy \eqref{residual}, and ${\mathbf{a}}_{\text{HA}}(k)$ is the action from HA-Teacher, whose design is presented in \cref{ttacher}. Meanwhile, the online learning process will uniformly sample a minibatch of transitions for learning or training \cite{fujimoto2018addressing}. 

\begin{remark}
\cref{correctdef} indicates that for HP-Student's residual action policy \eqref{residual}, the correction in performed only on data-driven action $\mathbf{a}_{\text{drl}}(k)$, i.e., not including model-based action $\mathbf{a}_{\text{phy}}(k)$. The reason is that although the model-based design has limited performance and a small safe operation region due to model mismatch, it is analyzable and verifiable, and its policy is invariant because of his invariant physics-model knowledge $\left({\mathbf{A},~ \mathbf{B}} \right)$.  
\end{remark}

\section{SeC-Learning Machine: Coordinator Component} \label{opijk112346}
The Coordinator triggers the switch between HP-Student and HA-Teacher to control the real plant by monitoring the system's state in real-time. The switching logic of terminal action applied to a real plant is described below. 
\begin{align}
\mathbf{a}(k) = \begin{cases}
		\mathbf{a}_{\text{HA}}(k), &\text{if}~{\mathbf{s}^\top}(t) \cdot \mathbf{P} \cdot \mathbf{s}(t) \ge \varepsilon < 1 ~\text{and}~ t \le k \le t + \tau,  ~\tau \in \mathbb{N}\\ 
        \mathbf{a}_{\text{HP}}(k),      &\text{otherwise}  
	\end{cases} \label{coordinator}
\end{align}
synchronizing with which is the correcting logic of HP-Student's unsafe actions in his replay buffer: 
\begin{align}
\mathbf{a}_{\text{drl}}(k) = \begin{cases}
		\widehat{\mathbf{a}}_{\text{HA}}(k), &\text{if}~{\mathbf{s}^\top}(t) \cdot \mathbf{P} \cdot \mathbf{s}(t) \ge \varepsilon < 1 ~\text{and}~ t \le k \le t + \tau,  ~\tau \in \mathbb{N}\\ 
        \mathbf{a}_{\text{drl}}(k),      &\text{otherwise}  
	\end{cases} \label{coordinatorcl}
\end{align}
where $\widehat{\mathbf{a}}_{\text{HA}}(k)$ is the corrected unsafe action by HA-Teacher, defined in \cref{correctdef}. Noting that 
\begin{wrapfigure}{r}{0.58\textwidth}
\vspace{-0.45cm}
  \begin{center}
\includegraphics[width=0.58\textwidth]{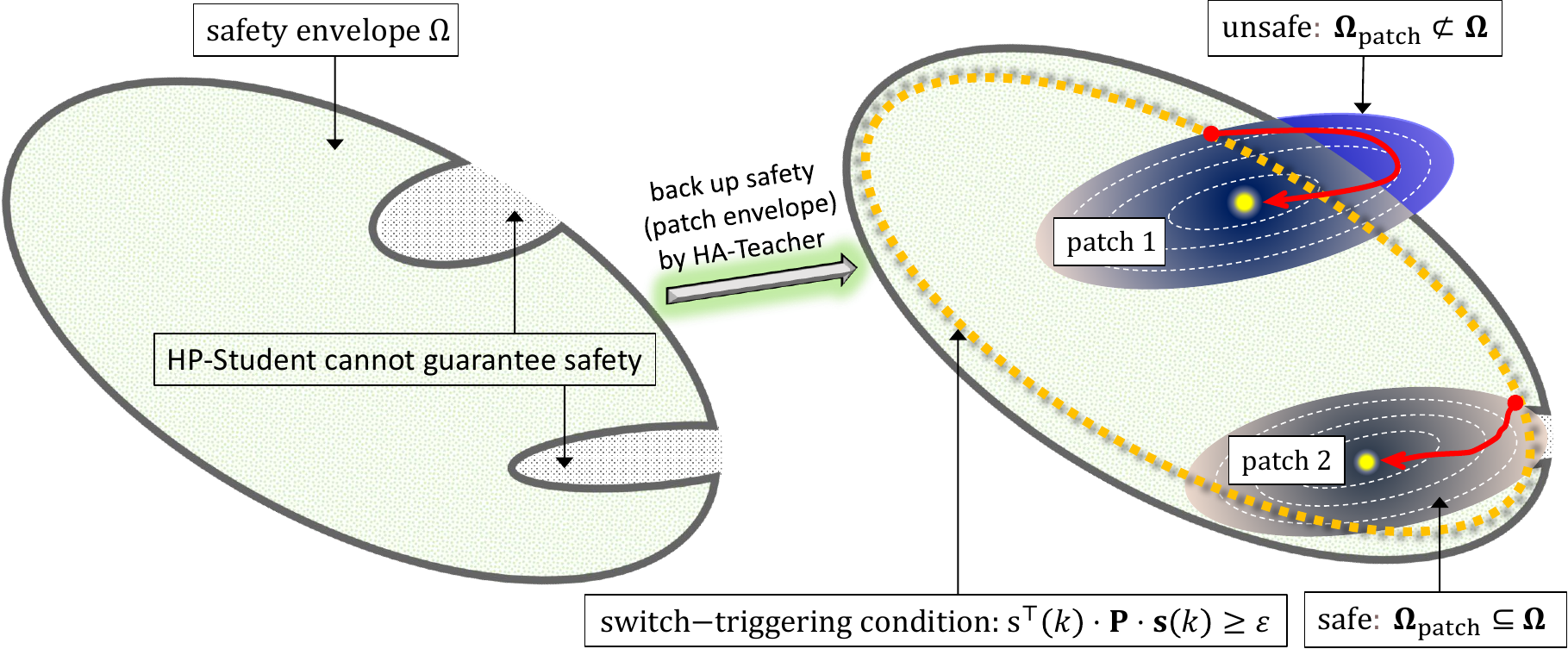}
  \end{center}
  \vspace{-0.10cm}
\caption{System phase behavior.}
  \vspace{-0.10cm}
  \label{behavior}
  \vspace{-0.4cm}
\end{wrapfigure}
real plant initially operates from a safety envelope, we can observe from \cref{coordinator} and \cref{coordinatorcl} that the triggering condition of the switch from HP-Student to HA-Teacher and the unsafe action correction is ${\mathbf{s}^\top}(t) \cdot \mathbf{P} \cdot \mathbf{s}( t) \ge \varepsilon$. In addition, the requirement `$t \le k \le t + \tau$' indicates that once HA-Teacher takes over the role of HP-Student, he has a dwell time $\tau \in \mathbb{N}$; after the dwell time, he returns the control role to HP-Student. The dwell time has two considerations:
\begin{itemize}
\vspace{-0.21cm}
\item Constraining the states of a real plant to a very safe space (i.e., approaching the center of a safety envelope patch, illustrated in \cref{behavior}) to preserve sufficient fault-tolerant space for HP-Student's continual learning. 
\vspace{-0.05cm}
\item Collecting sufficient data of safe actions and states to correct the unsafe continual learning of HP-Student (see corrected $\tau \in \mathbb{N}$ unsafe actions in replay buffer in \cref{fig:enter-label}). 
\vspace{-0.15cm}
\end{itemize}
The dwell time $\tau$ is a design of HA-Teacher, which is carried out in \cref{dwelltime} in \cref{sudell}. 

\begin{remark}[\textbf{Parallel Running}] \label{parlasdffg}
Finally, we note that the HA-Teacher and HP-Student in the SeC-learning machine run in parallel. This configuration guarantees
that when the Coordinator activates the HA Teacher to back up safety and correct unsafe learning, his actions can be available immediately. If operating without parallel running, the system can lose control due to the time delay in sampling, communication, and computation. 
\end{remark}

\section{SeC-Learning Machine: HA-Teacher Component}\label{ttacher}
HA-Teacher has tasks in the SeC-leaning machine: 
\begin{itemize}
\vspace{-0.15cm}
    \item \textbf{Back up Safety via Patching Safety Envelope}. As soon as the HP-Student leads to an unsafe real-time system behavior, the HA-Teacher intervenes to safely control the system through patching the safety envelope, depicting in \cref{behavior}. 
    \item \textbf{Correct Unsafe Learning}. The HA-Teacher uses safe actions to correct the real-time and potentially (in dwell time $\tau$ horizon) unsafe actions of the HP-Student for learning.
\end{itemize}

HA-Teacher is a physics-model-based design whose function is reduced to be safety-critical only. Compared with the HP-Student, the HA-Teacher has relatively rich dynamics knowledge about real plants. Hereto, the dynamics model leveraged by HA-Teacher updates from \eqref{realsys} as 
\begin{align}
\mathbf{s}(k+1) = {\mathbf{A}}(\mathbf{s}(k)) \cdot \mathbf{s}(k) + {\mathbf{B}}(\mathbf{s}(k)) \cdot \mathbf{a}_{\text{HA}}(k) + \mathbf{g}(\mathbf{s}(k)), ~k \in \mathbb{N}  \label{realsyshadesign}
\end{align}
where $\mathbf{g}(\mathbf{s}(k)) \in \mathbb{R}^{n}$ is the \underline{unknown} model mismatch for HA-Teacher. The physics-model knowledge available to HA-Teacher is thus $({\mathbf{A}}(\mathbf{s}(k)), {\mathbf{B}}(\mathbf{s}(k)))$. Because of the \textbf{Parallel Running} configuration, HA-Teacher is always active to compute a safe action policy with control goal as 
\begin{align}
\text{Patch center}:~~\overline{\mathbf{s}}^* = \chi \cdot \mathbf{s}(k)  ~~~\text{with}~~\chi \in (-1,1). \label{goal}
\end{align} 
In other words, with real-time sensor data and physics-model knowledge, the HA-Teacher is always in `active' status to compute a model-based action policy to control the real plant to reach the goal $\overline{\mathbf{s}}^*$. To achieve this, HA-Teacher first obtains tracking-error dynamics from \cref{realsyshadesign} as   
\begin{align}
\mathbf{e}(k+1) = {\mathbf{A}(\mathbf{s}(k))} \cdot \mathbf{e}(k) + {\mathbf{B}}(\mathbf{s}(k)) \cdot \mathbf{a}_{\text{HA}}(k) + \mathbf{h}(\mathbf{e}(k)),  ~\text{with}~~\mathbf{e}(k) \triangleq \mathbf{s}(k) - \overline{\mathbf{s}}^{*} \label{realsyserror}
\end{align} 
where $\mathbf{h}(\mathbf{e}(k)) \in \mathbb{R}^{n}$ denotes unknown model mismatch, and HA-Teacher's action policy  is   
\begin{align}
\mathbf{a}_{\text{HA}}(k) = \widehat{\mathbf{F}} \cdot \mathbf{e}(k),   \label{hacteacherpolicy}
\end{align}
whose aim is to track the goal $\overline{\mathbf{s}}^*$ while constraining system states to the envelope patch: 
\begin{align}
\text{Envelope patch:}~{\Omega_{\text{patch}}} \!\triangleq\! \left\{ {\left. \mathbf{s} ~\right|{{( {\mathbf{s} \!-\! \overline{\mathbf{s}}^*})}^\top} \cdot \widehat{\mathbf{P}} \cdot ( {\mathbf{s} \!-\! \overline{\mathbf{s}}^*}) \le {{( {1 \!-\! \chi })}^2} \cdot {{\mathbf{s}^\top(k)}} \cdot \widehat{\mathbf{P}} \cdot \mathbf{s}(k)}, ~\widehat{\mathbf{P}} \!\succ\! 0 \right\}
\!. \label{hacset}
\end{align}

The matrices $\widehat{\mathbf{F}}$ and $\widehat{\mathbf{P}}$ in \cref{hacteacherpolicy} and \cref{hacset} are HA-Teacher's design variables for backing up safety and correcting unsafe learning. To have them, we present a practical and common assumption on unknown model mismatch for computing them. 
\begin{assumption}
The model mismatch in $\mathbf{h}(\cdot)$ in \cref{realsyserror} is locally Lipschitz in set $\Omega_{\text{patch}}$, i.e., 
\begin{align}
{\left( {{\bf{h}}\left( {{{\bf{e}}_1}} \right) - {\bf{h}}\left( {{{\bf{e}}_2}} \right)} \right)^\top} \cdot \mathbf{P} \cdot \left( {{\bf{h}}\left( {{{\bf{e}}_1}} \right) - {\bf{h}}\left( {{{\bf{e}}_2}} \right)} \right) \le \kappa  \cdot {\left( {{{\bf{e}}_1} - {{\bf{e}}_2}} \right)^\top} \cdot \mathbf{P} \cdot \left( {{{\bf{e}}_1} - {{\bf{e}}_2}} \right),~\forall {\mathbf{e}_1}, {\mathbf{e}_2} \in \Omega_{\text{patch}},  \nonumber
\end{align}
where $\mathbf{P} \succ 0$ is shared by HP-Student, which defines his safety envelope \eqref{set3} and safety-embedded reward \eqref{reward}. \label{assm}
\end{assumption}

The designs of $\widehat{\mathbf{F}}$ and $\widehat{\mathbf{P}}$ for delivering HA-Teacher's capabilities of backing up safety and correcting unsafe learning are formally presented in the following theorem, whose proof appears in \cref{thm10007paux}. 
\begin{theorem}
Consider the HA-Teacher's action policy \eqref{hacteacherpolicy} and the envelope patch $\Omega_{\text{patch}}$ \eqref{hacset}, whose matrices $\widehat{\mathbf{F}}$ and $\widehat{\mathbf{P}}$ are computed according to 
\begin{align}
\widehat{\mathbf{F}} = \widehat{\mathbf{R}} \cdot \widehat{\mathbf{Q}}^{-1}, ~~~~ \widehat{\mathbf{P}} = \widehat{\mathbf{Q}}^{-1}, \label{acc0}
\end{align}
with the matrices $\widehat{\mathbf{R}}$ and $\widehat{\mathbf{Q}}$ satisfying
\begin{align}
&\widehat{\mathbf{Q}} \cdot {\mathbf{P}} \prec {\mathbf{I}_n} \prec \eta  \cdot \widehat{\mathbf{Q}} \cdot {\mathbf{P}}, ~~\text{with a given}~\eta > 1   \label{cc0}\\
&\left[ {\begin{array}{*{20}{c}}
(\beta - \kappa \cdot \eta \cdot (1 + \frac{1}{\omega})) \cdot \widehat{\mathbf{Q}} & \widehat{\mathbf{Q}} \cdot \mathbf{A}^\top(\mathbf{s}(k)) + \widehat{\mathbf{R}}^\top \cdot \mathbf{B}^\top(\mathbf{s}(k))\\
\mathbf{A}(\mathbf{s}(k)) \cdot \widehat{\mathbf{Q}} + \mathbf{B}(\mathbf{s}(k)) \cdot \widehat{\mathbf{R}} &  \frac{\widehat{\mathbf{Q}}}{1+\omega}
\end{array}} \right] \succ 0,\label{cc3}
\end{align}
where $\beta \in (0,1)$ and $\omega > 0$ are given parameters. Under \cref{assm}, the system \eqref{realsyserror} controlled by HA-Teacher has the following properties: 
\begin{enumerate}
\item The ${\mathbf{e}^\top}\left( k+1 \right) \cdot \widehat{\mathbf{P}} \cdot \mathbf{e}\left( k+1 \right) \le \beta \cdot {\mathbf{e}^\top}\left( k \right) \cdot \widehat{\mathbf{P}} \cdot \mathbf{e}\left( k \right)$ holds for any $k \in \mathbb{N}$. \label{stat1}
\item The $\Omega_{\text{patch}} \subseteq {\Omega}$ holds if the parameters $\eta$ in \cref{cc0} and $\chi$ in \cref{goal} satisfy
\begin{align}
{\left( {1 - \chi } \right)^2} \cdot \eta \cdot \varepsilon + {\chi ^2}\cdot \varepsilon \le 0.5.\label{cc300}
\end{align}
\label{stat2}
\end{enumerate}
\label{thm10007p}
\end{theorem}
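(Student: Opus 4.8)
The plan is to prove the two properties separately, both resting on rewriting the data \eqref{cc0}--\eqref{cc3} as quadratic-form inequalities in $\widehat{\mathbf{P}}=\widehat{\mathbf{Q}}^{-1}$. First I would unpack \eqref{cc0}: since $\widehat{\mathbf{Q}},\mathbf{P}\succ0$, a congruence by $\mathbf{P}^{1/2}$ shows that $\widehat{\mathbf{Q}}\mathbf{P}\prec\mathbf{I}_n\prec\eta\widehat{\mathbf{Q}}\mathbf{P}$ is equivalent to $\mathbf{P}\prec\widehat{\mathbf{P}}\prec\eta\mathbf{P}$, i.e. $x^\top\mathbf{P} x\le x^\top\widehat{\mathbf{P}} x\le\eta\,x^\top\mathbf{P} x$ for all $x\in\mathbb{R}^n$. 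This is the bridge between the metric $\mathbf{P}$ used in \cref{assm} (and in the safety envelope \eqref{set3}) and the metric $\widehat{\mathbf{P}}$ appearing in the envelope patch \eqref{hacset}.

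For the contraction property, abbreviate $\overline{\mathbf{A}}_k\triangleq\mathbf{A}(\mathbf{s}(k))+\mathbf{B}(\mathbf{s}(k))\widehat{\mathbf{F}}$, so that the closed loop of \eqref{realsyserror} under \eqref{hacteacherpolicy} is $\mathbf{e}(k+1)=\overline{\mathbf{A}}_k\mathbf{e}(k)+\mathbf{h}(\mathbf{e}(k))$. I would argue by induction over the dwell window, maintaining $\mathbf{s}(k)\in\Omega_{\text{patch}}$ (the trajectory starts on the patch boundary at the switch instant), so that \cref{assm} is applicable. Expanding $\mathbf{e}^\top(k+1)\widehat{\mathbf{P}}\mathbf{e}(k+1)$ and splitting the cross term by Young's inequality with weight $\omega$,
\begin{align*}
\mathbf{e}^\top(k+1)\widehat{\mathbf{P}}\mathbf{e}(k+1)&\le(1+\omega)\,\mathbf{e}^\top(k)\,\overline{\mathbf{A}}_k^\top\widehat{\mathbf{P}}\,\overline{\mathbf{A}}_k\,\mathbf{e}(k)\\
&\quad+\big(1+\tfrac1\omega\big)\,\mathbf{h}^\top(\mathbf{e}(k))\,\widehat{\mathbf{P}}\,\mathbf{h}(\mathbf{e}(k)),
\end{align*}
and then \cref{assm} together with $\widehat{\mathbf{P}}\prec\eta\mathbf{P}$ and $\mathbf{P}\prec\widehat{\mathbf{P}}$ bounds the last term by $\big(1+\tfrac1\omega\big)\eta\kappa\,\mathbf{e}^\top(k)\widehat{\mathbf{P}}\mathbf{e}(k)$. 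It then suffices to show $(1+\omega)\,\overline{\mathbf{A}}_k^\top\widehat{\mathbf{P}}\,\overline{\mathbf{A}}_k\prec\big(\beta-\eta\kappa(1+\tfrac1\omega)\big)\widehat{\mathbf{P}}$, since the two contributions then add up to exactly $\beta\,\mathbf{e}^\top(k)\widehat{\mathbf{P}}\mathbf{e}(k)$. This matrix inequality is \eqref{cc3} in disguise: writing $\widehat{\mathbf{R}}=\widehat{\mathbf{F}}\widehat{\mathbf{Q}}$, the off-diagonal blocks become $\widehat{\mathbf{Q}}\,\overline{\mathbf{A}}_k^\top$ and $\overline{\mathbf{A}}_k\widehat{\mathbf{Q}}$; a Schur complement on the positive-definite block $\widehat{\mathbf{Q}}/(1+\omega)$ gives $\big(\beta-\eta\kappa(1+\tfrac1\omega)\big)\widehat{\mathbf{Q}}-(1+\omega)\widehat{\mathbf{Q}}\,\overline{\mathbf{A}}_k^\top\widehat{\mathbf{Q}}^{-1}\overline{\mathbf{A}}_k\widehat{\mathbf{Q}}\succ0$, and a congruence by $\widehat{\mathbf{Q}}^{-1}=\widehat{\mathbf{P}}$ produces precisely the needed inequality. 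Finally $\beta<1$ yields $\mathbf{s}(k+1)\in\Omega_{\text{patch}}$, closing the induction.

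For the inclusion $\Omega_{\text{patch}}\subseteq\Omega$, take $\mathbf{s}\in\Omega_{\text{patch}}$ and set $\mathbf{u}\triangleq\mathbf{s}-\overline{\mathbf{s}}^*$, so $\mathbf{s}=\mathbf{u}+\chi\mathbf{s}(k)$ and, by \eqref{hacset}, $\mathbf{u}^\top\widehat{\mathbf{P}}\mathbf{u}\le(1-\chi)^2\,\mathbf{s}^\top(k)\widehat{\mathbf{P}}\mathbf{s}(k)$. Young's inequality with weight $1$ gives $\mathbf{s}^\top\mathbf{P}\mathbf{s}\le2\,\mathbf{u}^\top\mathbf{P}\mathbf{u}+2\chi^2\,\mathbf{s}^\top(k)\mathbf{P}\mathbf{s}(k)$; bounding $\mathbf{u}^\top\mathbf{P}\mathbf{u}\le\mathbf{u}^\top\widehat{\mathbf{P}}\mathbf{u}\le(1-\chi)^2\,\mathbf{s}^\top(k)\widehat{\mathbf{P}}\mathbf{s}(k)\le(1-\chi)^2\eta\,\mathbf{s}^\top(k)\mathbf{P}\mathbf{s}(k)$ via the comparison above, and using that at the switch instant $\mathbf{s}^\top(k)\mathbf{P}\mathbf{s}(k)\le\varepsilon$ (the Coordinator's threshold for engaging HA-Teacher), one gets $\mathbf{s}^\top\mathbf{P}\mathbf{s}\le2(1-\chi)^2\eta\varepsilon+2\chi^2\varepsilon$, which is $\le1$ exactly under \eqref{cc300}; hence $\mathbf{s}\in\Omega$.

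The genuinely non-mechanical step is the contraction property: lining up the weights $1+\omega$ and $1+\tfrac1\omega$ so that the Lipschitz slack reserved in the $(1,1)$ block of \eqref{cc3} is exactly what the $\mathbf{h}$-term consumes, and carrying out the Schur-complement-plus-congruence chain that turns the synthesis LMI \eqref{cc3} in $(\widehat{\mathbf{Q}},\widehat{\mathbf{R}})$ into a contraction in $\widehat{\mathbf{P}}$. Two points should be pinned down along the way: (i) the argument as sketched uses $\mathbf{h}(\mathbf{0})=\mathbf{0}$, i.e. that the patch center $\overline{\mathbf{s}}^*$ is an equilibrium of the nominal tracking-error model, which should be made explicit in the setup of \cref{assm} (or handled by choosing $\mathbf{e}_2$ in the Lipschitz bound appropriately); and (ii) since $\mathbf{A}(\mathbf{s}(k)),\mathbf{B}(\mathbf{s}(k))$ are state dependent, \eqref{cc3} is a per-step condition, so the contraction is obtained step-wise with $\widehat{\mathbf{F}},\widehat{\mathbf{P}}$ understood as the solution at the operative $\mathbf{s}(k)$, and one must also account for the one-step overshoot of the discrete-time trigger when asserting $\mathbf{s}^\top(k)\mathbf{P}\mathbf{s}(k)\le\varepsilon$ at the switch.
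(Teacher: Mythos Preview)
Your proposal is correct and follows essentially the same approach as the paper's own proof: the same Young-inequality/Schur-complement/congruence chain for \cref{stat1}, and the same $(a+b)^\top\mathbf{P}(a+b)\le 2a^\top\mathbf{P}a+2b^\top\mathbf{P}b$ splitting together with the metric comparison $\mathbf{P}\prec\widehat{\mathbf{P}}\prec\eta\mathbf{P}$ for \cref{stat2} (the paper argues on $\partial\Omega_{\text{patch}}$ rather than all of $\Omega_{\text{patch}}$, which is a cosmetic difference). The two caveats you flag at the end---the implicit use of $\mathbf{h}(\mathbf{0})=\mathbf{0}$ and the discrete-time overshoot of the trigger threshold $\varepsilon$---are likewise glossed over in the paper's proof.
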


\begin{remark}[\textbf{Suggestion from \cref{stat1}: Dwell time $\tau$ of HA-Teacher}] We obtain from \cref{stat1} that  
\begin{align}
{{\bf{e}}^ \top }\left( k \right) \cdot \widehat {\bf{P}} \cdot {\bf{e}}\left( k \right) \le {\beta ^{k - t}} \cdot {\left( {{{\bf{e}}^*}} \right)^ \top } \cdot \widehat {\bf{P}} \cdot \left( {{{\bf{e}}^*}} \right), \label{ccrm}
\end{align}
wherein the $t$ and ${\bf{e}}^*$ denote the activation time of HA-Teacher and the initial distance with goal $\overline{\mathbf{s}}^*$ \eqref{goal}, respectively. The real-time tracking error ${\bf{e}}\left( k \right)$ can be understood as the distance to the goal $\overline{\mathbf{s}}^*$ (i.e., the center of envelope patch). Meanwhile, we can use ${{\bf{e}}^ \top }\left( k \right) \cdot \widehat {\bf{P}} \cdot {\bf{e}}\left( k \right)$ as distance function or performance metric. Hereto, we consider a safety criteria as ${{\bf{e}}^ \top }\left( k \right) \cdot \widehat {\bf{P}} \cdot {\bf{e}}\left( k \right) \le \delta$. Illustrated in \cref{behavior}, a very small $\delta$ means ``being very close to patch center" and that HA-Teacher can preserve sufficient fault-tolerance space for HA-Teacher's continual learning. Meanwhile, when the preset safety criteria hold, HP-Student takes back the control role from HA-Teacher. According to \cref{ccrm}, the condition of HA-Teacher's dwell time for satisfying the safety criteria is
\begin{align}
\tau  \ge \left\lceil {\frac{{\ln \delta  - \ln {{\left( {{{\bf{e}}^*}} \right)}^ \top } \cdot \widehat {\bf{P}} \cdot \left( {{{\bf{e}}^*}} \right)}}{{\ln \beta }}} \right\rceil. \label{dwelltime}
\end{align}
In other words, if $k-t \ge \tau$ and $\tau$ satisfies \cref{dwelltime}, we have ${{\bf{e}}^ \top }\left( k \right) \cdot \widehat {\bf{P}} \cdot {\bf{e}}\left( k \right) \le \delta$. \label{sudell}
\end{remark}

\begin{remark}[\textbf{Suggestion from \cref{stat2}: Backing up safety by envelope patches}] 
Condition \eqref{cc3} is for backing up safety when the HP-Student's actions cannot guarantee real plants' real-time safety. As depicted in \cref{behavior}, if only the property in \cref{stat1} of \cref{thm10007p} holds, the HA-Teacher cannot back up safety due to the unsafe region, highlighted in blue color. Only after the \cref{stat2} of \cref{thm10007p} holds can HA-Teacher achieve the concurrent missions of backing up safety and correcting learning, as the unsafe regions (i.e., regions outside the safety envelope) disappear.
\end{remark}

\begin{remark}[\textbf{Fast Computation}] The $\widehat{\mathbf{F}}$ and $\widehat{\mathbf{P}}$ are automatically computed from inequalities \eqref{cc0} and \eqref{cc3} by LMI toolbox \cite{gahinet1994lmi, boyd1994linear}. Its computation time is usually significantly small (e.g., 0.01 sec -- 0.04 sec), and its influence can be ignored in the configuration of parallel running (see \cref{parlasdffg}). 
\end{remark}

\section{Experiment} \label{exppjus}
We perform the experiments on a cart-pole system (simulator) and a real quadruped robot. 

\subsection{Cart-Pole System}
This experiment aims to demonstrate the effectiveness of the SeC-Learning Machine from perspectives of concurrent safety and training performances in the face of the Sim2Real gap. The pre-training of HP-Student (i.e., Phy-DRL) is performed on the simulator provided in Open-AI Gym~\cite{OpenAI-gym}. To address the Sim2Real gap, the pre-training adopts domain randomization \cite{sadeghi2017cad2rl,nagabandi2019learning} through introducing random force disturbances and randomizing friction force. We also use the simulator to mimic a real plant whose Sim2Real gap is intentionally created by inducing a friction force that is out of the distribution of the random friction force used in pre-training. 

The system's mechanical analog is characterized by the pendulum's angle $\theta$, the cart's position $x$, and their velocities $\omega = \dot \theta$ and $v = \dot x$. The mission of HP-Student is to stabilize the pendulum at equilibrium $\mathbf{s}^* = [0,0,0,0]^\top$ while constraining the system state to safety set: 
\vspace{-0.0cm}
\begin{align} 
{\mathbb{X}} =  \left\{ {\left. {\mathbf{s} \in {\mathbb{R}^4}} \right| -0.9 \le x \le 0.9, ~-0.8 <  \theta  < 0.8}\right\}. \label{safetysetexp}
\end{align}
Given the safety set, we set the space of physics-model-based action policy as 
\begin{align}
{\mathbb{A}_{\text{phy}}} \triangleq \left\{ {\left. {\mathbf{a}_{\text{phy}} \in {\mathbb{R}}} ~\right|~-25 \le \mathbf{a}_{\text{phy}} \le 25  }\right\}. \label{orgact}
\end{align}
With them, the designs of HP-Student and HA-Teacher are presented in \cref{HP} and \cref{HA}, respectively.

\begin{wrapfigure}{r}{0.35\textwidth}
\vspace{-0.1cm}
  \begin{center}
\includegraphics[width=0.35\textwidth]{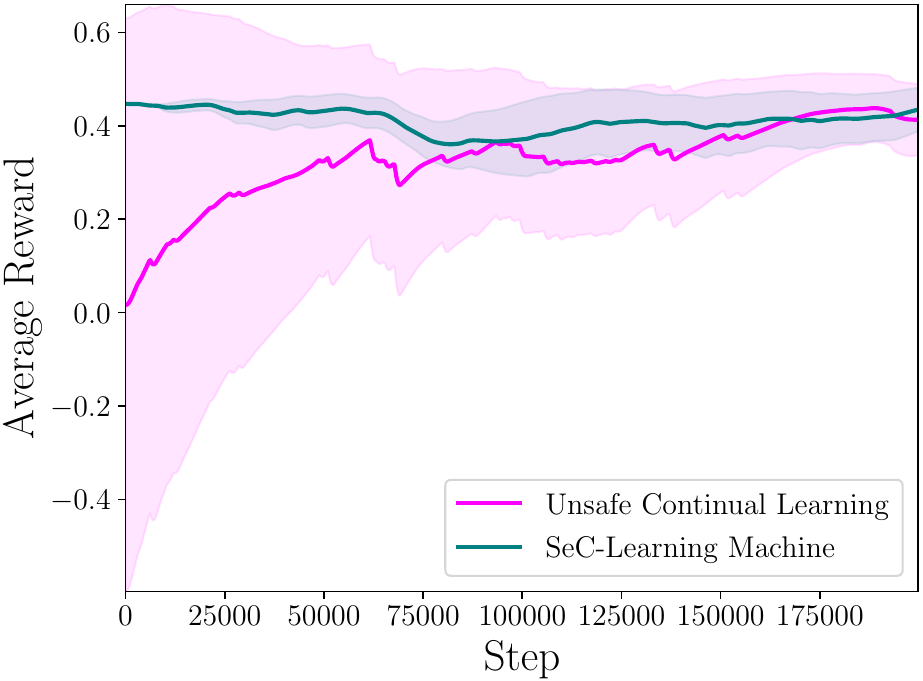}
  \end{center}
  \vspace{-0.40cm}
  \caption{Episode reward.}
  \vspace{-0.4cm}
  \label{rewardoo}
\end{wrapfigure}
For continual learning in mimicked real plants, the maximum length of one episode is 1500 steps. For the comparisons, we consider three models: \textit{`Pre-trained Policy'} (i.e., Phy-DRL pre-trained in a simulator, using randomization approaches for addressing Sim2Real gap), \textit{`Unsafe Continual Learning'} (i.e., pre-trained Phy-DRL continues to learn in the real plant but no Simplex support), and our proposed \textit{`SeC-Learning Machine'}. A very distinguished feature of our SeC-Learning Machine is lifetime safety, i.e., a safety guarantee in any stage of continual learning, regardless of the success of HP-Student. To demonstrate this and have fair comparisons, HP-Student in the `SeC-Learning Machine' and the `Unsafe Continual Learning' model are picked after training for \underline{\bf{only two episodes}}. Given the three different initial conditions, phases plots of these three models are shown in \cref{quaapcv}.  To further convincingly demonstrate the feature, additional phase plots are shown in \cref{ep3}, \cref{ep4}, and \cref{ep5}, where the models are picked after training for only \textbf{three episodes}, \textbf{four episodes}, \textbf{five episodes}, respectively. Meanwhile, the reward's training curves (five random seeds) are shown in \cref{rewardoo}. Observing \cref{quaapcv,rewardoo,ep3,ep4,ep5}, we conclude: 
\begin{itemize}
\vspace{-0.20cm}
    \item In the face of a large Sim2Real gap with a pre-training environment, the SeC-Learning Machine can always guarantee safety (system states never leave the safety envelope; see red curves in \cref{quaapcv,ep3,ep4,ep5}) in any stage of continual learning. In contrast, the pre-trained model and continual learning without Simplex cannot guarantee safety (system states left the safety envelope; see blue and green curves in \cref{quaapcv,ep3,ep4,ep5}). 
\vspace{-0.05cm}
    \item Unsafe action correction and safety backup from HA-Teacher lead to remarkably stable and fast training, compared with continual learning without Simplex logic (see \cref{rewardoo}).
\end{itemize}

\begin{figure}
    \centering
    \subfloat[$\text{Initial Condition 1}$]{\includegraphics[width=0.322\textwidth]{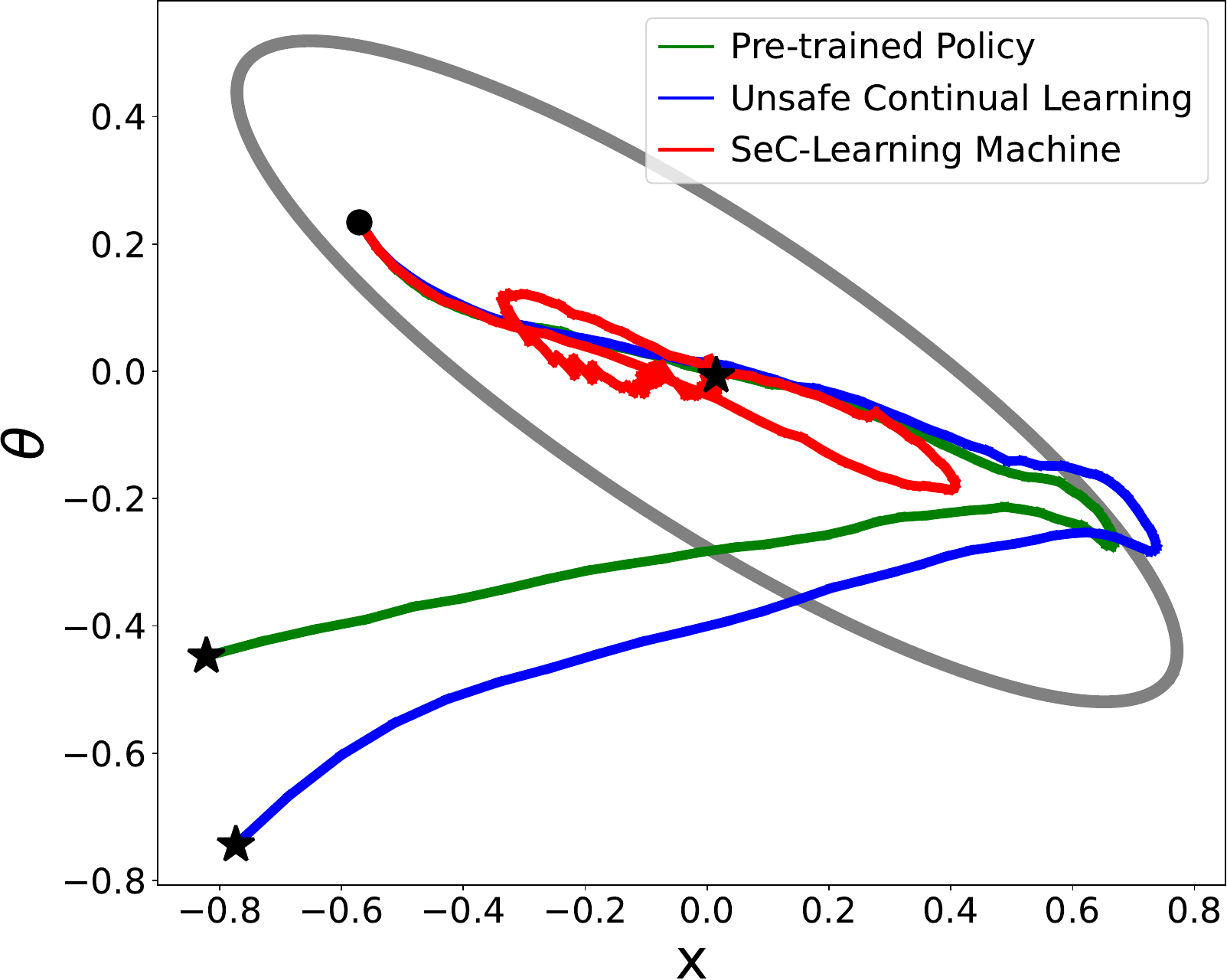}} \hspace{0.15cm}
    \centering
    \subfloat[$\text{Initial Condition 2}$]{\includegraphics[width=0.322\textwidth]{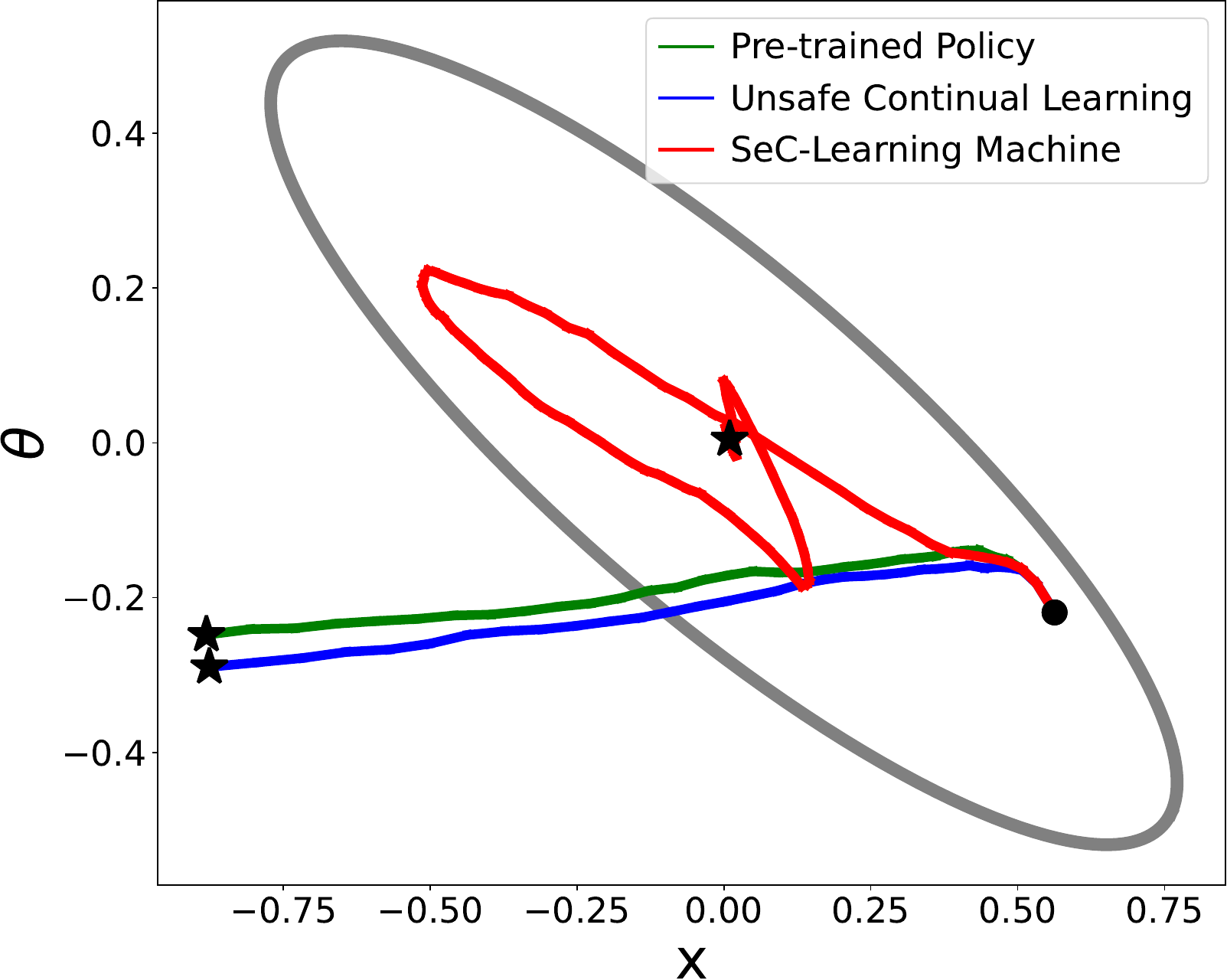}}   \hspace{0.15cm}  
    \centering
    \subfloat[$\text{Initial Condition 3}$]{\includegraphics[width=0.322\textwidth]{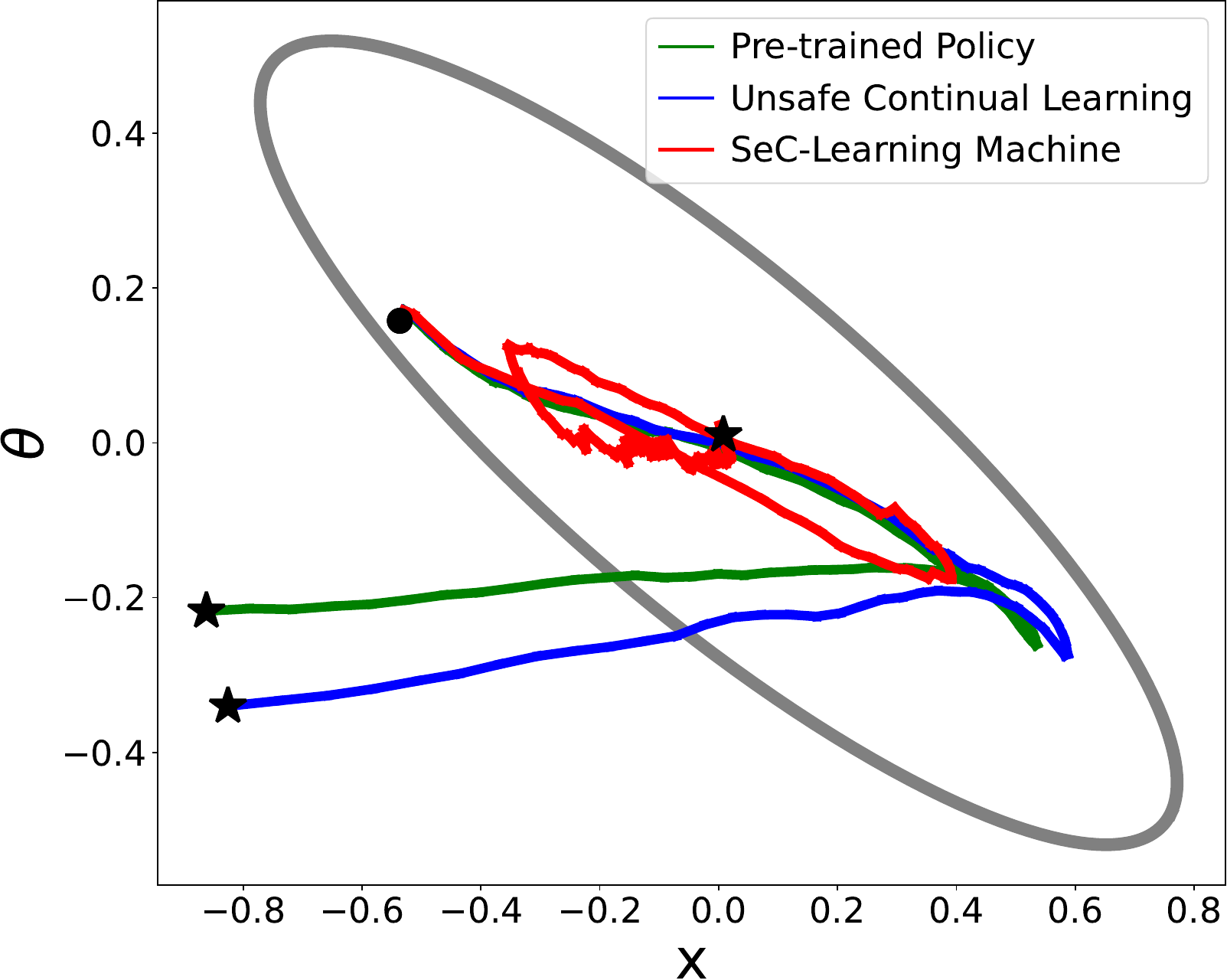}} 
    \vspace{-0.14cm}
\caption{\textbf{Two Episodes}. Phase plots, given the same initial condition. The black dot and star denote the initial condition and final location, respectively.}
\vspace{-0.2cm}
\label{quaapcv}
\end{figure}

\subsection{Real Quadruped Robot} \label{ccrobnt}
The mission of the action policy is to control the robot's COM height, COM x-velocity, and other states to track the corresponding commands $r_{v_x}$,  $r_{h}$, and zeros, under safety constraints: $\left| {\text{yaw}} \right| \le 0.2$ rad, $\left| {\text{CoM x-velocity} - r_{v_x}} \right| \le |r_{v_x}|$, $\left| {\text{CoM z-height} - r_{h}} \right| \le 0.12$ $\text{m}$, and  $\left| {\text{CoM yaw velocity}} \right| \le 0.3$ rad/s. For the Coordinator's switching logic \eqref{coordinator} and correcting logic \eqref{coordinatorcl}, we let $ \varepsilon = 0.65$ and $ \tau = 10$. The designs of HP-Student and HA-Teacher are presented in \cref{HPdog} and \cref{HAdog}, respectively. For HP-Student's pre-training of addressing the Sim2Real gap, we consider the approaches of delay randomization proposed in \cite{imai2022vision} and force randomization. During pre-taining in the simulator, the ground friction is set as 0.7, and $r_{v_x}$ = 0.6 m/s and $r_{h}$ = 0.24 m. In the real quadruped robot, one episode is defined as ``\underline{running the robot for 15 sec}." To better demonstrate the performance of the SeC-Learning Machine, the velocity command for the real robot is $r_{v_x}$ = 0.35 m/s, which very different from the one for HP-Student's training in the simulator. 

We compare three models in the real quadruped robot: \textit{`Phy-DRL'} (a pre-trained Phy-DRL model in the simulator, directly deployed on the real robot), \textit{`Continual Learning'} (a pre-trained Phy-DRL model in simulator that continues learning in the real robot for \underline{20 episodes} but without Simplex logic), and our proposed \textit{`SeC-Learning Machine'} in the \underline{1st episode} in real robot. We consider the 1st episode for the `SeC-Learning Machine' model, owning to its claimed feature of lifetime safety, i.e., safety guarantee in any stage of continual learning in a real plant regardless of HP-Student’s convergence or success. Therefore, showing system trajectories in the 1st episode will be most convincing, as HP-Student cannot converge so fast for a safe action policy within only one episode. We also note that Phy-DRL or HP-Student is pre-trained well in the simulator.

The comparisons are first viewed from the trajectories of the robot's CoM height and x-velocity regarding tracking performance and safety guarantee, which are shown in \cref{quaapcvopt} (a) and (b). Meanwhile, the comparison video of the three models in real robots is available at \href{https://www.youtube.com/watch?v=ZNpJULgLnh0}{\color{blue} real-robot-video-1 link}.
In addition, the real robot's trajectories under the control of the SeC-Learning Machine in the 5th episode, 10th episode, 15th episode, and 20th episode are shown in \cref{ep5sec,ep10sec,ep15sec,ep20sec} in \cref{addtextsim}, respectively. These figures straightforwardly depict that the SeC-Learning Machine guarantees the safety of real robots across all selected episodes or stages of continual learning. Observing \cref{quaapcvopt} (a) and (b), \cref{ep5sec,ep10sec,ep15sec,ep20sec}, and the comparison video, we discover that Phy-DRL (i.e., HP-Student), a well pre-trained model in simulator, cannot guarantee the safety of the real robot due to the existing Sim2Real gap or unknown unknowns in the physical environment that the delay and force randomization failed to capture. Thus, continuous real-time learning is needed for the real robot. However, without Simplex logic, the safety of real robot during continual learning is not guaranteed. The SeC-Learning Machine successfully addresses these challenges by ensuring lifetime safety for continual real-time learning.  

\begin{figure}
    \centering
    \subfloat[$\text{CoM x-Velocity}$]{\includegraphics[width=0.300\textwidth]{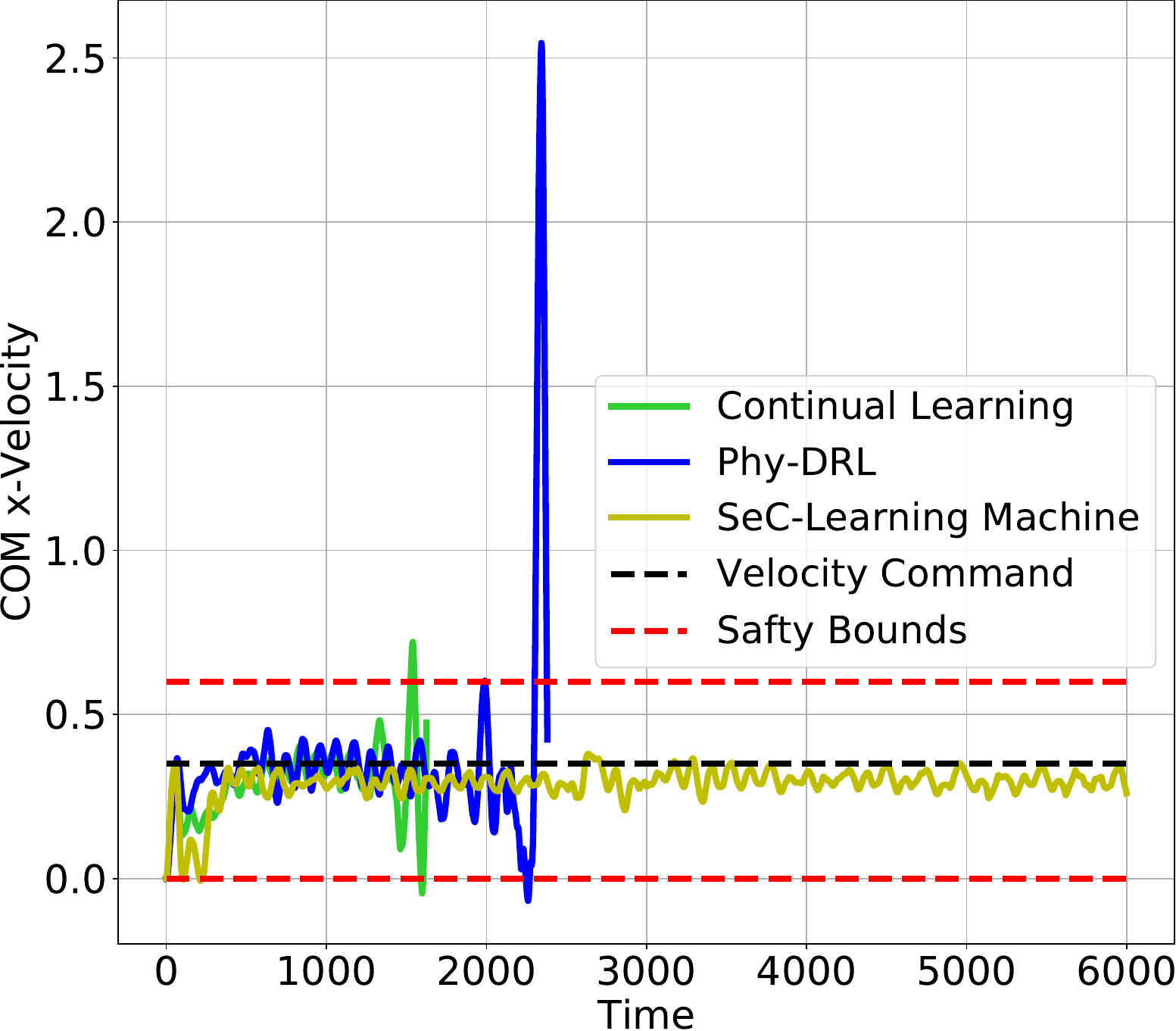}} \hspace{0.15cm}
    \centering
    \subfloat[$\text{CoM z-height}$]{\includegraphics[width=0.300\textwidth]{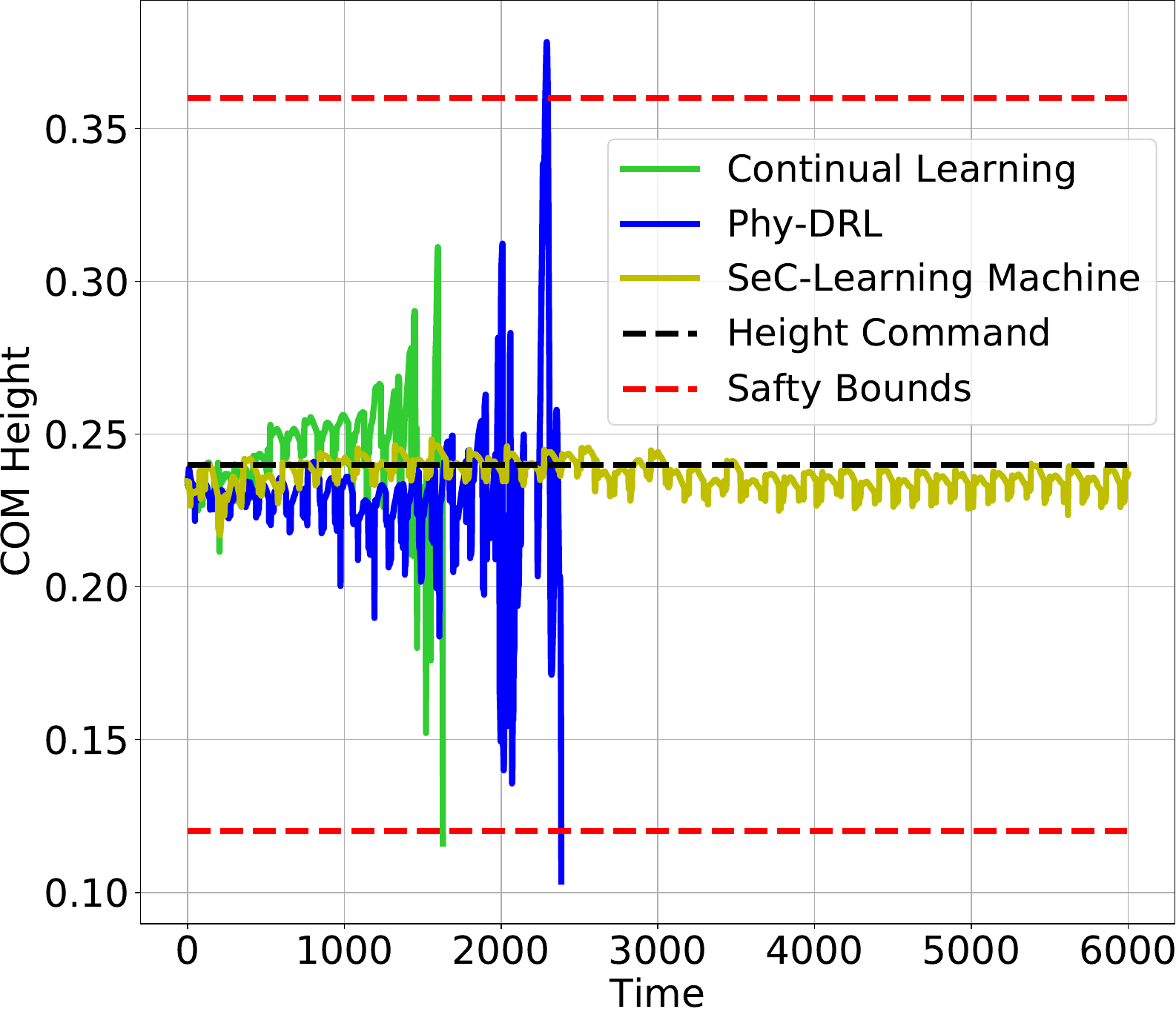}}   \hspace{0.15cm}  
    \centering
    \subfloat[$\text{Episode-average rewards}$]{\includegraphics[width=0.325\textwidth]{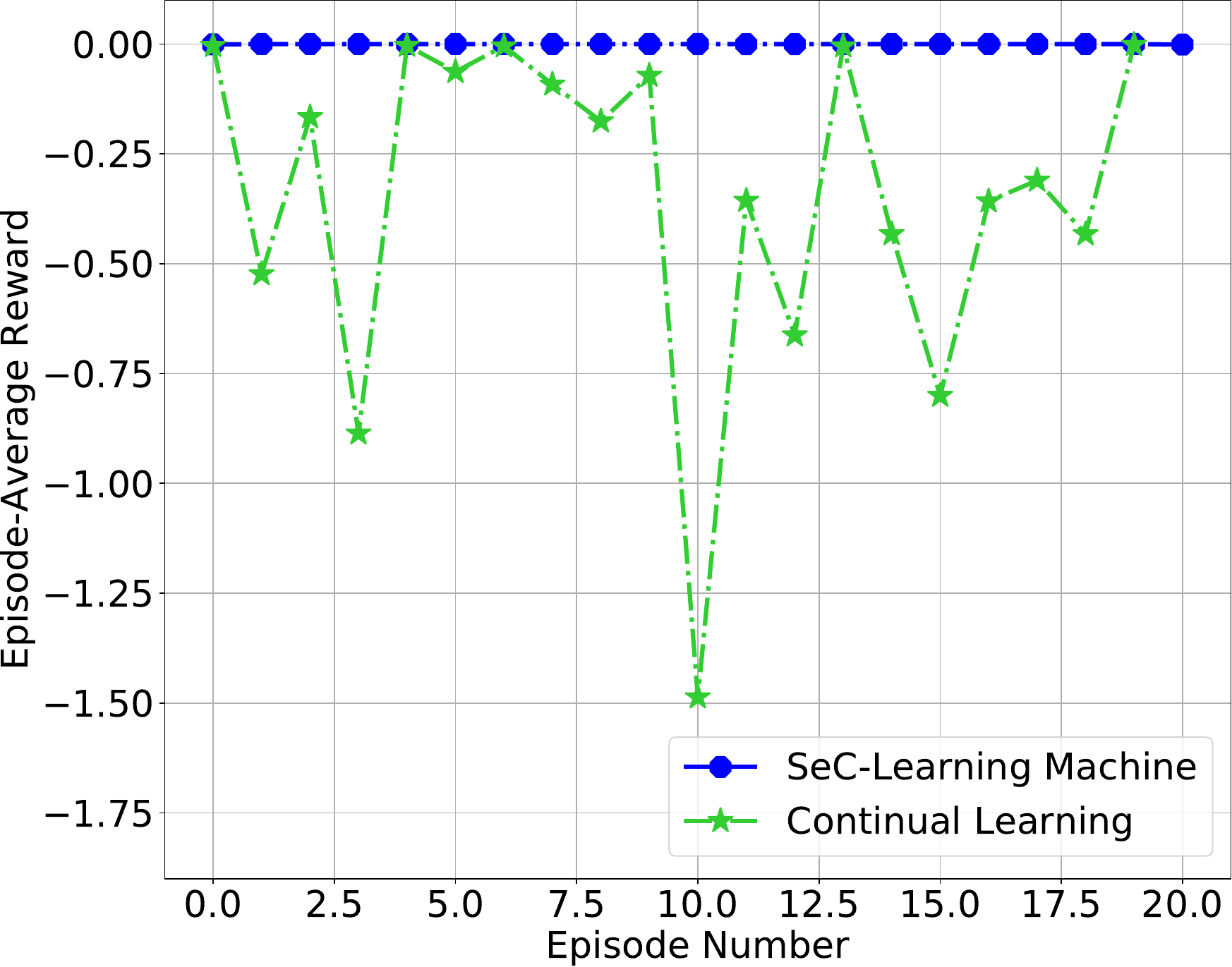}} 
    \vspace{-0.12cm}
\caption{Comparisons of trajectories and episode-average reward.}
\vspace{-0.3cm}
\label{quaapcvopt}
\end{figure}

Finally, we emphasize that another critical mission of HA-Teacher is correcting HP-Student's unsafe behavior. In other words, HP-Student learns from HA-Teacher to be safe when his actions are unsafe. To demonstrate this, we compare the episode-average reward curves of HP-Student (with HA-Teacher support) against the one (without Simplex logic or HA-Teacher support) in continual learning, as depicted in \cref{quaapcvopt} (c). In addition, the reward curves in the iteration step for 20 episodes are shown in \cref{addewwards} in \cref{addtextsimppo}. Meanwhile, We also deactivate HA-Teacher during the 20th episode of HP-Student's continual learning to verify if HP-Student has quickly assimilated safe behaviors from HA-Teacher. The demonstration video for comparison with the initial HP-Student is available at \href{https://www.dropbox.com/scl/fi/qughself4wqt8qpanhajx/com2.mp4?rlkey=get440fncfnqakwrgeymw1y2z&st=93znk3fw&dl=0}{\color{blue} real-robot-video-2 link}. The demonstration video, \cref{quaapcvopt} (c) and \cref{addewwards} prove that compared with Continual Learning, HA-Teacher enables stable, fast, and safe learning for HP-Student. Notably, the video shows that after only \underline{20 episodes} (i.e., \underline{300 sec}), the HP-Student has successfully learned from HA-Teacher to be safe.

\section{Conclusion and Discussion} \label{conljus}
This paper develops the SeC-learning machine for safety-critical autonomous systems. The SeC-learning machine constitutes HP-Student  (a pre-trained Phy-DRL, continuing to learn in a real plan), HA-Teacher (verified physics-model-based design), and Coordinator. In the learning machine, the HA-Teacher backs up safety and corrects unsafe learning of HP-Student. The SeC-learning machine aims to assure lifetime safety, address the Sim2Real gap, and learn to tolerate unknowns unknowns in real plants. Experiments also demonstrate that the SeC-learning machine features remarkably fast, stable, and safe training compared with continual learning without Simplex logic. 

Intuitively, the SeC-learning machine is also an automatic hierarchy learning machine. Specifically, the HP-student first learns from the HA-teacher to be safe. If safe enough, i.e., the HP-student can be independent of the HA-teacher, the HP-student will learn by himself to achieve the goal of high performance with verifiable safety. This investigation continues our future research direction. 

\section{Computation Resources} 
\label{computation}
In all case studies, we train and test the deep reinforcement learning (DRL) algorithm on a desktop equipped with Ubuntu 22.04, a 12th Gen Intel(R) Core(TM) i9-12900K 16-core processor, 64 GB of RAM, and an NVIDIA GeForce GTX 3090 GPU. The DRL algorithm was implemented in Python using the TensorFlow framework. We used the open-source Python CVX solver to solve LMIs problems. 

In our architecture, the computation of $\widehat{\mathbf{F}}$ and $\widehat{\mathbf{P}}$ for HA-Teacher at each patch need to be done in time when the Safety Coordinator is triggered. To enable real-time computation of CVX and interaction with the environment, we implement a multi-processing pipeline to control the robot and solve LMIs in parallel in real-time. For solving LMIs, we always let the solver take the latest state so that whenever the safety coordinator is triggered, the latest $\widehat{\mathbf{F}}$ and $\widehat{\mathbf{P}}$ will always be ready, where the delay issue was considered and formulated in the LMIs problems. 

We also noticed that the MATLAB-based CVX solver could solve the LMIs problem more consistently than the Python-based one, yielding more reliable solutions. However, the data interfacing overhead between Matlab and Python will introduce extra delay when updating $\widehat{\mathbf{F}}$ and $\widehat{\mathbf{P}}$ for HA-Teacher. Besides, the multiprocessing implementation for Matlab and Python is another technical challenge due to software compatibility issues. Therefore, we took the Python-based CVX solver for real-time real-world experiments, and we suggest that the Matlab-based solver is preferable for the less real-time sensitive applications.

\bibliographystyle{unsrt}
\bibliography{reference}

\begin{thebibliography}{10}

\bibitem{kendall2019learning}
Alex Kendall, Jeffrey Hawke, David Janz, Przemyslaw Mazur, Daniele Reda,
  John-Mark Allen, Vinh-Dieu Lam, Alex Bewley, and Amar Shah.
\newblock Learning to drive in a day.
\newblock In {\em 2019 International Conference on Robotics and Automation},
  pages 8248--8254. IEEE, 2019.

\bibitem{kiran2021deep}
B~Ravi Kiran, Ibrahim Sobh, Victor Talpaert, Patrick Mannion, Ahmad~A
  Al~Sallab, Senthil Yogamani, and Patrick P{\'e}rez.
\newblock Deep reinforcement learning for autonomous driving: A survey.
\newblock {\em IEEE Transactions on Intelligent Transportation Systems},
  23(6):4909--4926, 2021.

\bibitem{savage2021model}
Thomas Savage, Dongda Zhang, Max Mowbray, and Ehecatl Antonio Del~R{\'\i}o
  Chanona.
\newblock Model-free safe reinforcement learning for chemical processes using
  gaussian processes.
\newblock {\em IFAC-PapersOnLine}, 54(3):504--509, 2021.

\bibitem{he2021deep}
Zhenglei He, Kim-Phuc Tran, Sebastien Thomassey, Xianyi Zeng, Jie Xu, and
  Changhai Yi.
\newblock A deep reinforcement learning based multi-criteria decision support
  system for optimizing textile chemical process.
\newblock {\em Computers in Industry}, 125:103373, 2021.

\bibitem{ibarz2021train}
Julian Ibarz, Jie Tan, Chelsea Finn, Mrinal Kalakrishnan, Peter Pastor, and
  Sergey Levine.
\newblock How to train your robot with deep reinforcement learning: lessons we
  have learned.
\newblock {\em The International Journal of Robotics Research},
  40(4-5):698--721, 2021.

\bibitem{levine2016end}
Sergey Levine, Chelsea Finn, Trevor Darrell, and Pieter Abbeel.
\newblock End-to-end training of deep visuomotor policies.
\newblock {\em The Journal of Machine Learning Research}, 17(1):1334--1373,
  2016.

\bibitem{market1}
Tian Tolentino.
\newblock Autonomous aircraft market worth usd 23.7bn by 2030.
\newblock
  \url{https://www.traveldailymedia.com/autonomous-aircraft-market-research/},
  2019.

\bibitem{market2}
Bernard Marr.
\newblock How {Tesla} is using artificial intelligence to create the autonomous
  cars of the future.
\newblock \emph{Bernard Marr \& Co.}
  \url{https://bernardmarr.com/how-tesla-is-using-artificial-intelligence-to-create-the-autonomous-cars-of-the-future/},
  2021.

\bibitem{AID}
{AI} incident database.
\newblock \url{https://incidentdatabase.ai/entities/}.

\bibitem{brief2021ai}
Arnold Zachary and Toner Helen.
\newblock {AI Accidents}: An emerging threat.
\newblock {\em Center for Security and Emerging Technology}, 2021.

\bibitem{perkins2002lyapunov}
Theodore~J Perkins and Andrew~G Barto.
\newblock Lyapunov design for safe reinforcement learning.
\newblock {\em Journal of Machine Learning Research}, 3(Dec):803--832, 2002.

\bibitem{berkenkamp2017safe}
Felix Berkenkamp, Matteo Turchetta, Angela Schoellig, and Andreas Krause.
\newblock Safe model-based reinforcement learning with stability guarantees.
\newblock {\em Advances in Neural Information Processing Systems}, 30, 2017.

\bibitem{chang2021stabilizing}
Ya-Chien Chang and Sicun Gao.
\newblock Stabilizing neural control using self-learned almost {L}yapunov
  critics.
\newblock In {\em 2021 IEEE International Conference on Robotics and
  Automation}, pages 1803--1809. IEEE, 2021.

\bibitem{zhao2023stable}
Liqun Zhao, Konstantinos Gatsis, and Antonis Papachristodoulou.
\newblock Stable and safe reinforcement learning via a {Barrier-Lyapunov}
  actor-critic approach.
\newblock In {\em 62nd IEEE Conference on Decision and Control}, pages
  1320--1325. IEEE, 2023.

\bibitem{westenbroek2022lyapunov}
Tyler Westenbroek, Fernando Castaneda, Ayush Agrawal, Shankar Sastry, and
  Koushil Sreenath.
\newblock Lyapunov design for robust and efficient robotic reinforcement
  learning.
\newblock {\em arXiv:2208.06721}, 2022.

\bibitem{rana2021bayesian}
Krishan Rana, Vibhavari Dasagi, Jesse Haviland, Ben Talbot, Michael Milford,
  and Niko S{\"u}nderhauf.
\newblock Bayesian controller fusion: Leveraging control priors in deep
  reinforcement learning for robotics.
\newblock \emph{arXiv preprint} \url{https://arxiv.org/pdf/2107.09822.pdf}.

\bibitem{li2022equipping}
Tongxin Li, Ruixiao Yang, Guannan Qu, Yiheng Lin, Steven Low, and Adam Wierman.
\newblock Equipping black-box policies with model-based advice for stable
  nonlinear control.
\newblock \emph{arXiv preprint} \url{https://arxiv.org/pdf/2206.01341.pdf}.

\bibitem{cheng2019control}
Richard Cheng, Abhinav Verma, Gabor Orosz, Swarat Chaudhuri, Yisong Yue, and
  Joel Burdick.
\newblock Control regularization for reduced variance reinforcement learning.
\newblock In {\em International Conference on Machine Learning}, pages
  1141--1150, 2019.

\bibitem{johannink2019residual}
Tobias Johannink, Shikhar Bahl, Ashvin Nair, Jianlan Luo, Avinash Kumar,
  Matthias Loskyll, Juan~Aparicio Ojea, Eugen Solowjow, and Sergey Levine.
\newblock Residual reinforcement learning for robot control.
\newblock In {\em 2019 International Conference on Robotics and Automation
  (ICRA)}, pages 6023--6029. IEEE, 2019.

\bibitem{cheng2019end}
Richard Cheng, G{\'a}bor Orosz, Richard~M Murray, and Joel~W Burdick.
\newblock End-to-end safe reinforcement learning through barrier functions for
  safety-critical continuous control tasks.
\newblock In {\em Proceedings of the AAAI conference on artificial
  intelligence}, volume~33, pages 3387--3395, 2019.

\bibitem{Phydrl1}
Hongpeng Cao, Yanbing Mao, Lui Sha, and Marco Caccamo.
\newblock Physics-regulated deep reinforcement learning: Invariant embeddings.
\newblock In {\em The Twelfth International Conference on Learning
  Representations}, 2024.

\bibitem{Phydrl2}
Hongpeng Cao, Yanbing Mao, Lui Sha, and Marco Caccamo.
\newblock Physics-model-regulated deep reinforcement learning towards safety \&
  stability guarantees.
\newblock In {\em 62nd IEEE Conference on Decision and Control}, pages
  8300--8305, 2023.

\bibitem{Peng_2018}
Xue~Bin Peng, Marcin Andrychowicz, Wojciech Zaremba, and Pieter Abbeel.
\newblock Sim-to-real transfer of robotic control with dynamics randomization.
\newblock In {\em 2018 IEEE International Conference on Robotics and Automation
  (ICRA)}. IEEE, May 2018.

\bibitem{nagabandi2019learning}
Anusha Nagabandi, Ignasi Clavera, Simin Liu, Ronald~S. Fearing, Pieter Abbeel,
  Sergey Levine, and Chelsea Finn.
\newblock Learning to adapt in dynamic, real-world environments through
  meta-reinforcement learning, 2019.

\bibitem{tan2018sim}
Jie Tan, Tingnan Zhang, Erwin Coumans, Atil Iscen, Yunfei Bai, Danijar Hafner,
  Steven Bohez, and Vincent Vanhoucke.
\newblock Sim-to-real: Learning agile locomotion for quadruped robots.
\newblock {\em Robotics: Science and Systems}, 2018.

\bibitem{yu2017preparing}
Wenhao Yu, Jie Tan, C~Karen Liu, and Greg Turk.
\newblock Preparing for the unknown: Learning a universal policy with online
  system identification.
\newblock {\em Robotics: Science and Systems}, 2017.

\bibitem{cloudedge}
Hongpeng Cao, Mirco Theile, Federico~G. Wyrwal, and Marco Caccamo.
\newblock Cloud-edge training architecture for sim-to-real deep reinforcement
  learning.
\newblock In {\em 2022 IEEE/RSJ International Conference on Intelligent Robots
  and Systems (IROS)}, pages 9363--9370, 2022.

\bibitem{imai2022vision}
Chieko~Sarah Imai, Minghao Zhang, Yuchen Zhang, Marcin Kierebi{\'n}ski, Ruihan
  Yang, Yuzhe Qin, and Xiaolong Wang.
\newblock Vision-guided quadrupedal locomotion in the wild with multi-modal
  delay randomization.
\newblock In {\em 2022 IEEE/RSJ international conference on intelligent robots
  and systems (IROS)}, pages 5556--5563. IEEE, 2022.

\bibitem{du2021autotuned}
Yuqing Du, Olivia Watkins, Trevor Darrell, Pieter Abbeel, and Deepak Pathak.
\newblock Auto-tuned sim-to-real transfer, 2021.

\bibitem{vuong2019pick}
Quan Vuong, Sharad Vikram, Hao Su, Sicun Gao, and Henrik~I. Christensen.
\newblock How to pick the domain randomization parameters for sim-to-real
  transfer of reinforcement learning policies?, 2019.

\bibitem{yang2022safe}
Tsung-Yen Yang, Tingnan Zhang, Linda Luu, Sehoon Ha, Jie Tan, and Wenhao Yu.
\newblock Safe reinforcement learning for legged locomotion.
\newblock In {\em 2022 IEEE/RSJ International Conference on Intelligent Robots
  and Systems (IROS)}, pages 2454--2461. IEEE, 2022.

\bibitem{sadeghi2017cad2rl}
Fereshteh Sadeghi and Sergey Levine.
\newblock Cad2rl: Real single-image flight without a single real image, 2017.

\bibitem{bartz2019we}
Thomas Bartz-Beielstein.
\newblock Why we need an {AI}-resilient society.
\newblock {\em arXiv:1912.08786}, 2019.

\bibitem{rajamani2011vehicle}
Rajesh Rajamani.
\newblock {\em Vehicle dynamics and control}.
\newblock Springer Science \& Business Media, 2011.

\bibitem{roskam1995airplane}
Jan Roskam.
\newblock {\em Airplane flight dynamics and automatic flight controls}.
\newblock DARcorporation, 1995.

\bibitem{sha2001using}
Lui Sha et~al.
\newblock Using simplicity to control complexity.
\newblock {\em IEEE Software}, 18(4):20--28, 2001.

\bibitem{bak2014real}
Stanley Bak, Taylor~T Johnson, Marco Caccamo, and Lui Sha.
\newblock Real-time reachability for verified {Simplex} design.
\newblock In {\em 2014 IEEE Real-Time Systems Symposium}, pages 138--148. IEEE,
  2014.

\bibitem{mao2023sl1}
Yanbing Mao, Yuliang Gu, Naira Hovakimyan, Lui Sha, and Petros Voulgaris.
\newblock S$\mathcal{L}_1$-simplex: Safe velocity regulation of self-driving
  vehicles in dynamic and unforeseen environments.
\newblock {\em ACM Transactions on Cyber-Physical Systems}, 7(1):1--24, 2023.

\bibitem{yang2021learning}
Ruihan Yang, Minghao Zhang, Nicklas Hansen, Huazhe Xu, and Xiaolong Wang.
\newblock Learning vision-guided quadrupedal locomotion end-to-end with
  cross-modal transformers.
\newblock {\em 2022 International Conference on Learning Representations},
  2022.

\bibitem{gangapurwala2020guided}
Siddhant Gangapurwala, Alexander Mitchell, and Ioannis Havoutis.
\newblock Guided constrained policy optimization for dynamic quadrupedal robot
  locomotion.
\newblock {\em IEEE Robotics and Automation Letters}, 5(2):3642--3649, 2020.

\bibitem{lillicrap2015continuous}
Timothy~P. Lillicrap, Jonathan~J. Hunt, Alexander Pritzel, Nicolas Heess, Tom
  Erez, Yuval Tassa, David Silver, and Daan Wierstra.
\newblock Continuous control with deep reinforcement learning.
\newblock In {\em 4th International Conference on Learning Representations,
  {ICLR}}, 2016.

\bibitem{sac}
Tuomas Haarnoja, Aurick Zhou, Pieter Abbeel, and Sergey Levine.
\newblock Soft actor-critic: Off-policy maximum entropy deep reinforcement
  learning with a stochastic actor.
\newblock In Jennifer Dy and Andreas Krause, editors, {\em Proceedings of the
  35th International Conference on Machine Learning}, volume~80 of {\em
  Proceedings of Machine Learning Research}, pages 1861--1870. PMLR, 10--15 Jul
  2018.

\bibitem{andrychowicz2017hindsight}
Marcin Andrychowicz, Filip Wolski, Alex Ray, Jonas Schneider, Rachel Fong,
  Peter Welinder, Bob McGrew, Josh Tobin, OpenAI Pieter~Abbeel, and Wojciech
  Zaremba.
\newblock Hindsight experience replay.
\newblock {\em Advances in neural information processing systems}, 30, 2017.

\bibitem{khetarpal2022towards}
Khimya Khetarpal, Matthew Riemer, Irina Rish, and Doina Precup.
\newblock Towards continual reinforcement learning: A review and perspectives.
\newblock {\em Journal of Artificial Intelligence Research}, 75:1401--1476,
  2022.

\bibitem{fujimoto2018addressing}
Scott Fujimoto, Herke Hoof, and David Meger.
\newblock Addressing function approximation error in actor-critic methods.
\newblock In {\em International conference on machine learning}, pages
  1587--1596. PMLR, 2018.

\bibitem{gahinet1994lmi}
Pascal Gahinet, Arkadii Nemirovskii, Alan~J Laub, and Mahmoud Chilali.
\newblock The lmi control toolbox.
\newblock In {\em Proceedings of 1994 33rd IEEE conference on decision and
  control}, volume~3, pages 2038--2041. IEEE, 1994.

\bibitem{boyd1994linear}
Stephen Boyd, Laurent El~Ghaoui, Eric Feron, and Venkataramanan Balakrishnan.
\newblock {\em Linear matrix inequalities in system and control theory}.
\newblock SIAM, 1994.

\bibitem{OpenAI-gym}
Greg Brockman, Vicki Cheung, Ludwig Pettersson, Jonas Schneider, John Schulman,
  Jie Tang, and Wojciech Zaremba.
\newblock Open{AI} {G}ym, 2016.

\bibitem{zhang2006schur}
Fuzhen Zhang.
\newblock {\em The Schur complement and its applications}, volume~4.
\newblock Springer Science \& Business Media, 2006.

\bibitem{schulman2017proximal}
John Schulman, Filip Wolski, Prafulla Dhariwal, Alec Radford, and Oleg Klimov.
\newblock Proximal policy optimization algorithms.
\newblock \emph{arXiv preprint} \url{https://arxiv.org/abs/1707.06347}.

\bibitem{correctIP}
Răzvan Florian.
\newblock Correct equations for the dynamics of the cart-pole system.
\newblock 08 2005.

\bibitem{grant2009cvx}
Michael Grant, Stephen Boyd, and Yinyu Ye.
\newblock Cvx users’ guide.
\newblock {\em online: http://www. stanford. edu/boyd/software. html}, 2009.

\bibitem{vandenberghe1998determinant}
Lieven Vandenberghe, Stephen Boyd, and Shao-Po Wu.
\newblock Determinant maximization with linear matrix inequality constraints.
\newblock {\em SIAM journal on matrix analysis and applications},
  19(2):499--533, 1998.

\bibitem{gityang}
Yuxiang Yang.
\newblock Github: Quadruped robot simulator.

\bibitem{di2018dynamic}
Jared Di~Carlo, Patrick~M Wensing, Benjamin Katz, Gerardo Bledt, and Sangbae
  Kim.
\newblock Dynamic locomotion in the mit cheetah 3 through convex
  model-predictive control.
\newblock In {\em 2018 IEEE/RSJ international conference on intelligent robots
  and systems (IROS)}, pages 1--9. IEEE, 2018.

\end{thebibliography}

\newpage

\appendix
\appendixpage

\startcontents[sections]
\printcontents[sections]{}{0}{\setcounter{tocdepth}{1}}

\newpage
\section{Auxiliary Lemmas} \label{Aux}
\begin{lemma} [Schur Complement \cite{zhang2006schur}]
For any symmetric matrix $\mathbf{M} = \left[\!\! {\begin{array}{*{20}{c}}
\mathbf{A}&\mathbf{B}\\
{{\mathbf{B}^\top}}&\mathbf{C}
\end{array}} \!\!\right]$, then $\mathbf{M} \succ 0$ if and only if $\mathbf{C} \succ 0$ and $\mathbf{A} - \mathbf{B}\mathbf{C}^{-1}\mathbf{B}^\top \succ 0$. \label{auxlem2}
\end{lemma}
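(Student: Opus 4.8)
The plan is to prove both implications by a single congruence (block $\mathbf{LDL}^\top$) factorization of $\mathbf{M}$, together with two elementary facts: congruence by an invertible matrix preserves positive definiteness, and a block-diagonal symmetric matrix is positive definite exactly when each diagonal block is. The central object is the invertible block matrix
\begin{align}
\mathbf{T} = \left[\begin{array}{cc} \mathbf{I} & -\mathbf{B}\mathbf{C}^{-1} \\ \mathbf{0} & \mathbf{I} \end{array}\right], \nonumber
\end{align}
which (using $\mathbf{C}=\mathbf{C}^\top$, so $\mathbf{C}^{-1}$ is symmetric) yields by direct block multiplication the identity
\begin{align}
\mathbf{T}\,\mathbf{M}\,\mathbf{T}^\top = \left[\begin{array}{cc} \mathbf{A} - \mathbf{B}\mathbf{C}^{-1}\mathbf{B}^\top & \mathbf{0} \\ \mathbf{0} & \mathbf{C} \end{array}\right]. \label{schurcong}
\end{align}
Since $\mathbf{T}$ is unit upper-triangular it is invertible, so the substitution $\mathbf{x}=\mathbf{T}^\top\mathbf{y}$ gives $\mathbf{x}^\top\mathbf{M}\mathbf{x} = \mathbf{y}^\top(\mathbf{T}\mathbf{M}\mathbf{T}^\top)\mathbf{y}$ and ranges over all nonzero $\mathbf{x}$ as $\mathbf{y}$ ranges over all nonzero vectors; hence $\mathbf{M}\succ 0$ if and only if $\mathbf{T}\mathbf{M}\mathbf{T}^\top\succ 0$. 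Because the right-hand side of \eqref{schurcong} is block diagonal, it is positive definite precisely when both $\mathbf{A}-\mathbf{B}\mathbf{C}^{-1}\mathbf{B}^\top\succ 0$ and $\mathbf{C}\succ 0$, which is exactly the claimed characterization.

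For the reverse implication this reasoning is immediate: assuming $\mathbf{C}\succ 0$ guarantees $\mathbf{C}^{-1}$ exists so that $\mathbf{T}$ and the Schur complement are well defined, and the two block conditions then force $\mathbf{T}\mathbf{M}\mathbf{T}^\top\succ 0$, hence $\mathbf{M}\succ 0$ by the congruence equivalence. For the forward implication I must first establish $\mathbf{C}\succ 0$ \emph{before} invoking $\mathbf{C}^{-1}$: restricting the positive-definite quadratic form $\mathbf{x}^\top\mathbf{M}\mathbf{x}>0$ to test vectors of the shape $\mathbf{x}=[\mathbf{0};\,\mathbf{y}]$ with $\mathbf{y}\neq\mathbf{0}$ gives $\mathbf{y}^\top\mathbf{C}\mathbf{y}>0$, so $\mathbf{C}\succ 0$ and in particular $\mathbf{C}$ is invertible. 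With $\mathbf{C}^{-1}$ now legitimate, the congruence equivalence and the block-diagonal criterion extract $\mathbf{A}-\mathbf{B}\mathbf{C}^{-1}\mathbf{B}^\top\succ 0$, completing the direction.

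The only genuine subtlety — and the step I would treat most carefully — is this well-definedness ordering: the Schur complement $\mathbf{A}-\mathbf{B}\mathbf{C}^{-1}\mathbf{B}^\top$ cannot even be written down until $\mathbf{C}$ is known to be invertible, so the forward direction must derive $\mathbf{C}\succ 0$ from $\mathbf{M}\succ 0$ first via the restricted-quadratic-form argument rather than appeal to \eqref{schurcong} prematurely. Everything else is routine block algebra, and no deeper machinery (e.g.\ Sylvester's law of inertia) is needed, since the elementary change of variables $\mathbf{x}=\mathbf{T}^\top\mathbf{y}$ supplies the congruence equivalence directly.
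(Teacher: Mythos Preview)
Your proof is correct and is the standard congruence/block-$\mathbf{LDL}^\top$ argument, including the careful ordering (first extract $\mathbf{C}\succ 0$ from $\mathbf{M}\succ 0$ via restricted test vectors, then form $\mathbf{C}^{-1}$). The paper itself does not prove this lemma; it merely states it as an auxiliary result with a citation to \cite{zhang2006schur}, so there is no paper proof to compare against --- your argument is exactly what that reference contains.
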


\begin{lemma} \cite{Phydrl1}
Consider the sets defined in \cref{aset2} and \cref{set3}. We have ${\Omega} \subseteq \mathbb{X}$, if
\begin{align}
&\left[ \underline{\mathbf{D}} \right]_{i,:} \cdot {\mathbf{P}^{-1}} \cdot \left[ \underline{\mathbf{D}}^\top \right]_{:,i} = \begin{cases}
		\ge 1, &[{\mathbf{d}}]_i = 1\\
		\le 1, &[{\mathbf{d}}]_i = -1
	\end{cases}, \!\!~~\left[ \overline{\mathbf{D}} \right]_{i,:} \cdot {\mathbf{P}^{-1}} \cdot \left[ \overline{\mathbf{D}}^\top \right]_{:,i} \le 1, ~i \in \{1, \ldots, h\}, \label{set5}
\end{align} 
where $\overline{\mathbf{D}} = \frac{{\mathbf{D}}}{\overline{\Lambda}}$, $\underline{\mathbf{D}} = \frac{{\mathbf{D}}}{\underline{\Lambda}}$, and for $i,j \in \{1, \ldots, h\}$, 
\begin{align}
 [{\mathbf{d}}]_{i} \!\triangleq\! \begin{cases}
		1, \!\!&\left[\underline{\mathbf{v}} \!+\!  \mathbf{v}\right]_{i} \!>\! 0 \\
        1, \!\!&\left[\overline{\mathbf{v}} \!+\!  \mathbf{v}\right]_{i} \!<\! 0 \\
		-1, \!\!& \text{otherwise}
	\end{cases}\!, ~~[\overline{\Lambda}]_{i,j} \!\triangleq\! \begin{cases}
		0, \!\!&i \!\ne\! j\\
		\left[\overline{\mathbf{v}} \!+\!  \mathbf{v}\right]_{i}, \!\!&\left[\underline{\mathbf{v}} \!+\!  \mathbf{v}\right]_{i} \!>\! 0 \\
        \left[\underline{\mathbf{v}} \!+\!  \mathbf{v}\right]_{i}, \!\!&\left[\overline{\mathbf{v}} \!+\!  \mathbf{v}\right]_{i} \!<\! 0 \\
		\left[\overline{\mathbf{v}} \!+\!  \mathbf{v}\right]_{i}, \!\!\!&\text{otherwise}
	\end{cases}\!, ~~[\underline{\Lambda}]_{i,j} \!\triangleq\! \begin{cases}
		0, \!\!&i \!\ne\! j\\
		\left[\underline{\mathbf{v}} \!+\!  \mathbf{v}\right]_{i}, \!\!&\left[\underline{\mathbf{v}} \!+\!  \mathbf{v}\right]_{i} \!>\! 0 \\
        \left[\overline{\mathbf{v}} \!+\!  \mathbf{v}\right]_{i}, \!\!&\left[\overline{\mathbf{v}} \!+\!  \mathbf{v}\right]_{i} \!<\! 0 \\
		\left[-\underline{\mathbf{v}} -  \mathbf{v}\right]_{i}, \!\!\!& \text{otherwise} 
	\end{cases}\!. \nonumber
\end{align} \label{safelemmaorg}
\end{lemma}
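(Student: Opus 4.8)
The plan is to reduce the set inclusion $\Omega \subseteq \mathbb{X}$ to a collection of scalar inequalities, one per safety constraint, and to control each constraint through the support function of the ellipsoid $\Omega$. Writing $a_i \triangleq [\underline{\mathbf{v}} + \mathbf{v}]_i$ and $b_i \triangleq [\overline{\mathbf{v}} + \mathbf{v}]_i$, membership $\mathbf{s} \in \mathbb{X}$ is equivalent, by the definition of $\mathbb{X}$ in \eqref{aset2}, to the $2h$ one-sided bounds $a_i \le [\mathbf{D}]_{i,:} \cdot \mathbf{s} \le b_i$ for every $i \in \{1,\dots,h\}$. Hence it suffices to show that each $\mathbf{s}$ with $\mathbf{s}^\top \mathbf{P} \mathbf{s} \le 1$ satisfies these bounds, and then intersect over $i$.

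First I would establish the key estimate that, for any row $i$, every $\mathbf{s} \in \Omega$ obeys
\begin{align}
\left| [\mathbf{D}]_{i,:} \cdot \mathbf{s} \right| \le \sqrt{[\mathbf{D}]_{i,:} \cdot \mathbf{P}^{-1} \cdot [\mathbf{D}^\top]_{:,i}} ~=:~ \rho_i. \nonumber
\end{align}
This follows from Cauchy--Schwarz after the splitting $[\mathbf{D}]_{i,:} \cdot \mathbf{s} = \big(\mathbf{P}^{-1/2} [\mathbf{D}^\top]_{:,i}\big)^\top \big(\mathbf{P}^{1/2}\mathbf{s}\big)$, since $\|\mathbf{P}^{1/2}\mathbf{s}\|^2 = \mathbf{s}^\top \mathbf{P} \mathbf{s} \le 1$ on $\Omega$ while $\|\mathbf{P}^{-1/2}[\mathbf{D}^\top]_{:,i}\|^2 = [\mathbf{D}]_{i,:}\mathbf{P}^{-1}[\mathbf{D}^\top]_{:,i} = \rho_i^2$; the square root and the half-powers $\mathbf{P}^{\pm 1/2}$ are well defined because $\mathbf{P} \succ 0$. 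This bound is in fact the exact support value of $\Omega$ in the direction $[\mathbf{D}^\top]_{:,i}$, so it is tight and loses nothing.

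Next I would convert the row-norm hypotheses \eqref{set5} into bounds on $\rho_i$. Since $\overline{\mathbf{D}}$ and $\underline{\mathbf{D}}$ arise from $\mathbf{D}$ by scaling row $i$ by $1/[\overline{\Lambda}]_{i,i}$ and $1/[\underline{\Lambda}]_{i,i}$ respectively, the quadratic forms in \eqref{set5} equal $[\overline{\mathbf{D}}]_{i,:}\mathbf{P}^{-1}[\overline{\mathbf{D}}^\top]_{:,i} = \rho_i^2/[\overline{\Lambda}]_{i,i}^2$ and $[\underline{\mathbf{D}}]_{i,:}\mathbf{P}^{-1}[\underline{\mathbf{D}}^\top]_{:,i} = \rho_i^2/[\underline{\Lambda}]_{i,i}^2$. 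In the generic case where the slab contains the origin, i.e. $a_i \le 0 \le b_i$ (the ``otherwise'' branch, giving $[\mathbf{d}]_i = -1$, $[\overline{\Lambda}]_{i,i} = b_i$, and $[\underline{\Lambda}]_{i,i} = -a_i$), the two hypotheses read $\rho_i \le b_i$ and $\rho_i \le -a_i$. Combining these with the key estimate gives $a_i \le -\rho_i \le [\mathbf{D}]_{i,:}\mathbf{s} \le \rho_i \le b_i$, which is exactly the $i$-th pair of constraints; intersecting over $i$ yields $\Omega \subseteq \mathbb{X}$.

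The remaining work, and the step I expect to be the main obstacle, is the sign bookkeeping encoded by $[\mathbf{d}]_i$, $\overline{\Lambda}$, and $\underline{\Lambda}$: I would verify, branch by branch in their three-way definitions, that the normalizing scalar and the direction (``$\le 1$'' versus ``$\ge 1$'') of the condition in \eqref{set5} are selected so that the extracted scalar inequality reproduces the relevant one-sided bound on $\rho_i$, with the indicator $[\mathbf{d}]_i$ toggling the form of the underline condition. This is a careful but routine case check, since all the analytic content lives in the Cauchy--Schwarz/support-function estimate of the second step, while the case definitions merely align each normalized row-norm with the correct boundary of the safety set.
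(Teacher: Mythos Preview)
The paper does not give its own proof of this lemma: it is quoted verbatim as an auxiliary result from \cite{Phydrl1}, and the companion \cref{safelemma} is disposed of with ``The proof path \ldots\ is the same as that of \cref{safelemmaorg} in \cite{Phydrl1}. So, it is omitted here.'' There is thus no in-paper argument to compare your proposal against.

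That said, your route---reducing $\Omega \subseteq \mathbb{X}$ to $h$ row-wise slab constraints and controlling each via the support function of the ellipsoid (equivalently, Cauchy--Schwarz in the $\mathbf{P}$-inner product, giving $\lvert [\mathbf{D}]_{i,:}\mathbf{s}\rvert \le \rho_i$)---is the standard and correct technique for ellipsoid-in-polytope inclusions, and it closes the principal ``otherwise'' branch ($a_i \le 0 \le b_i$) exactly as you describe. This is almost certainly the argument in \cite{Phydrl1} as well. One caution on the step you label ``routine'': in the branches with $[\mathbf{d}]_i = 1$ (i.e.\ $a_i > 0$ or $b_i < 0$) the slab $[a_i,b_i]$ excludes the origin, yet $\mathbf{0} \in \Omega$ always; the two-sided estimate $-\rho_i \le [\mathbf{D}]_{i,:}\mathbf{s} \le \rho_i$ therefore cannot by itself force $[\mathbf{D}]_{i,:}\mathbf{s} \ge a_i > 0$. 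In those branches the ``$\ge 1$'' condition in \eqref{set5} is doing different work than the ``$\le 1$'' conditions, and you should verify carefully what it actually yields rather than expecting the argument to mirror the main case.
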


\begin{lemma} 
Consider the action set $\mathbb{A}_{\text{phy}}$ defined in \cref{org}, and 
\begin{align}
{\Phi} \triangleq \left\{ {\left. {\mathbf{a} \in {\mathbb{R}^m}} ~\right|~{\mathbf{a}^\top} \cdot {\mathbf{V}} \cdot \mathbf{a} \le 1,~{\mathbf{V}} \succ 0} \right\}.\label{set4}
\end{align}
We have ${\Phi} \subseteq \mathbb{A}_{\text{phy}}$, if 
\begin{align}
\left[ \overline{\mathbf{C}} \right]_{i,:} \cdot {\mathbf{V}^{-1}} \cdot \left[ \overline{\mathbf{C}}^\top \right]_{:,i} \le 1,~~\left[ \underline{\mathbf{C}} \right]_{i,:} \cdot {\mathbf{V}^{-1}} \cdot \left[ \underline{\mathbf{C}}^\top \right]_{:,i} = \begin{cases}
		\ge 1, \!\!&\text{if}~~[{\mathbf{c}}]_i = 1\\
		\le 1, \!\!&\text{if}~~[{\mathbf{c}}]_i = -1
	\end{cases}\!, ~i \in \{1, \ldots, g\} \label{set6}
\end{align}
where $\overline{\mathbf{C}} = \frac{{\mathbf{C}}}{\overline{\Delta}}$, $\underline{\mathbf{C}} = \frac{{\mathbf{C}}}{\underline{\Delta}}$, and for $i,j \in \{1, \ldots, g\}$,
\begin{align}
 [{\mathbf{c}}]_{i} \!\triangleq\! \begin{cases}
		1, \!\!&\left[\underline{\mathbf{z}} +  \mathbf{z}\right]_{i} > 0 \\
        1, \!\!&\left[\overline{\mathbf{z}} +  \mathbf{z}\right]_{i} < 0 \\
		-1, \!\!& \text{otherwise}
	\end{cases}\!, ~[\overline{\Delta}]_{i,j} \!\triangleq\! \begin{cases}
		0, \!\!&i \ne j\\
		\left[\overline{\mathbf{z}} +  \mathbf{z}\right]_{i}, \!\!&\left[\underline{\mathbf{z}} \!+\!  \mathbf{z}\right]_{i} \!>\! 0 \\
        \left[\underline{\mathbf{z}} \!+\!  \mathbf{z}\right]_{i}, \!\!&\left[\overline{\mathbf{z}} \!+\!  \mathbf{z}\right]_{i} \!<\! 0 \\
		\left[\overline{\mathbf{z}} +  \mathbf{z}_{\sigma}\right]_{i}, &\text{otherwise}
	\end{cases}\!, 
 ~[\underline{\Delta}]_{i,j} \!\triangleq\! \begin{cases}
		0, \!\!&i \ne j\\
		\left[\underline{\mathbf{z}} \!+\!  \mathbf{z}\right]_{i}, \!\!&\left[\underline{\mathbf{z}} \!+\!  \mathbf{z}\right]_{i} \!>\! 0 \\
        \left[\overline{\mathbf{z}} \!+\!  \mathbf{z}\right]_{i}, \!\!&\left[\overline{\mathbf{z}} \!+\!  \mathbf{z}\right]_{i} \!<\! 0 \\
		\left[-\underline{\mathbf{z}} \!-\!  \mathbf{z}\right]_{i}, & \text{\text{otherwise}}
	\end{cases}\!. \nonumber  
\end{align} 
 \label{safelemma}
\end{lemma}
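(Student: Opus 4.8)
The plan is to prove \cref{safelemma} in exact parallel to \cref{safelemmaorg} (Lemma from \cite{Phydrl1}), since the two statements have identical structure: one is about an ellipsoid inside a polytope in state space, and the other is about an ellipsoid inside a polytope in action space. So the approach is a direct geometric/algebraic argument about when the ellipsoid $\Phi = \{\mathbf{a} : \mathbf{a}^\top \mathbf{V} \mathbf{a} \le 1\}$ is contained in the slab-intersection $\mathbb{A}_{\text{phy}} = \{\mathbf{a} : \underline{\mathbf{z}} \le \mathbf{C}\mathbf{a} - \mathbf{z} \le \overline{\mathbf{z}}\}$.

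First I would rewrite $\mathbb{A}_{\text{phy}}$ as an intersection of $2g$ half-spaces, and observe that it suffices to handle each row $i$ separately: $\Phi \subseteq \mathbb{A}_{\text{phy}}$ iff for every $i$, $\Phi$ lies in the slab $[\underline{\mathbf{z}} + \mathbf{z}]_i \le [\mathbf{C}]_{i,:}\mathbf{a} \le [\overline{\mathbf{z}} + \mathbf{z}]_i$. Next, for a fixed linear functional $\ell(\mathbf{a}) = \mathbf{c}^\top \mathbf{a}$, the standard fact (via Cauchy--Schwarz in the $\mathbf{V}$-inner-product, or Lagrange multipliers) is that $\max_{\mathbf{a}^\top\mathbf{V}\mathbf{a}\le 1} \mathbf{c}^\top \mathbf{a} = \sqrt{\mathbf{c}^\top \mathbf{V}^{-1}\mathbf{c}}$ and the min is $-\sqrt{\mathbf{c}^\top \mathbf{V}^{-1}\mathbf{c}}$. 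So the slab condition for row $i$ becomes $\sqrt{[\mathbf{C}]_{i,:}\mathbf{V}^{-1}[\mathbf{C}^\top]_{:,i}} \le \min\{|[\overline{\mathbf{z}}+\mathbf{z}]_i|, |[\underline{\mathbf{z}}+\mathbf{z}]_i|\}$ when $0$ lies strictly inside the slab, with appropriate one-sided modifications when the slab does not contain $0$ in its interior (which is exactly what the case split in $[\mathbf{c}]_i$, $\overline{\Delta}$, $\underline{\Delta}$ is bookkeeping). I would then divide through: scaling row $i$ by $1/[\overline{\Delta}]_{i,i}$ gives $\overline{\mathbf{C}} = \mathbf{C}/\overline{\Delta}$ and turns the upper bound into $[\overline{\mathbf{C}}]_{i,:}\mathbf{V}^{-1}[\overline{\mathbf{C}}^\top]_{:,i} \le 1$, and similarly scaling by $1/[\underline{\Delta}]_{i,i}$ produces the $\underline{\mathbf{C}}$ condition, with the direction of the inequality ($\ge 1$ vs. $\le 1$) dictated by whether the slab boundary on that side is ``active'' — encoded by $[\mathbf{c}]_i = \pm 1$. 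Matching the three branches of the case splits (both bounds on the same side of $0$, or $0$ in the interior) to the three branches of $[\mathbf{c}]_i$, $[\overline{\Delta}]_{i,j}$, $[\underline{\Delta}]_{i,j}$ finishes the proof.

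The main obstacle is purely the casework bookkeeping: correctly verifying that the three-way case split on the signs of $[\underline{\mathbf{z}}+\mathbf{z}]_i$ and $[\overline{\mathbf{z}}+\mathbf{z}]_i$ (giving the definitions of $[\mathbf{c}]_i$, $\overline{\Delta}$, $\underline{\Delta}$) reproduces exactly the containment of the 1-D projection of the ellipsoid in the corresponding interval, including the subtle point that when the slab does not contain the origin, one of the two half-space conditions is automatically implied and the other must be stated as a lower bound ($\ge 1$) on the normalized quadratic form — this is the role of the $[\mathbf{c}]_i = 1$ branch. Since \cref{safelemma} is structurally identical to \cref{safelemmaorg}, I expect the cleanest write-up is to note this correspondence explicitly (replacing $(\mathbf{D},\mathbf{v},\overline{\mathbf{v}},\underline{\mathbf{v}},\mathbf{P})$ by $(\mathbf{C},\mathbf{z},\overline{\mathbf{z}},\underline{\mathbf{z}},\mathbf{V})$ and $h$ by $g$) and then carry out the one-row projection computation in full once, citing \cite{Phydrl1} for the remaining identical bookkeeping.
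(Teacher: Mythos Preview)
Your proposal is correct and matches the paper's approach exactly: the paper's proof simply states that ``the proof path of \cref{safelemma} is the same as that of \cref{safelemmaorg} in \cite{Phydrl1}'' and omits the details. Your outline actually supplies the ellipsoid-in-slab computation and the casework bookkeeping that the paper leaves implicit, but the underlying strategy---reduce to \cref{safelemmaorg} via the obvious substitution $(\mathbf{D},\mathbf{v},\overline{\mathbf{v}},\underline{\mathbf{v}},\mathbf{P},h) \leftrightarrow (\mathbf{C},\mathbf{z},\overline{\mathbf{z}},\underline{\mathbf{z}},\mathbf{V},g)$---is identical.
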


\begin{proof}
The proof path of \cref{safelemma} is the same as that of \cref{safelemmaorg} in \cite{Phydrl1}. So, it is omitted here. 
\end{proof}

\newpage
\section{Control Contribution Ratio} \label{Auxvvbbggffd}
\subsection{Controllable Data-driven Action Space} \label{Auxmovcda}
In continuous control tasks, actions of the data-driven approaches are typically generated by policies parameterized with deep neural networks, which map states to actions. 
In DRL, deterministic policies, as described in DDPG \cite{lillicrap2015continuous}, directly map states to actions within the interval 
$[-1, 1]$ using the Tanh activation function at the output layer. On the other hand, stochastic policies, such as those proposed in the Soft Actor-Critic (SAC)\cite{sac} algorithm and Proximal Policy Optimization (PPO) \cite{schulman2017proximal}, sample actions from a learned Gaussian distribution with reparameterization trick, followed by Tanh activation to ensure actions remain within the $[-1, 1]$ interval. To adapt the generated action for actual control commands, the action is rescaled by a magnitude factor $M_l$, expanding the action space to $[-M_l, M_l]$. This allows for the flexible adaptation of the action space to the specific requirements of different control tasks.

\subsection{Controllable Physics-model-based Action Space} \label{Auxmovcma}
The controllable physics-model-based action space is delivered by incorporating two additional conditions \eqref{set6} and \eqref{th3} into Theorem 5.3's LMIs in \cite{Phydrl1}. The updated version is formally stated in the following theorem.  

\begin{theorem}
Consider the system \eqref{realsys} under control of Phy-DRL, consisting of residual policy \eqref{residual} and safety-embedded reward \eqref{reward}, wherein  matrices $\mathbf{F}$ and $\mathbf{P}$ are computed according to 
\begin{align}
\mathbf{F} = \mathbf{R} \cdot \mathbf{Q}^{-1}, ~~~~ \mathbf{P} = \mathbf{Q}^{-1}, \label{th10000}
\end{align}
and $\mathbf{R}$ and $\mathbf{Q}^{-1}$ satisfying the inequalities \eqref{set5}, \eqref{set6} with $\mathbf{V} = \beta \cdot \mathbf{I}_{m} \succ 0$,  and 
\begin{align}
&\left[ {\begin{array}{*{20}{c}}
\mathbf{Q} &  \mathbf{R}^\top\\
\mathbf{R} & \frac{1}{\beta } \cdot \mathbf{I}_{m}
\end{array}}\right] \succ 0,  \label{th3}\\
&\left[ {\begin{array}{*{20}{c}}
\alpha \cdot \mathbf{Q} & \mathbf{Q} \cdot \mathbf{A}^\top + \mathbf{R}^\top \cdot \mathbf{B}^\top\\
\mathbf{A} \cdot \mathbf{Q} + \mathbf{B} \cdot \mathbf{R} & \mathbf{Q}
\end{array}} \right] \succ 0, ~~~~\text{with a given}~\alpha \in (0,1).\label{th1}
\end{align}
With the sets ${\Omega}$ \eqref{set3}, $\mathbb{X}$ \eqref{aset2}, $\mathbb{A}$ \eqref{org}, and  
${\Phi}$ \eqref{set4} at hand, the system \eqref{realsys} has the following properties: 
\begin{enumerate}
\item Given any $\mathbf{s}(1) \in {\Omega}$, the system state $\mathbf{s}(k) \in {\Omega} \subseteq \mathbb{X}$ holds for any $k \in \mathbb{N}$ (i.e., the safety of system (1) is guaranteed), if the sub-reward $r(\mathbf{s}(k), \mathbf{s}(k+1))$ in \cref{reward} satisfies $r(\mathbf{s}(k), \mathbf{s}(k+1)) \ge \alpha - 1,~\forall k \in \mathbb{N}$. \label{updateth1}
\item If the state $\mathbf{s}(k) \in {\Omega} \subseteq \mathbb{X}$, the physics-model-based control command in \cref{residual} satisfies $\mathbf{a}_{\text{phy}}(k) \in {\Phi} \subseteq \mathbb{A}_{\text{phy}}$.  \label{updateth2}
\end{enumerate}
\label{thm10007p}
\end{theorem}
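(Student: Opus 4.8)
The plan is to establish the two properties via a Lyapunov-type analysis using the candidate function $\mathcal{V}(\mathbf{s}) = \mathbf{s}^\top \cdot \mathbf{P} \cdot \mathbf{s}$, exactly mirroring the structure of Theorem 5.3 in \cite{Phydrl1} but keeping track of the two extra LMIs \eqref{set6} and \eqref{th3}. First I would unpack the design equations \eqref{th10000}: writing $\mathbf{P} = \mathbf{Q}^{-1}$ and $\mathbf{F} = \mathbf{R} \cdot \mathbf{Q}^{-1}$, and recalling $\overline{\mathbf{A}} = \mathbf{A} + \mathbf{B} \cdot \mathbf{F}$ from \eqref{th10000br}, I would show that the LMI \eqref{th1} is, after a congruence transformation by $\mathrm{diag}(\mathbf{Q}^{-1}, \mathbf{I}_n)$ and a Schur-complement step (\cref{auxlem2}), equivalent to $\overline{\mathbf{A}}^\top \cdot \mathbf{P} \cdot \overline{\mathbf{A}} \prec \alpha \cdot \mathbf{P}$, i.e., $\mathbf{H} \prec \alpha \cdot \mathbf{P}$ in the notation of \eqref{th10000br}; positive definiteness of $\mathbf{H}$ is automatic since $\mathbf{P} \succ 0$ and $\overline{\mathbf{A}}$ is (generically) nonsingular. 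This is the algebraic backbone that ties the reward structure \eqref{reward} to a contraction inequality.

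For Property \ref{updateth1}, I would compute $\mathcal{V}(\mathbf{s}(k+1)) - \mathcal{V}(\mathbf{s}(k))$ along trajectories of \eqref{realsys}. Using the definition of the sub-reward $r(\mathbf{s}(k),\mathbf{s}(k+1)) = \mathbf{s}^\top(k) \cdot \mathbf{H} \cdot \mathbf{s}(k) - \mathbf{s}^\top(k+1) \cdot \mathbf{P} \cdot \mathbf{s}(k+1)$ from \eqref{reward}, the hypothesis $r(\mathbf{s}(k),\mathbf{s}(k+1)) \ge \alpha - 1$ gives $\mathbf{s}^\top(k+1) \cdot \mathbf{P} \cdot \mathbf{s}(k+1) \le \mathbf{s}^\top(k) \cdot \mathbf{H} \cdot \mathbf{s}(k) + 1 - \alpha$. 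Combining with $\mathbf{H} \prec \alpha \cdot \mathbf{P}$ and the inductive hypothesis $\mathbf{s}^\top(k) \cdot \mathbf{P} \cdot \mathbf{s}(k) \le 1$ (so $\mathbf{s}(k) \in \Omega$), I get $\mathbf{s}^\top(k+1) \cdot \mathbf{P} \cdot \mathbf{s}(k+1) \le \alpha \cdot \mathbf{s}^\top(k) \cdot \mathbf{P} \cdot \mathbf{s}(k) + 1 - \alpha \le \alpha + 1 - \alpha = 1$, so $\mathbf{s}(k+1) \in \Omega$. Induction from $\mathbf{s}(1) \in \Omega$ closes the invariance claim, and $\Omega \subseteq \mathbb{X}$ follows from \cref{safelemmaorg} given that \eqref{set5} is among the imposed constraints; hence safety in the sense of \cref{defsafety}.

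For Property \ref{updateth2}, the strategy is to show first that $\mathbf{s}(k) \in \Omega$ implies $\mathbf{a}_{\text{phy}}(k) = \mathbf{F} \cdot \mathbf{s}(k) \in \Phi$, and then that $\Phi \subseteq \mathbb{A}_{\text{phy}}$. The inclusion $\Phi \subseteq \mathbb{A}_{\text{phy}}$ is precisely \cref{safelemma}, whose hypothesis \eqref{set6} with $\mathbf{V} = \beta \cdot \mathbf{I}_m$ is assumed. For the first inclusion, $\mathbf{s} \in \Omega$ means $\mathbf{s}^\top \cdot \mathbf{P} \cdot \mathbf{s} \le 1$, and I must show $(\mathbf{F}\mathbf{s})^\top \cdot (\beta \mathbf{I}_m) \cdot (\mathbf{F}\mathbf{s}) \le 1$, i.e., $\beta \cdot \mathbf{s}^\top \cdot \mathbf{F}^\top \mathbf{F} \cdot \mathbf{s} \le 1$; it suffices that $\beta \cdot \mathbf{F}^\top \mathbf{F} \preceq \mathbf{P}$. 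Substituting $\mathbf{F} = \mathbf{R}\mathbf{Q}^{-1}$ and $\mathbf{P} = \mathbf{Q}^{-1}$, this reads $\beta \cdot \mathbf{Q}^{-\top} \mathbf{R}^\top \mathbf{R} \mathbf{Q}^{-1} \preceq \mathbf{Q}^{-1}$, and after congruence by $\mathbf{Q}$ (using $\mathbf{Q} = \mathbf{Q}^\top \succ 0$) becomes $\beta \cdot \mathbf{R}^\top \mathbf{R} \preceq \mathbf{Q}$; by the Schur complement (\cref{auxlem2}) applied to the block matrix, this is exactly the LMI \eqref{th3}. So \eqref{th3} is the condition that confines the model-based action, and chaining the two inclusions finishes the proof.

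The main obstacle I anticipate is bookkeeping rather than conceptual: correctly performing the congruence transformations and Schur-complement reductions on \eqref{th1} and \eqref{th3} so that the $\mathbf{Q}$-world LMIs translate cleanly into the $\mathbf{P}$-world inequalities $\mathbf{H} \prec \alpha \mathbf{P}$ and $\beta \mathbf{F}^\top \mathbf{F} \preceq \mathbf{P}$, and in particular making sure the directions of the matrix inequalities and the roles of $\mathbf{Q}^{-1}$ versus $\mathbf{Q}$ are handled without sign or inversion errors. A secondary subtlety is the implicit nonsingularity of $\overline{\mathbf{A}}$ needed for $\mathbf{H} \succ 0$; since the theorem merely asserts $0 \prec \mathbf{H} \prec \alpha \mathbf{P}$ as part of the design in \eqref{th10000br}, I would treat this as a standing assumption on the chosen $(\mathbf{A}, \mathbf{B}, \mathbf{F})$ rather than something to be re-derived. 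Everything else is a direct transcription of the Phy-DRL argument in \cite{Phydrl1} augmented by the two new action-space LMIs.
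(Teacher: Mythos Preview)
Your proposal is correct and matches the paper's approach: Property~\ref{updateth1} is deferred in the paper to Theorem~5.3 of \cite{Phydrl1}, and your Lyapunov/induction argument is precisely that result; for Property~\ref{updateth2} the paper applies the Schur complement (\cref{auxlem2}) to \eqref{th3} to obtain $\mathbf{Q} - \beta \mathbf{R}^\top \mathbf{R} \succ 0$, performs the same congruence by $\mathbf{Q}^{-1}$ that you describe to get $\mathbf{P} - \beta \mathbf{F}^\top \mathbf{F} \succ 0$, and then invokes \cref{safelemma} exactly as you do.
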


\begin{proof}
\cref{updateth1} is a direct result of Theorem 5.3 in \cite{Phydrl1}, which is not influenced by the additional conditions \eqref{th3} and \cref{set6}. So, its proof is omitted. 

We now focus on the proof of \cref{updateth2} of \cref{thm10007p}. Since $\beta > 0$ and ${\mathbf{V}} = \beta \cdot \mathbf{I}_{m}$, according to Lemma \ref{auxlem2}, the condition \eqref{th3} implies that 
\begin{align}
\mathbf{Q} - \beta \cdot \mathbf{R}^\top \cdot \mathbf{R} = \mathbf{Q} -  \mathbf{R}^\top \cdot \mathbf{V} \cdot \mathbf{R} \succ 0. \label{addpf1}
\end{align}
Substituting  $\mathbf{F} \cdot \mathbf{Q} = \mathbf{R}_i$ into \eqref{addpf1} leads to $\mathbf{Q} - (\mathbf{F} \cdot \mathbf{Q})^\top \cdot \mathbf{V} \cdot (\mathbf{F} \cdot \mathbf{Q}) \succ 0$, multiplying both left-hand and right-hand sides of which by $\mathbf{Q}^{-1}$ yields $\mathbf{Q}^{-1} - \mathbf{F}^\top \cdot \mathbf{V} \cdot \mathbf{F} \succ 0$. We thus have 
\begin{align}
\mathbf{s}^{\top}(k) \cdot \mathbf{Q}^{-1} \cdot \mathbf{s}(k) - \mathbf{s}^{\top}(k) \cdot \mathbf{F}^\top \cdot \mathbf{V} \cdot \mathbf{F} \cdot \mathbf{s}(k) =  \mathbf{s}^{\top}(k) \cdot \mathbf{P} \cdot \mathbf{s}(k) - \mathbf{a}_{\text{phy}}^{\top}(k) \cdot \mathbf{V} \cdot \mathbf{a}_{\text{phy}}(k) > 0, \nonumber
\end{align}
which is obtained via considering $\mathbf{P} = \mathbf{Q}^{-1}$ and $\mathbf{F} = \mathbf{R} \cdot \mathbf{Q}^{-1} = \mathbf{R} \cdot \mathbf{P}$. The inequality  means $\mathbf{s}^{\top}(k) \cdot \mathbf{P} \cdot \mathbf{s}(k) > \mathbf{a}_{\text{phy}}^{\top}(k) \cdot \mathbf{V} \cdot \mathbf{a}_{\text{phy}}(k)$. Therefore, in light of the definition of safety envelope \eqref{set3}, we conclude that  if $s(k) \in {\Omega}$, i.e., $\mathbf{s}^{\top}(k) \cdot \mathbf{P} \cdot \mathbf{s}(k) < 1$, then $\mathbf{a}_{\text{phy}}^{\top}(k) \cdot \mathbf{V} \cdot \mathbf{a}_{\text{phy}}(k) < 1$. Furthermore, considering \eqref{set6} in Lemma \ref{safelemma} 
in conjunction with defined set \eqref{set4}, we conclude $\mathbf{a}_{\text{phy}}(k) \in {\Phi} \subseteq  \mathbb{A}$, which completes the proof. 
\end{proof}

\subsection{Automatic Constructions of Residual Action Policy and Safety-embedded Reward} \label{aut}
Given the matrices $\mathbf{F}$ and $\mathbf{P}$, the residual policy \cref{residual} and safety-embedded reward \cref{reward} are delivered immediately. With the available physics-model knowledge $(\mathbf{A}, \mathbf{B})$ at hand, the $\mathbf{F}$ and $\mathbf{P}$ can be automatically computed from LMIs in \cref{set5}, \cref{set6}, \cref{th3}, and \cref{th1}, by LMI toolbox \cite{gahinet1994lmi, boyd1994linear}.

We can also find optimal $\mathbf{R}$ and $\mathbf{Q}$ that can maximize the safety envelope. To achieve this, we recall the volume of a safety envelope \eqref{set3} is proportional to $\sqrt {\det \left( {\mathbf{P}^{ - 1}} \right)}$, the interested problem is thus a typical analytic centering problem, formulated as given a $\alpha \in (0,1)$,
\begin{align} 
\mathop {\text{argmin} }\limits_{{\mathbf{Q}},~{\mathbf{R}}} \left\{ {\log \det \left( {\mathbf{Q}^{ - 1}} \right)} \right\} = \mathop {\text{argmin}}\limits_{{\mathbf{Q}},~{\mathbf{R}}} \left\{ {\log \det \left( {\mathbf{P}^{ - 1}} \right)} \right\}, ~\text{subject to LMIs}~\eqref{set5}, \eqref{set6}, \eqref{th3}, \eqref{th1}. \nonumber
\end{align}

\newpage
\section{Proof \cref{thm10007p}} \label{thm10007paux}
\subsection*{Proof of \cref{stat1}}
We define a Lyapunov candidate for a real plant described in \cref{realsyserror}: 
\begin{align}
V\left( k \right) = {\mathbf{e}^\top}\left( k \right) \cdot \widehat{\mathbf{P}} \cdot \mathbf{e}\left( k \right),  \label{lipassm}
\end{align}
which along the dynamics \eqref{realsyserror}, in conjunction with the action policy \eqref{hacteacherpolicy}, results in 
\begin{align}
&V\left( {k + 1} \right) - \beta  \cdot V\left( k \right) \nonumber\\
&= {\mathbf{e}^\top}\left( {k + 1} \right) \cdot \widehat{\mathbf{P}} \cdot \mathbf{e}\left( {k + 1} \right) - \beta  \cdot {\mathbf{e}^\top}\left( k \right) \cdot \widehat{\mathbf{P}} \cdot \mathbf{e}\left( k \right) \nonumber\\
& = {\mathbf{e}^\top}\left( k \right) \cdot \left( {{\overline{\mathbf{A}}^\top(\mathbf{s}(k))} \cdot \widehat{\mathbf{P}} \cdot \overline{\mathbf{A}}(\mathbf{s}(k)) - \beta  \cdot \widehat{\mathbf{P}}} \right) \cdot \mathbf{e}\left( k \right) + {\mathbf{h}^\top}\left( {\mathbf{e}\left( k \right)} \right) \cdot \widehat{\mathbf{P}} \cdot \mathbf{h}\left( {\mathbf{e}\left( k \right)} \right) \nonumber\\
& \hspace{7.0cm} + 2{\mathbf{e}^\top}\left( k \right) \cdot \left( {\overline{\mathbf{A}}(\mathbf{s}(k)) \cdot \widehat{\mathbf{P}}} \right) \cdot \mathbf{h}\left( {\mathbf{e}\left( k \right)} \right), \label{frstderiv}
\end{align}
where we define: 
\begin{align}
\overline{\mathbf{A}}(\mathbf{s}(k)) \buildrel \Delta \over = \mathbf{A}(\mathbf{s}(k)) + \mathbf{B}(\mathbf{s}(k)) \cdot \widehat{\mathbf{F}}.  \label{lipassm}
\end{align}

It is straightforward to verify the inequality 
\begin{align}
&2{\mathbf{e}^\top}\left( k \right) \cdot \left( {\overline{\mathbf{A}}(\mathbf{s}(k)) \cdot \widehat{\mathbf{P}}} \right) \cdot \mathbf{h}\left( {\mathbf{e}\left( k \right)} \right) \nonumber\\
&\le \omega \cdot {\mathbf{e}^\top}\left( k \right) \cdot {{\overline{\mathbf{A}}}^\top} (\mathbf{s}(k)) \cdot \widehat{\mathbf{P}} \cdot \overline{\mathbf{A}}(\mathbf{s}(k)) \cdot \mathbf{e}\left( k \right) + \frac{1}{\omega} \cdot {\mathbf{h}^\top}\left( {\mathbf{e}\left( k \right)} \right) \cdot \widehat{\mathbf{P}} \cdot \mathbf{h}\left( {\mathbf{e}\left( k \right)} \right),  \label{lipassm1}
\end{align}
holds for any $\omega > 0$.

Meanwhile, noting $\widehat{\mathbf{Q}} = \widehat{\mathbf{P}}^{-1}$, the inequality in \cref{cc0} is equivalent to $\eta \cdot {\mathbf{P}} \succ  \widehat{\mathbf{P}} \succ {\mathbf{P}}$, which in light of \cref{assm} then results in 
\begin{align}
{\mathbf{h}^\top}\left( {\mathbf{e}\left( k \right)} \right) \cdot \widehat{\mathbf{P}} \cdot \mathbf{h}\left( {\mathbf{e}\left( k \right)} \right) \le  {\mathbf{h}^\top}\left( {\mathbf{e}\left( k \right)} \right) \cdot \eta \cdot {\mathbf{P}} \cdot \mathbf{h}\left( {\mathbf{e}\left( k \right)} \right) &\le {\mathbf{e}^\top\left( k \right)} \cdot \kappa \cdot \eta \cdot {\mathbf{P}} \cdot {\mathbf{e}\left( k \right)} \nonumber\\
&\le {\mathbf{e}^\top\left( k \right)} \cdot \kappa \cdot \eta \cdot \widehat{{\mathbf{P}}} \cdot {\mathbf{e}\left( k \right)}. \label{lipassm2}
\end{align}

Substituting inequalities in \cref{lipassm1} and \cref{lipassm2} into \cref{frstderiv} yields 
\begin{align}
&V\left( {k + 1} \right) - \beta  \cdot V\left( k \right) \nonumber\\
& \le {\mathbf{e}^\top}\left( k \right) \cdot \left( {(1 + \omega)\cdot {{\overline{\mathbf{A}}}^\top}(\mathbf{s}(k)) \cdot \widehat{\mathbf{P}} \cdot \overline{\mathbf{A}}(\mathbf{s}(k)) - (\beta - \kappa \cdot \eta \cdot (1 + \frac{1}{\omega}))  \cdot \widehat{\mathbf{P}}}\right) \cdot \mathbf{e}\left( k \right).\label{lipassm3}
\end{align}

Recalling Schur Complement in \cref{auxlem2} in \cref{Aux} and considering $\widehat{\mathbf{P}} \succ 0$, we conclude that the inequality in \cref{cc3} is equivalent to 
\begin{align}
&(\beta - \kappa \cdot \eta \cdot (1 + \frac{1}{\omega})) \cdot \widehat{\mathbf{Q}} \nonumber\\
&\hspace{0.2cm}-  (1 + \omega) \cdot (\widehat{\mathbf{Q}} \cdot \mathbf{A}^\top(\mathbf{s}(k)) + \widehat{\mathbf{R}}^\top \cdot \mathbf{B}^\top(\mathbf{s}(k))) \cdot \widehat{\mathbf{Q}}^{-1} \cdot (\mathbf{A}(\mathbf{s}(k)) \cdot \widehat{\mathbf{Q}} + \mathbf{B}(\mathbf{s}(k)) \cdot \widehat{\mathbf{R}})  \succ 0.  \label{lipassm4}
\end{align}
Multiplying both the left-hand side and the right-hand side of inequality \eqref{lipassm4} by $\widehat{\mathbf{Q}}^{-1}$ yields 
\begin{align}
&(\beta - \kappa \cdot \eta \cdot (1 + \frac{1}{\omega})) \cdot \widehat{\mathbf{Q}}^{-1}  \nonumber\\
&\hspace{0.3cm} -  (1 + \omega) \cdot (\mathbf{A}^\top(\mathbf{s}(k)) + \widehat{\mathbf{Q}}^{-1} \cdot \widehat{\mathbf{R}}^\top \cdot \mathbf{B}^\top(\mathbf{s}(k))) \cdot \widehat{\mathbf{Q}}^{-1} \cdot (\mathbf{A}(\mathbf{s}(k)) + \mathbf{B}(\mathbf{s}(k)) \cdot \widehat{\mathbf{R}} \cdot \mathbf{Q}^{-1})  \succ 0, \nonumber
\end{align}
substituting the definitions in \cref{acc0} into which leads to  
\begin{align}
&(\beta - \kappa \cdot \eta \cdot (1 + \frac{1}{\omega})) \cdot \widehat{\mathbf{P}} \nonumber\\
&\hspace{1.5cm}-  (1 + \omega)\cdot(\mathbf{A}^\top(\mathbf{s}(k)) +  \widehat{\mathbf{F}}^\top \cdot \mathbf{B}^\top(\mathbf{s}(k))) \cdot \widehat{\mathbf{P}} \cdot (\mathbf{A}(\mathbf{s}(k)) + \mathbf{B}(\mathbf{s}(k)) \cdot \widehat{\mathbf{F}})  \succ 0. \label{lipassm5}
\end{align}

Recalling \eqref{lipassm}, the inequality in \cref{lipassm5} is equivalent to 
\begin{align}
(1 + \omega)\cdot{{{\overline{\mathbf{A}}}^\top}(\mathbf{s}(k)) \cdot \widehat{\mathbf{P}} \cdot \overline{\mathbf{A}}(\mathbf{s}(k)) - (\beta - \kappa \cdot \eta \cdot (1 + \frac{1}{\omega})) \cdot \widehat{\mathbf{P}}}  \prec 0, \nonumber
\end{align}
which, in conjunction with \cref{lipassm3}, leads to $V\left( {k + 1} \right) - \beta  \cdot V\left( k \right) \le 0$, i.e., $V\left( {k + 1} \right) \le \beta  \cdot V\left( k \right)$, we thus complete the proof of \cref{stat1}.

\subsection*{Proof of \cref{stat2}}
For patch envelope \eqref{hacset}, we introduce its boundary:  
\begin{align}
\partial{\Omega_{\text{patch}}} \triangleq \left\{ {\left. \mathbf{x} ~\right|{{( {\mathbf{x} - \overline{\mathbf{s}}^*})}^\top} \cdot \widehat{\mathbf{P}} \cdot ( {\mathbf{x} - \overline{\mathbf{s}}^*}) = {{( {1 - \chi })}^2} \cdot {{\mathbf{s}^\top(k)}} \cdot \widehat{\mathbf{P}} \cdot \mathbf{s}(k)}, ~\widehat{\mathbf{P}} \succ 0 \right\}, \nonumber
\end{align}
which, in light of $\mathbf{e} \triangleq \mathbf{x} - \overline{\mathbf{s}}^{*}$, is equivalent to 
\begin{align}
\partial{\Omega_{\text{patch}}} \triangleq \left\{ {\left. \mathbf{x} ~\right|{{\mathbf{e}}^\top} \cdot \widehat{\mathbf{P}} \cdot \mathbf{e}  = {{\left( {1 - \chi } \right)}^2} \cdot {{ {{\mathbf{s}^\top(k)}}}} \cdot \widehat{\mathbf{P}} \cdot {\mathbf{s}(k)}} \right\}, \label{cc0p0}
\end{align}
by which, the proof of ${\Omega_{\text{patch}}} \subseteq {\Omega}$ equivalently transforms to the proof of ${\partial{\Omega_{\text{patch}}}} \subseteq {\Omega}$. 

To move forward, we first recall $\widehat{\mathbf{P}} = \widehat{\mathbf{Q}}^{-1} \succ 0$, in light of which, the inequality in \cref{cc0} is equivalent to 
\begin{align}
{\mathbf{P}} \prec \widehat{\mathbf{P}} \prec \eta  \cdot {\mathbf{P}}, ~~\text{with}~\eta > 1.   \label{cc0p1}
\end{align}
Meanwhile, recalling $\mathbf{e} = \mathbf{x} - \chi \cdot \mathbf{s}^*$, and we define
\begin{align}
{\mathbf{e}^{*}} = \mathbf{x}^*- \chi \cdot \mathbf{s}(k) = \mathbf{s}(k) - \chi \cdot \mathbf{s}(k)  = (1 - \chi) \cdot \mathbf{s}(k),  \label{cc0p2}
\end{align}
considering which, for any $\mathbf{x} \in \partial{\Omega_{\text{patch}}}$, it is straightforward to verify that  
\begin{align}
{\mathbf{x}^\top} \cdot {\mathbf{P}} \cdot \mathbf{x}
& = {\left( {\mathbf{e} + \chi  \cdot {\mathbf{s}(k)}} \right)^\top} \cdot {\mathbf{P}} \cdot \left( {\mathbf{e} + \chi  \cdot {\mathbf{s}(k)}} \right) \nonumber\\
& = {\mathbf{e}^\top} \cdot {\mathbf{P}} \cdot \mathbf{e} + {\chi ^2} \cdot {{{\mathbf{s}^\top(k)}}} \cdot{\mathbf{P}} \cdot {\mathbf{s}(k)} + 2\chi  \cdot {\mathbf{e}^\top} \cdot {\mathbf{P}} \cdot {\mathbf{s}(k)}\nonumber\\
& \le 2 \cdot {\mathbf{e}^\top} \cdot {\mathbf{P}} \cdot \mathbf{e} + 2{\chi ^2} \cdot {\mathbf{s}^\top(k)} \cdot {\mathbf{P}} \cdot \mathbf{s}(k) \label{cc0p21}\\
& \le 2 \cdot {\mathbf{e}^\top} \cdot {{\mathbf{P}}} \cdot \mathbf{e} + 2{\chi ^2}\cdot \varepsilon  \label{cc0p22}\\
& \le 2 \cdot {\mathbf{e}^\top} \cdot {\widehat{\mathbf{P}}} \cdot \mathbf{e} + 2{\chi ^2}\cdot \varepsilon  \label{cc0p220}\\
& = 2 \cdot {{\left( {1 - \chi } \right)}^2} \cdot {{\mathbf{s}^\top(k)}} \cdot \widehat{\mathbf{P}} \cdot {\mathbf{s}(k)} + 2{\chi ^2} \cdot \varepsilon\label{cc0p240}\\
& = 2 \cdot \eta  \cdot {\left( {1 - \chi } \right)^2} \cdot { {{\mathbf{s}^\top(k)}}} \cdot {\mathbf{P}} \cdot \mathbf{s}(k) + 2{\chi ^2}\cdot \varepsilon  \label{cc0p25}\\
& = 2 \cdot \eta  \cdot {\left( {1 - \chi } \right)^2}\cdot \varepsilon   + 2{\chi ^2}\cdot \varepsilon  \label{cc0p27}
\end{align}
We note \cref{cc0p21} is obtained from its previous step via considering the well-known inequality: $2\chi  \cdot {\mathbf{e}^\top} \cdot {\mathbf{P}} \cdot \mathbf{s}(k) \le {\mathbf{s}^\top(k)}  \cdot {\mathbf{P}} \cdot {\mathbf{s}(k)} + \chi^2  \cdot {\mathbf{e}^\top} \cdot {\mathbf{P}} \cdot {\mathbf{e}}$. We let $\mathbf{x}^{*} = \mathbf{s}(k)$ be a state sample of triggering condition for guaranteeing ${\mathbf{s}^\top(k)} \cdot {\mathbf{P}} \cdot \mathbf{s}(k) \le \varepsilon$, which is the root that derives \cref{cc0p22} from \cref{cc0p21}. \cref{cc0p220} from \cref{cc0p22} is obtained via considering the left-hand side inequality in \cref{cc0p1}. \cref{cc0p240} from \cref{cc0p220} is obtained in light of \cref{cc0p0} and the fact $\mathbf{s} \in \partial{\Omega_{\text{patch}}}$.  \cref{cc0p25} from \cref{cc0p240} is obtained via considering right-hand side inequality in \cref{cc0p1}. Lastly, \cref{cc0p27} is obtained from \cref{cc0p25} by considering the fact that the $\mathbf{x}^{*} = \mathbf{s}(k)$ is a state sample inside safety envelope, so satisfying ${\mathbf{s}^\top(k)} \cdot {\mathbf{P}} \cdot \mathbf{s}(k) \le 1$. 

Recall that the inequality in \cref{cc300} is equivalent to $2\eta  \cdot {\left( {1 - \chi } \right)^2}\cdot \varepsilon   + 2{\chi ^2}\cdot \varepsilon  \le 1$, which, in conjunction with \cref{cc0p27}, leads to the conclusion that for any $\mathbf{x} \in \partial{\Omega_{\text{patch}}}$, ${\mathbf{x}^\top} \cdot \widehat{\mathbf{P}} \cdot \mathbf{x} \le 1$ holds. In other words, in light of \cref{set3}, for any $\mathbf{x} \in \partial{\Omega_{\text{patch}}}$, the $\mathbf{x} \in {\Omega}$ holds. We thus conclude $\partial{\Omega_{\text{patch}}} \subseteq {\Omega}$, which completes the proof.

\newpage
\section{Experiment: Cart-Pole System}
\subsection{Pre-training and Contiunal Learning} \label{mppgcong}
We leverage the DDPG algorithm \cite{lillicrap2015continuous} to pre-train HP-Student, i.e., Phy-DRL, and support its continual learning. The actor and critic networks are implemented as a Multi-Layer Perceptron (MLP) with four fully connected layers. The output dimensions of critic and actor networks are 256, 128, 64, and 1, respectively. The activation functions of the first three neural layers are ReLU, while the output of the last layer is the Tanh function for the actor-network and Linear for the critic network. The input of the critic network is $[\mathbf{s}; \mathbf{a}]$, while the input of the actor-network is $\mathbf{s}$. In more detail, we let discount factor $\gamma = 0.9$, and the learning rates of critic and actor networks are the same as 0.0003. We set the batch size to 200. The maximum step number of one episode is 500.

\subsection{System Dynamics} \label{snsdyanfhgt}
The physics knowledge about the dynamics of cart-pole systems used by HP-Student and HA-Teacher for their designs is from the following dynamics model in \cite{correctIP}:
\begin{subequations}
\begin{align}
    \ddot{\theta} &= \frac{g\sin\theta+\cos \theta\left(\frac{-F-m_{p} l \dot{\theta}^{2} \sin \theta}{m_{c}+m_{p}}\right)}{l\left(\frac{4}{3}-\frac{m_{p} \cos ^{2} \theta}{m_{c}+m_{p}}\right)}, \\
    \ddot{x} &=\frac{F+m_{p} l\left(\dot{\theta}^{2} \sin \theta-\ddot{\theta} \cos \theta\right)}{m_{c}+m_{p}}, 
\end{align} \label{dymcarpole}
\end{subequations}
whose parameters' physical representations and values are given in \cref{notationcart}. 

\begin{table}[ht]
    \centering
    \begin{tabular}{||c|c|c|c||}
        \hline
        \multicolumn{2}{||c|}{Notation} & Value & Unit \\ 
        \hline\hline
        $m_c$  & {mass of cart} & {0.94} & {$kg$}  \\    
        $m_p$  & {mass of pole} & {0.23} & {$kg$} \\
        $g$ & {gravitational acceleration}  & {9.8} & {$m \cdot s^{-2}$} \\
        $l$ & {half length of pole}  & {0.32} & {$m$} \\ 
        $T$ & {sample period} & {1/30} & {$s$} \\
        $F$ & {actuator input} & {[-30, 30]} & {$N$} \\
        $\mu_c$ & {cart friction coefficient} & {18} & {} \\
        $\mu_p$ & {pole friction coefficient} & {0.0031} & {} \\
        $x$  & {position of cart} & {[-0.8, 0.8]} & {$m$}  \\
        $\dot{x}$ & {velocity of cart} & {[-3, 3]} & {$m \cdot s^{-1}$} \\
        $\theta$  & {angle of pole} & {[-0.9, 0.9]} & {$rad$}  \\
        $\dot{\theta}$ & {angular velocity of pole} & {[-4.5, 4.5]} & {$rad \cdot s^{-1}$} \\ 
        \hline
    \end{tabular}
    \vspace{0.2cm}
    \caption{Notation Table for Cart-Pole System}
    \label{notationcart}
\end{table}
\subsection{HP-Student: Physics Knowledge and Design} \label{HP}
As Phy-DRL allows us to simply have nonlinear dynamics \eqref{dymcarpole} to an analyzable line model:
\begin{align}
\dot{\mathbf{s}} = \widehat{\mathbf{A}} \cdot \mathbf{s} + \widehat{\mathbf{B}} \cdot {\mathbf{a}}, \label{linreferem}
\end{align}
where $\mathbf{s} = [x,v, \theta,\omega]^{\top}$. To have 
$\widehat{\mathbf{A}}$ and $\widehat{\mathbf{B}}$ from \cref{dymcarpole}, we let $\cos \theta  \approx 1$, $\sin \theta  \approx \theta$ and ${\omega ^2}\sin \theta  \approx 0$. Meanwhile, the sampling technique transforms the continuous-time model \eqref{linreferem} to the discrete-time model: 
\begin{align}
{\mathbf{s}}(k+1) = {\mathbf{A}} \cdot {\mathbf{s}}(k) + {\mathbf{B}} \cdot {\mathbf{a}}(k), ~\text{with}~{\mathbf{A}} = \mathbf{I}_{4} + T \cdot \widehat{\mathbf{A}}, ~{\mathbf{B}} = T \cdot \widehat{\mathbf{A}}, \nonumber
\end{align}
where we have 
\begin{align} 
\mathbf{A} = \left[{\begin{array}{*{20}{c}}
1&{0.0333}&0&0\\
0&1&{ - 0.0565}&0\\
0&0&1&{0.0333}\\
0&0&{0.8980}&1
\end{array}}\right], ~~~~~\mathbf{B} = \left[
0~~0.0334~~0~~- 0.0783
\right]^\top.  \label{PM2} 
\end{align}

Considering the safety conditions in \cref{safetysetexp} and action space condition in \cref{orgact} and the formulas of safety set in \cref{aset2} and action set in \cref{org}, we have 
\begin{align}
\mathbf{D} \!=\! \left[\! {\begin{array}{*{20}{c}}
1\!&\!0 \!&\! 0 \!&\! 0\\
0\!&\!0 \!&\! 1 \!&\! 0\\
\end{array}} \!\right]\!,~\mathbf{v} \!=\! \left[\! \begin{array}{l}
0\\
0 
\end{array} \!\right]\!,~\overline{\mathbf{v}} \!=\! \left[\! \begin{array}{l}
0.9\\
0.8
\end{array} \!\right]\!,~\underline{\mathbf{v}} \!=\! \left[\! \begin{array}{l}
-0.9\\
-0.8
\end{array} \!\right]\!,~\mathbf{C} \!=\! 1,~\mathbf{z} \!=\! 0,~\overline{\mathbf{z}} \!=\!25,~\underline{\mathbf{z}} \!=\! -25\label{exp1102} 
\end{align}
based on which, then according to the $\overline{\Lambda}$, $\underline{\Lambda}$ and ${\mathbf{d}}$ defined in\cref{safelemmaorg}, $\overline{\Delta}$, $\underline{\Delta}$ and ${\mathbf{c}}$ defined in \cref{safelemma}, we have   
\begin{align}
\mathbf{d} = \left[\! \begin{array}{l}
-1\\
-1 
\end{array} \!\right]\!,~~~\overline{\Lambda} = \underline{\Lambda} = \left[\! {\begin{array}{*{20}{c}}
0.9 & 0 \\
0 & 0.8 
\end{array}} \!\right],~~\overline{\Delta} = \underline{\Delta} = 25,~~\mathbf{c} = -1,~~  \label{exp1101} 
\end{align}
from which and \cref{exp1102}, we then have 
\begin{align}
\overline{\mathbf{D}} = \frac{{\mathbf{D}}}{\overline{\Lambda}} = \underline{\mathbf{D}} = \frac{{\mathbf{D}}}{\underline{\Lambda}} = \left[\! {\begin{array}{*{20}{c}}
\frac{10}{9}\!&\!0 \!&\! 0 \!&\! 0\\
0\!&\!0 \!&\! \frac{5}{4} \!&\! 0\\
\end{array}} \!\right], ~~\overline{\mathbf{C}} = \frac{{\mathbf{C}}}{\overline{\Delta}} = \underline{\mathbf{C}} = \frac{{\mathbf{C}}}{\underline{\Delta}} =  \frac{1}{25}. \label{exp1103} 
\end{align}
Then, for givn $\alpha = 0.87$, $\beta = 0.002$, and the model knowledge $(\mathbf{A}, \textbf{B})$ in \cite{Phydrl1}, using the CVX toolbox \cite{grant2009cvx,vandenberghe1998determinant}, we obtain: 
\begin{align} 
\mathbf{P} &= \left[{\begin{array}{*{20}{c}}
   13.3812  &  6.9085 &  17.0004  &  3.6284 \\
    6.9085  &  4.1226 &  10.3597  &  2.2293\\
   17.0004  & 10.3597 &  28.2701  &  5.8142\\
    3.6284  &  2.2293 &   5.8142  &  1.2723\\
\end{array}} \right], \nonumber\\
\mathbf{F} &= \left[{\begin{array}{*{20}{c}}
 22.4008 &  16.9978 &  69.0659  & 12.6449
\end{array}} \right], \nonumber
\end{align}
with which, physics model knowledge in \cref{PM2}, and letting $w( \mathbf{s}(k),\mathbf{a}(k)) = -\mathbf{a}_{\text{drl}}^2(k)$,  the residual action policy \ref{residual} and the safety-embedded reward \eqref{reward} are then ready for the Phy-DRL.

\subsection{HA-Teacher: Physics Knowledge and Design} \label{HA}
Compared with HP-Student, HA-Teacher has relatively rich physics knowledge about system dynamics, which is directly and equivalently transformed from \cref{dymcarpole} as 
\begin{align}
\frac{d}{{dt}} \underbrace{\left[ {\begin{array}{*{20}{c}}
x\\
{\dot x}\\
\theta \\
{\dot \theta }
\end{array}} \right]}_{\mathbf{s}} &= \underbrace{\left[ {\begin{array}{*{20}{c}}
0&1&0&0\\
0&0&{\frac{{ - {m_p}g\sin \theta \cos \theta }}{{\theta [\frac{4}{3}({m_c} + {m_p}) - {m_p}{{\cos }^2}\theta ]}}}&{\frac{{\frac{4}{3}{m_p}l\sin \theta \dot \theta }}{{\frac{4}{3}({m_c} + {m_p}) - {m_p}{{\cos }^2}\theta }}}\\
0&0&0&1\\
0&0&{\frac{{g\sin \theta ({m_c} + {m_p})}}{{l\theta [\frac{4}{3}({m_c} + {m_p}) - {m_p}{{\cos }^2}\theta ]}}}&{\frac{{ - {m_p}\sin \theta \cos \theta \dot \theta }}{{\frac{4}{3}(m{}_c + {m_p}) - {m_p}{{\cos }^2}\theta }}}
\end{array}} \right]}_{\widehat{\mathbf{A}}(\mathbf{s})} \cdot \left[ {\begin{array}{*{20}{c}}
x\\
{\dot x}\\
\theta \\
{\dot \theta }
\end{array}} \right] \nonumber\\
&\hspace{6.3cm}+ \underbrace{\left[ {\begin{array}{*{20}{c}}
0\\
{\frac{{\frac{4}{3}}}{{\frac{4}{3}({m_c} + {m_p}) - {m_p}{{\cos }^2}\theta }}}\\
0\\
{\frac{{ - \cos \theta }}{{l[\frac{4}{3}({m_c} + {m_p}) - {m_p}{{\cos }^2}\theta ]}}}
\end{array}} \right]}_{\widehat{\mathbf{B}}(\mathbf{s})} \cdot \underbrace{F}_{\mathbf{a}}, \label{exp1101cc} 
\end{align}
where $\widehat{\mathbf{A}}(\mathbf{s})$ and $\widehat{\mathbf{B}}(\mathbf{s})$ are known to HA-Teacher. The sampling technique transforms the continuous-time dynamics model \eqref{exp1101cc}  to the discrete-time one: 
\begin{align}
\mathbf{s}(k+1) = (\mathbf{I}_4 + T \cdot \widehat{\mathbf{A}}(\mathbf{s})) \cdot \mathbf{s}(k) + T \cdot \widehat{\mathbf{B}}(\mathbf{s}) \cdot \mathbf{a}(k), \nonumber
\end{align}
from which we obtain the knowledge of $\mathbf{A}(\overline{\mathbf{s}}^{*})$ and $\mathbf{B}(\overline{\mathbf{s}}^{*})$ in \cref{realsyshadesign} as 
\begin{align}
{\mathbf{A}}(\overline{\mathbf{s}}^{*}) = \mathbf{I}_4 + T \cdot \widehat{\mathbf{A}}(\overline{\mathbf{s}}^{*}) ~~\text{and}~~{\mathbf{B}}(\overline{\mathbf{s}}^{*}) = T \cdot \widehat{\mathbf{B}}(\overline{\mathbf{s}}^{*}). \label{ppko}
\end{align}

Meanwhile, for the center of the envelope patch \eqref{goal}, the model mismatch in \cref{assm}, and the switch-triggering condition and dwell time in \eqref{coordinator}, we let $\chi = 0.25$, $\kappa = 0.02$, $\varepsilon = 0.6$, and $\tau = 10$. To always have feasible LMIs \eqref{cc0} and \eqref{cc3}, we let $\alpha = 0.95$ and $\eta = 1.1$. These parameters also guarantee the condition in \cref{cc300} holds.

\subsection{Additional Experimental Results} \label{add}
To further demonstrate the distinguished feature -- lifetime safety, additional experimental results are presented in \cref{ep3}, \cref{ep4}, and \cref{ep5}, where the `Unsafe Continual Learning' and `SeC-Learning Machine' are picked models trained after only \textbf{three episodes}, \textbf{four episodes}, \textbf{five episodes}, respectively, in continual learning. 
\begin{figure}[http]
    \centering
    \subfloat[Initial Condition 1]{\includegraphics[width=0.33\textwidth]{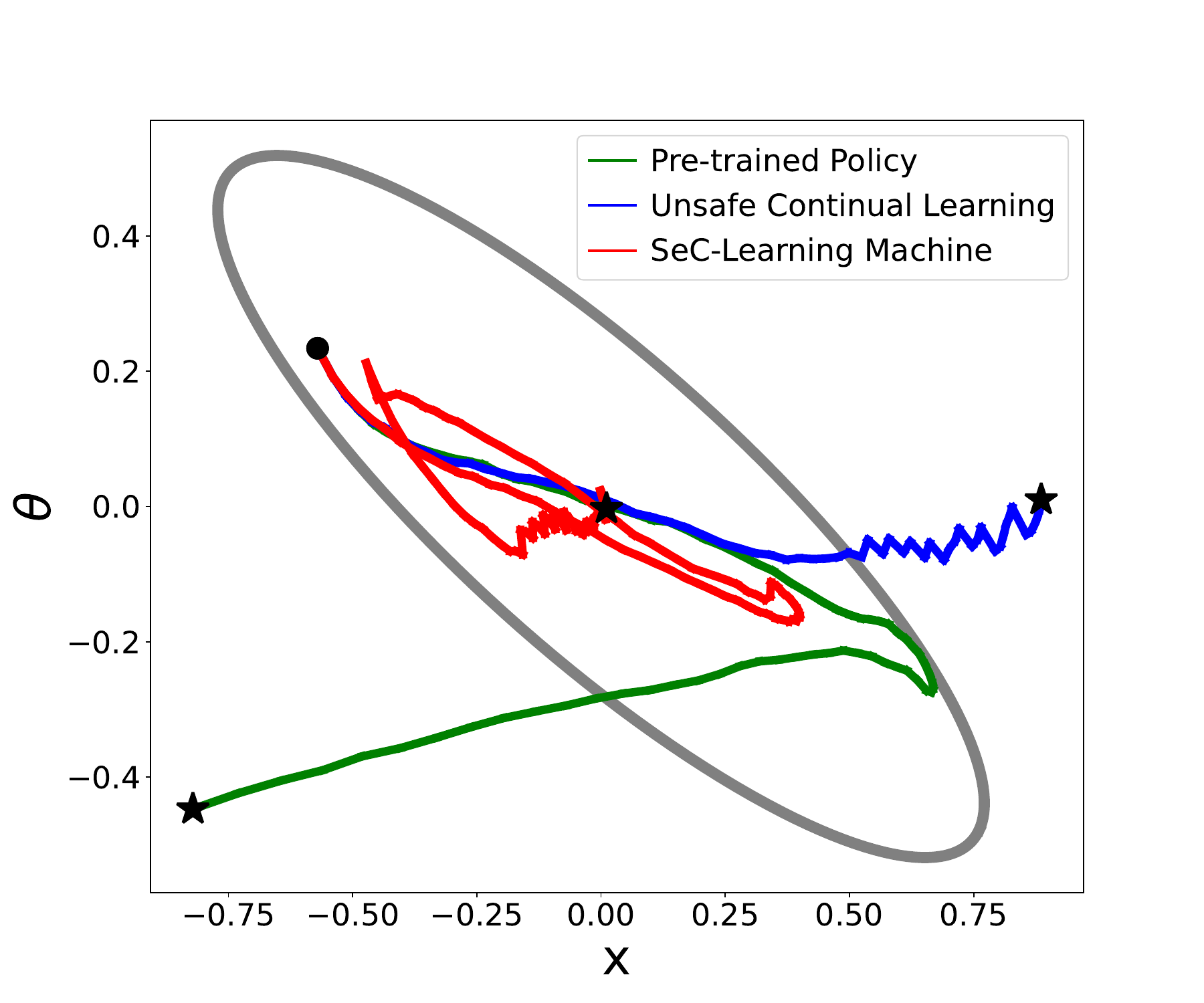}} 
    \centering
    \subfloat[Initial Condition 2]{\includegraphics[width=0.33\textwidth]{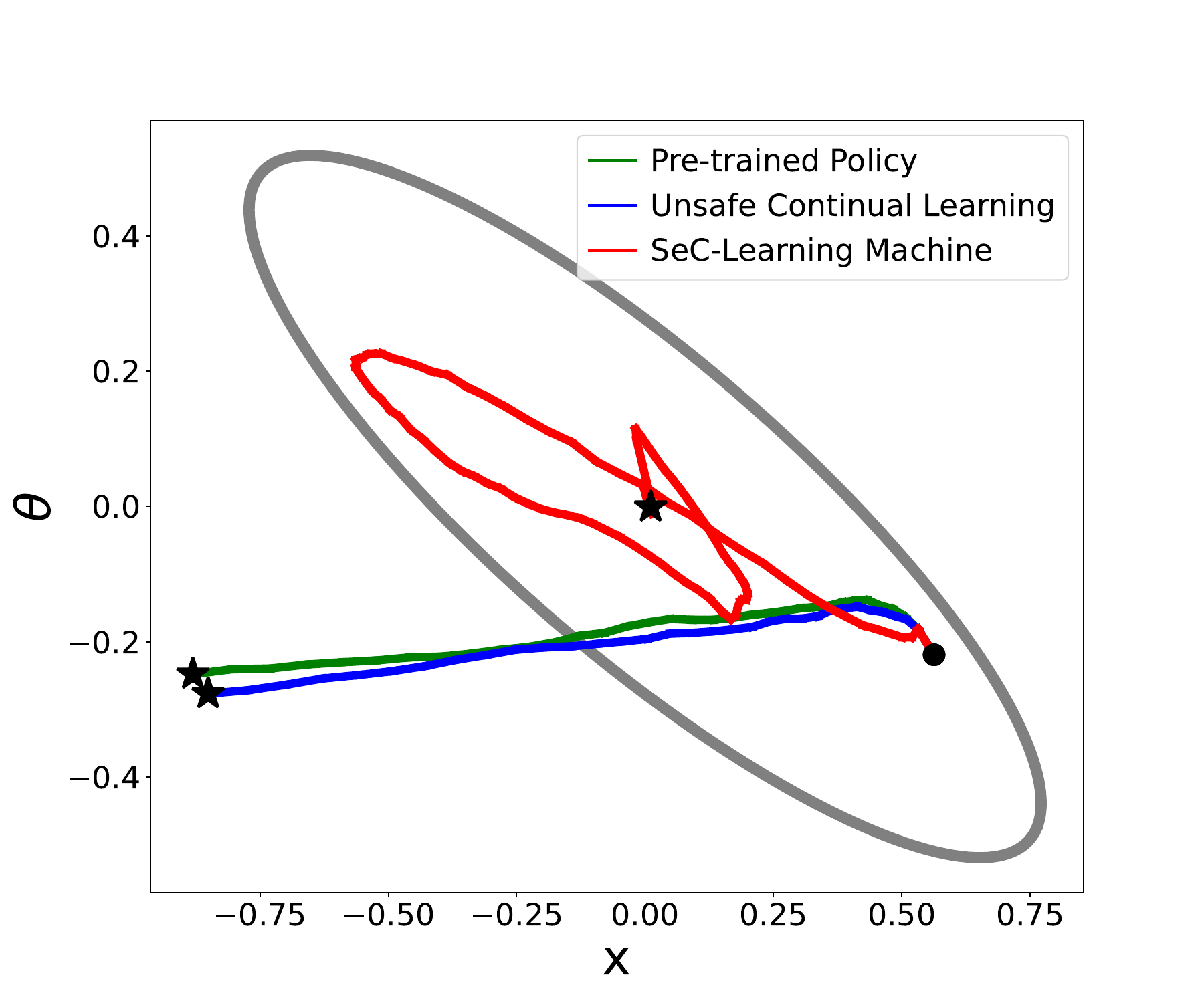}} 
    \centering
    \subfloat[Initial Condition 3]{\includegraphics[width=0.33\textwidth]{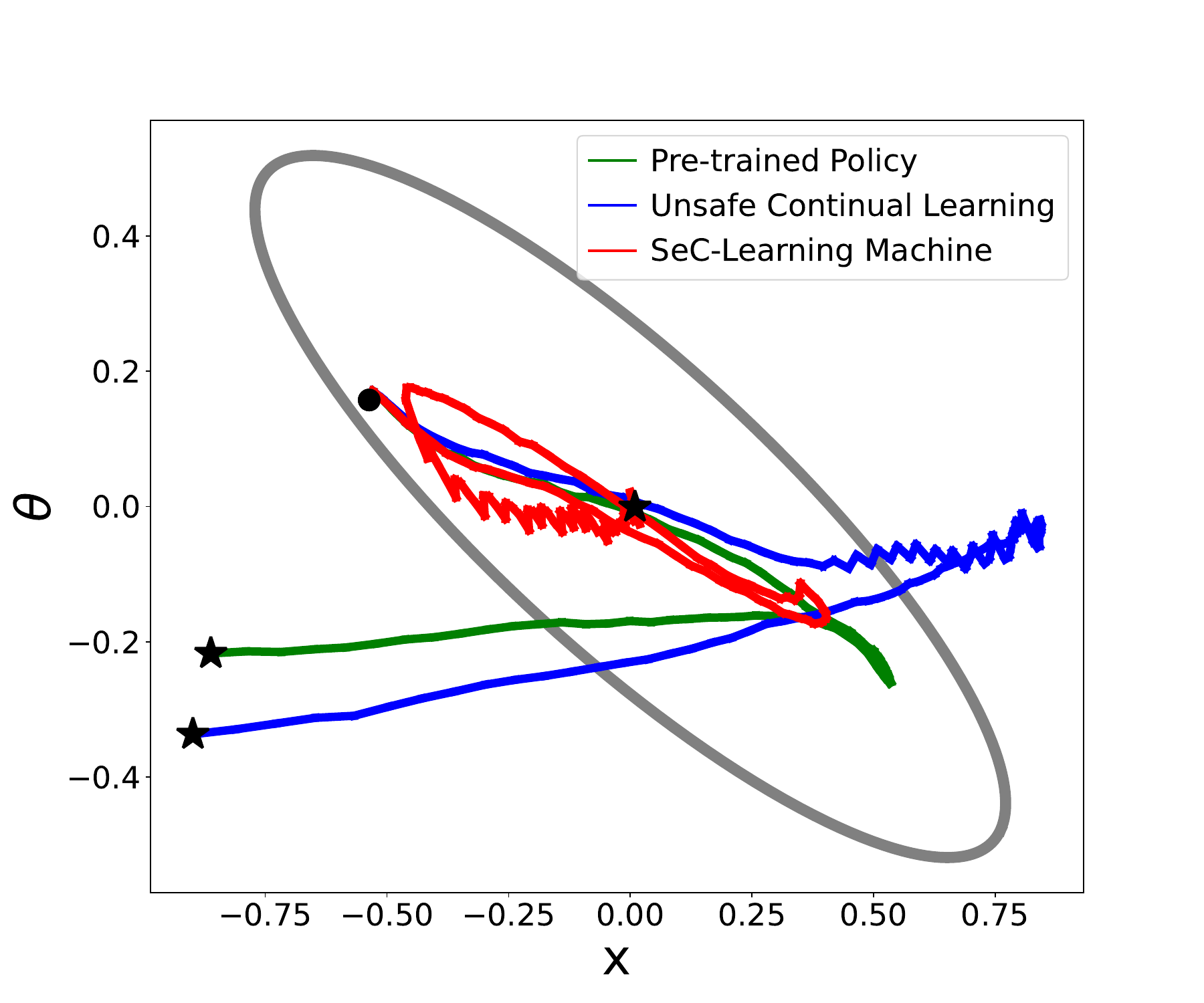}} 
    \centering
    \vspace{-0.0cm}
\caption{\textbf{Three Episodes}. Phase plots, given the same initial condition. The black dot and star denote the initial condition and final location, respectively.}
\label{ep3}
\end{figure}
\begin{figure}[http]
    \centering
    \subfloat[Initial Condition 1]{\includegraphics[width=0.33\textwidth]{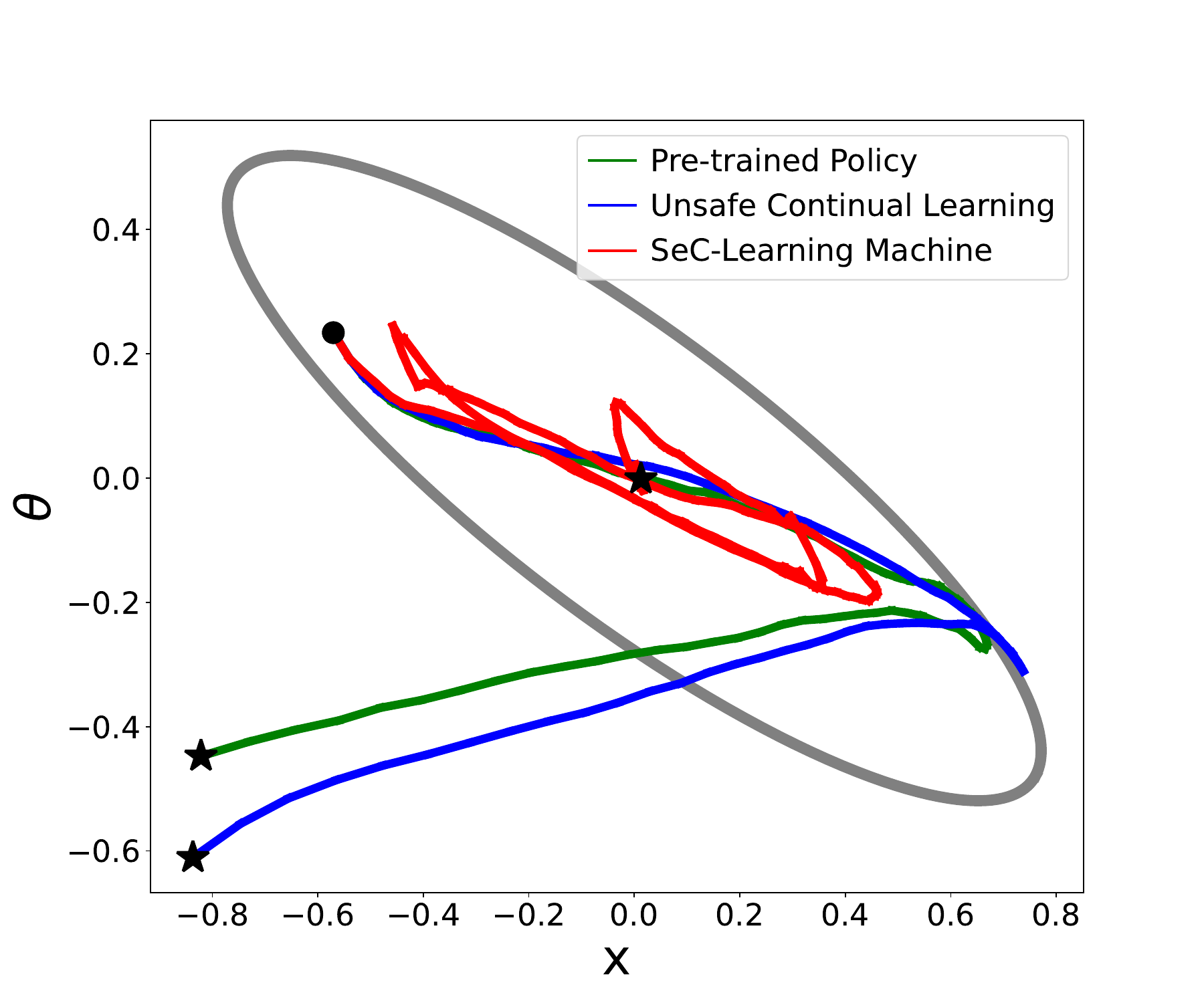}} 
    \centering
    \subfloat[Initial Condition 2]{\includegraphics[width=0.33\textwidth]{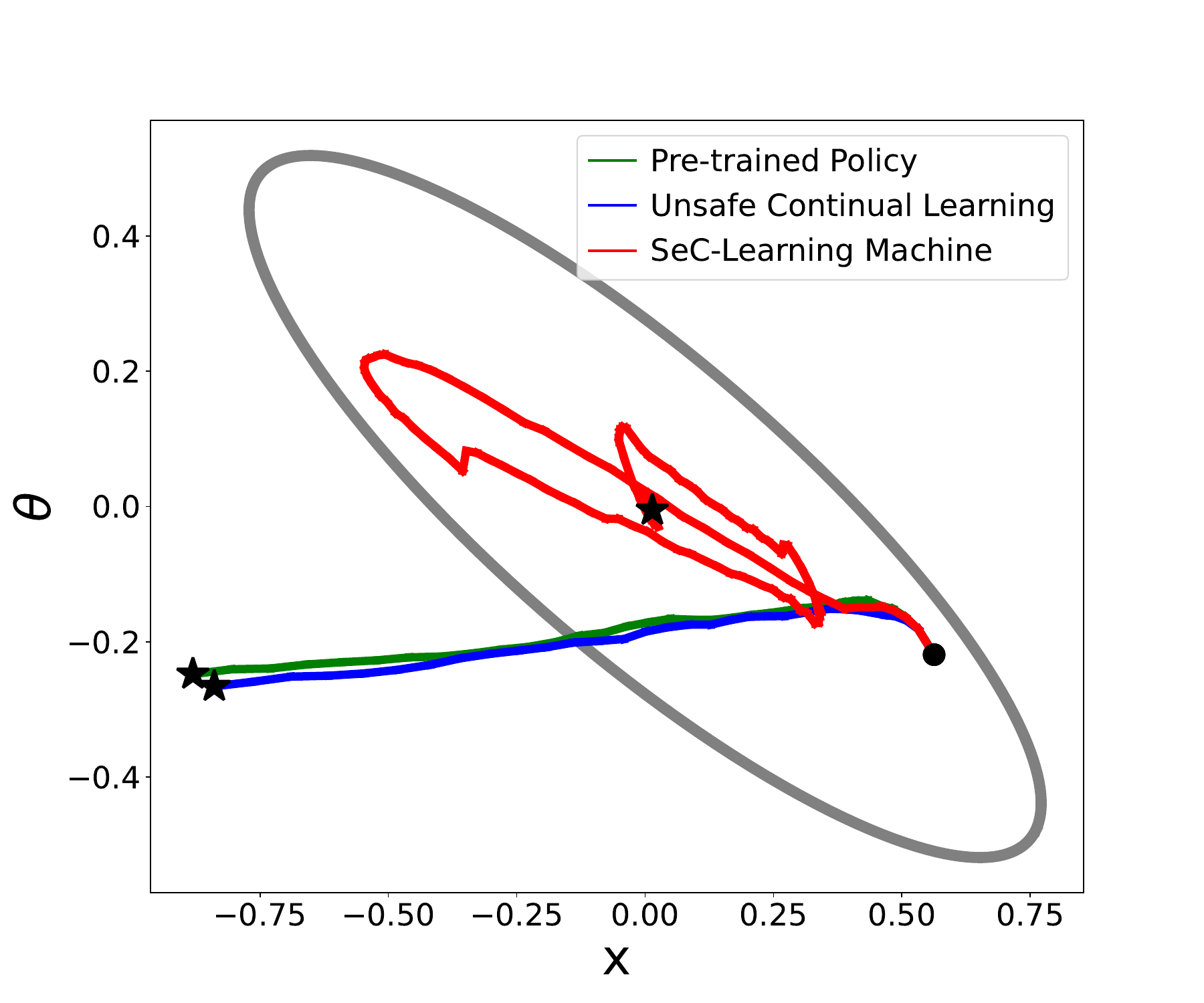}} 
    \centering
    \subfloat[Initial Condition 3]{\includegraphics[width=0.33\textwidth]{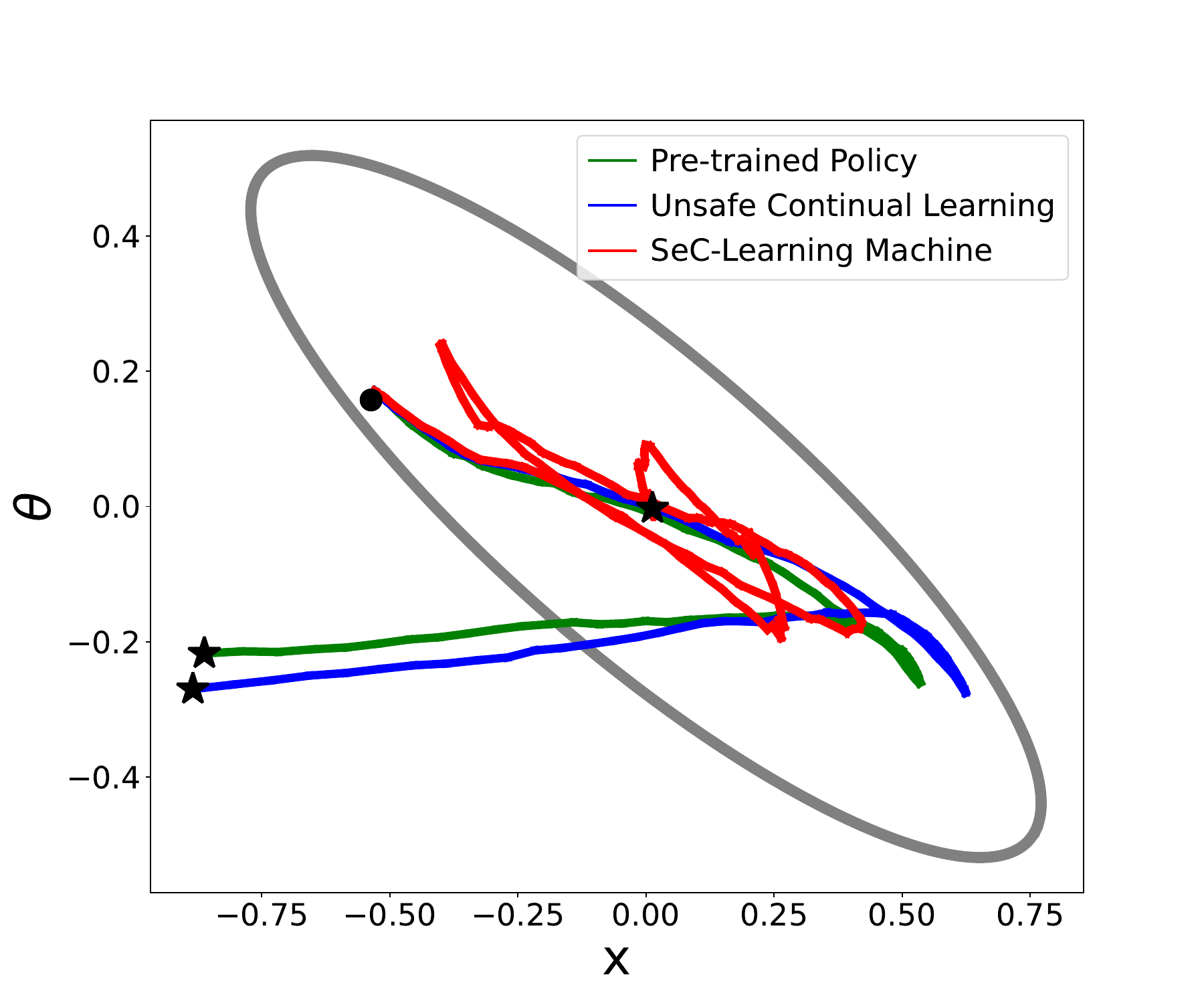}} 
    \centering
    \vspace{-0.0cm}
\caption{\textbf{Four Episodes} Phase plots, given the same initial condition. The black dot and star denote the initial condition and final location, respectively.}
\label{ep4}
\end{figure}
\begin{figure}[http]
    \centering
    \subfloat[Initial Condition 1]{\includegraphics[width=0.33\textwidth]{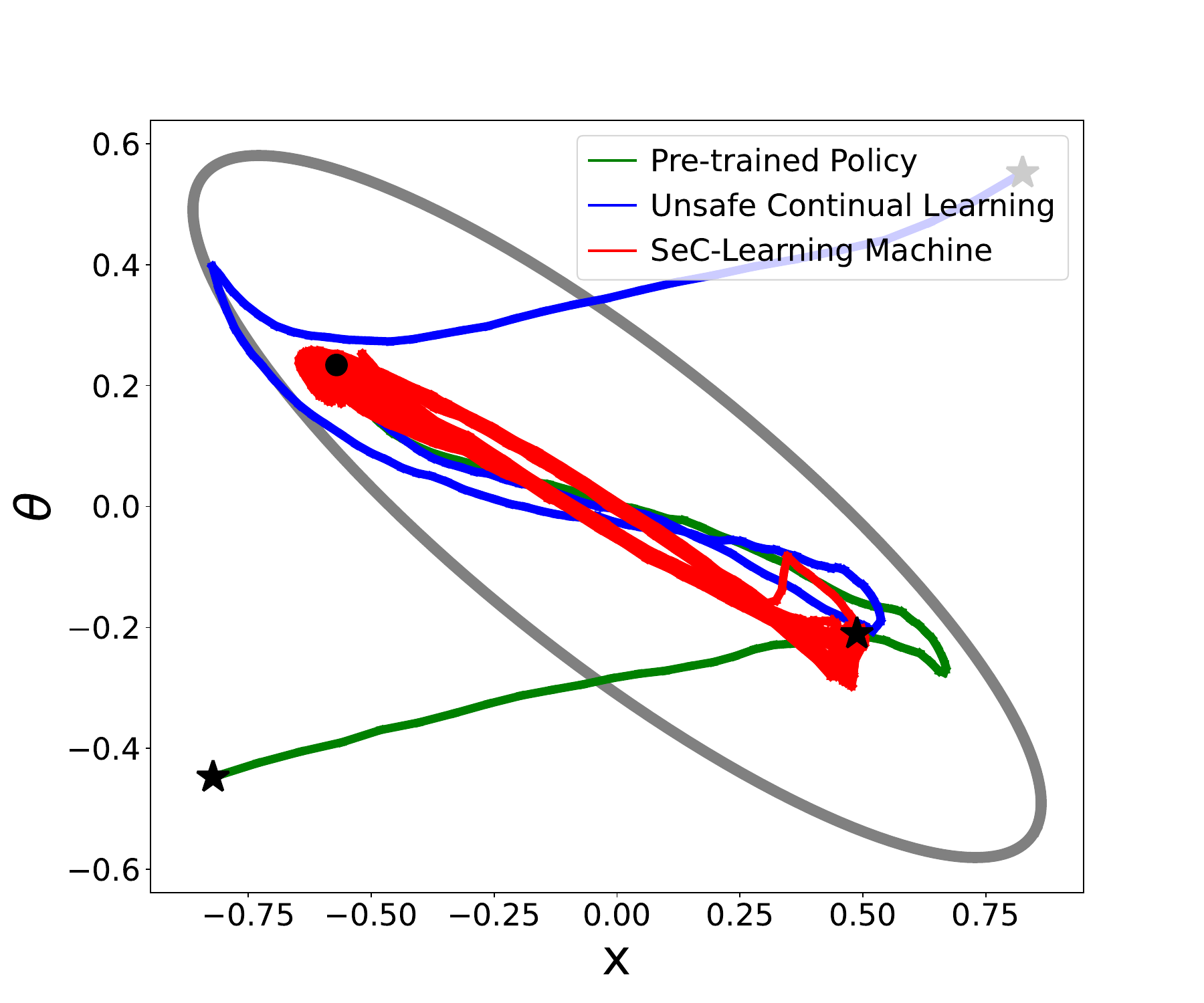}} 
    \centering
    \subfloat[Initial Condition 2]{\includegraphics[width=0.33\textwidth]{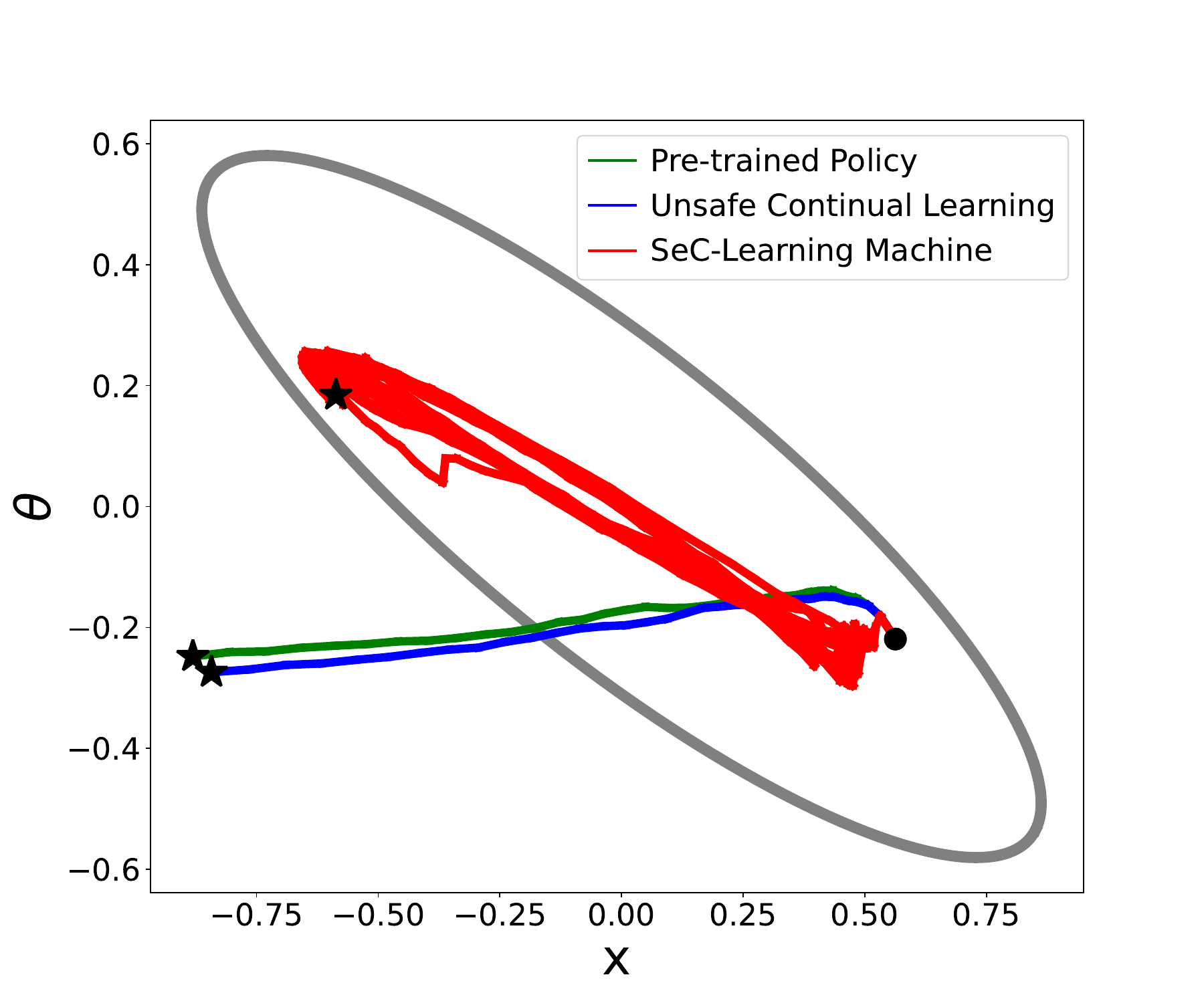}} 
    \centering
    \subfloat[Initial Condition 3]{\includegraphics[width=0.33\textwidth]{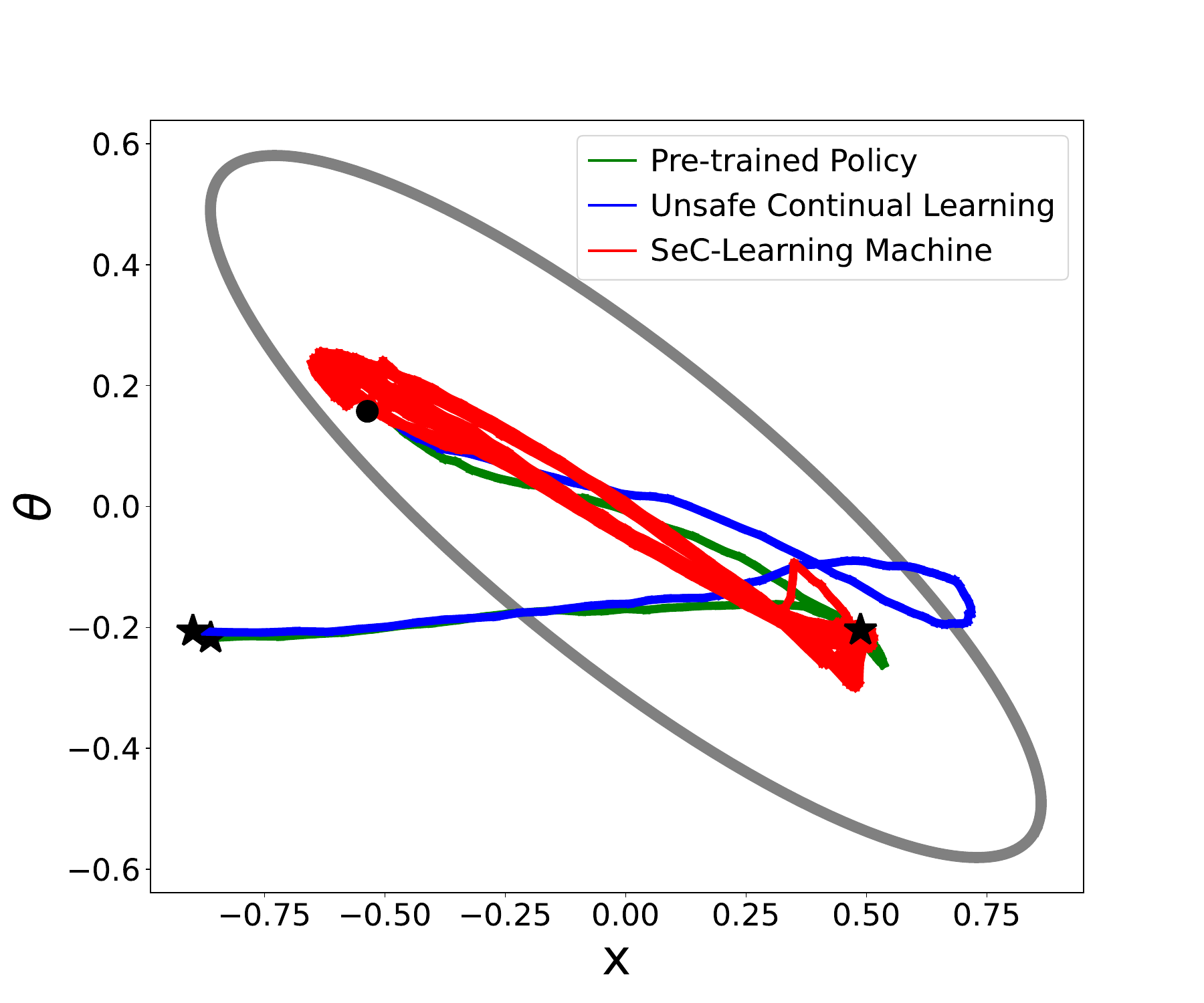}} 
    \centering
    \vspace{-0.0cm}
\caption{\textbf{Five Episodes}. Phase plots, given the same initial condition. The black dot and star denote the initial condition and final location, respectively.}
\label{ep5}
\end{figure}

\newpage
\section{Experiment: Real Quadruped Robot}

In the quadruped experiment, we adopt a Python-based framework for a Unitree A1 robot, released in GitHub by \cite{gityang}. The framework includes a simulation based on Pybullet, an interface for direct sim-to-real transfer, and an implementation of the Convex MPC Controller for basic motion control.

\subsection{Policy Learning} \label{robotjusnetf}
We train SeC-learning machine and  Phy-DRL to achieve the safe mission introduced in \cref{ccrobnt}. The observation of the policy is a 12-dimensional tracking error vector between the robot's state vector and the mission vector. The agent's actions offset the desired positional and lateral accelerations generated from the model-based policy. The computed accelerations are then converted to the low-level motors' torque control. 

The policy is trained using DDPG algorithm \cite{lillicrap2015continuous}. The actor and critic networks are implemented as a Multi-Layer Perceptron (MLP) with four fully connected layers. The output dimensions of the critic network are 256, 128, 64, and 1. The output dimensions of actor networks are 256, 128, 64, and 6. The input of the critic network is the tracking error vector and the action vector. The input of the actor network is the tracking error vector. The activation functions of the first three neural layers are ReLU, while the output of the last layer is the Tanh function for the actor network and Linear for the critic network. In more detail, we let discount factor $\gamma = 0.9$, and the learning rates of critic and actor networks are the same as 0.0003. We set the batch size to 512. The maximum step number for one episode is 10,000.

\subsection{System Dynamics} \label{snsdyanfhgtrobot}

The physics knowledge about the robot used by HP-Student and HA-Teacher for their designs is from the dynamics model of the robot, which is based on a single rigid body subject to forces at the 
contact patches \cite{di2018dynamic}. The considered robot dynamics is characterized by the position of the body's center of mass (CoM) height $h$, the CoM velocity $\mathbf{v}  \triangleq \dot{\mathbf{p} } = [\text{CoM x-velocity}; \text{CoM y-velocity}; \text{CoM z-velocity}] \in \mathbb{R}^{3}$, the Euler angles $\widetilde{\mathbf{e}} = [\phi; \theta; \psi] \in \mathbb{R}^{3}$ with $\phi$, $\theta$ and $\psi$ being roll, pitch and yaw angles, respectively, and the angular velocity in world coordinates $\mathbf{w} \in \mathbb{R}^{3}$.

According to the literature \cite{di2018dynamic}, the body dynamics of quadruped robots can be described by 
\begin{align}
&\frac{\mathbf{d}}{{\mathbf{d}t}}\underbrace{\left[ {\begin{array}{*{20}{c}}
h\\
\widetilde{\mathbf{e}}\\
\mathbf{v}\\
\mathbf{w}
\end{array}} \right]}_{\triangleq ~\widehat{\mathbf{s}}}  = \underbrace{\left[ {\begin{array}{*{20}{c}}
{{\mathbf{O}_{1 \times 1}}}&{{\mathbf{O}_{1 \times 5}}}&{1}&{{\mathbf{O}_{1 \times 3}}}\\
{{\mathbf{O}_{3 \times 3}}}&{{\mathbf{O}_{3 \times 3}}}&{{\mathbf{O}_{3 \times 3}}}&{\mathbf{R}(\phi ,\theta ,\psi)}\\
{{\mathbf{O}_{3 \times 3}}}&{{\mathbf{O}_{3 \times 3}}}&{{\mathbf{O}_{3 \times 3}}}&{{\mathbf{O}_{3 \times 3}}}\\
{{\mathbf{O}_{3 \times 3}}}&{{\mathbf{O}_{3 \times 3}}}&{{\mathbf{O}_{3 \times 3}}}&{{\mathbf{O}_{3 \times 3}}}
\end{array}} \right]}_{\triangleq ~\widehat{\mathbf{A}}(\phi,\theta,\psi)} \cdot \left[ {\begin{array}{*{20}{c}}
h\\
\widetilde{\mathbf{e}}\\
\mathbf{v}\\
\mathbf{w}
\end{array}} \right] + \widehat{\mathbf{B}} \cdot \widehat{a} + \left[ {\begin{array}{*{20}{c}}
{{0}}\\
\mathbf{O}_{3 \times 1}\\
\mathbf{O}_{3 \times 1}\\
{\widetilde{\mathbf{g}}}
\end{array}} \right] \nonumber\\
&\hspace{11cm} +~ \mathbf{f}(\widehat{\mathbf{s}}), \label{dogdynamics}
\end{align}
where ${\widetilde{\mathbf{g}}} = [0;0; -g] \in \mathbb{R}^{3}$ with $g$ being the gravitational acceleration, $\mathbf{f}(\widehat{\mathbf{s}})$ denotes model mismatch, $\widetilde{\mathbf{B}}  = [\mathbf{O}_{4 \times 6}; ~\mathbf{I}_6]^{\top}$, and the $\mathbf{R}(\phi,\theta,\psi) = \mathbf{R}_{z}(\psi) \cdot \mathbf{R}_{y}(\theta) \cdot \mathbf{R}_{x}(\phi) \in \mathbb{R}^{3 \times 3}$ is the rotation matrix with 
\begin{align}
{\mathbf{R}_x}(\phi) \!=\! \left[\!\! {\begin{array}{*{20}{c}}
1&0&0\\
0&{\cos \phi }&{ - \sin \phi }\\
0&{\sin \phi }&{\cos \phi }
\end{array}} \!\!\right]\!,\!~{\mathbf{R}_y}(\theta) \!=\! \left[\!\! {\begin{array}{*{20}{c}}
{\cos \theta }&0&{\sin \theta }\\
0&1&0\\
{ - \sin \theta }&0&{\cos \theta }
\end{array}} \!\!\right]\!,\!~{\mathbf{R}_z}(\psi) \!=\! \left[\!\! {\begin{array}{*{20}{c}}
{\cos \psi }&{ - \sin \psi }&0\\
{\sin \psi }&{\cos \psi }&0\\
0&0&1
\end{array}} \!\!\right]\!. \nonumber
\end{align}

\subsection{HP-Student: Physics Knowledge and Design} \label{HPdog}
To have the model knowledge represented by $\left( {\mathbf{A},~ \mathbf{B}} \right)$ pertaining to robot dynamics \eqref{dogdynamics}, we make the simplification: $\mathbf{R}(\phi,\theta,\psi) =  \mathbf{I}_{3}$, which is obtained through setting the zero angels of roll, pitch and yaw, i.e., $\phi = \theta = \psi = 0$. Referring to \eqref{dogdynamics} and ignoring unknown model mismatch, we can obtain a simplified linear model pertaining to robot dynamics (\ref{dogdynamics}): 
\begin{align}
\frac{\mathbf{d}}{{\mathbf{d}t}}\underbrace{\left[ {\begin{array}{*{20}{c}}
\widetilde{h} \\
\widetilde{\widetilde{\mathbf{e}}}\\
\widetilde{\mathbf{v}} \\
\widetilde{\mathbf{w}}
\end{array}} \right]}_{\triangleq ~ \widetilde{\mathbf{s}}}  = \underbrace{\left[ {\begin{array}{*{20}{c}}
{{\mathbf{O}_{1 \times 1}}}&{{\mathbf{O}_{1 \times 3}}}&{1}&{{\mathbf{O}_{1 \times 5}}}\\
{{\mathbf{O}_{3 \times 3}}}&{{\mathbf{O}_{3 \times 3}}}&{{\mathbf{O}_{3 \times 3}}}&{\mathbf{R}(\phi ,\theta ,\psi)}\\
{{\mathbf{O}_{3 \times 3}}}&{{\mathbf{O}_{3 \times 3}}}&{{\mathbf{O}_{3 \times 3}}}&{{\mathbf{O}_{3 \times 3}}}\\
{{\mathbf{O}_{3 \times 3}}}&{{\mathbf{O}_{3 \times 3}}}&{{\mathbf{O}_{3 \times 3}}}&{{\mathbf{O}_{3 \times 3}}}
\end{array}} \right]}_{\triangleq ~ \widetilde{\mathbf{A}}} \cdot \left[ {\begin{array}{*{20}{c}}
\widetilde{h} \\
\widetilde{\widetilde{\mathbf{e}}}\\
\widetilde{\mathbf{v}} \\
\widetilde{\mathbf{w}}
\end{array}} \right] + {\widehat{\mathbf{B}}} \cdot \widetilde{a}. \label{simdogdynamics}
\end{align}

Given the equilibrium point (or control goal) $\mathbf{s}^*$ and $\mathbf{\widetilde{\mathbf{s}}}$ given in \cref{simdogdynamics}, we define  ${\mathbf{s}} \triangleq \mathbf{\widetilde{\mathbf{s}}} - \mathbf{s}^*$. It is then straightforward to obtain a dynamics from \cref{simdogdynamics} as $\dot{{\mathbf{s}}} = \widetilde{\mathbf{A}} \cdot {\mathbf{s}} + {\widehat{\mathbf{B}}} \cdot \widetilde{\mathbf{a}}$, which transforms to a discrete-time model via sampling technique: 
\begin{align}
\mathbf{s}(k+1) =  \mathbf{A} \cdot {\mathbf{s}}(k) + {{\mathbf{B}}} \cdot \widetilde{\mathbf{a}}(k), ~\text{with}~\mathbf{A} = \mathbf{I}_{10} + T \cdot \widetilde{\mathbf{A}}~\text{and}~\mathbf{B} = T \cdot \widehat{\mathbf{B}}, \label{discremodeless}
\end{align}
where $T$ is the sampling period.

with which and matrices $\mathbf{A}$ and $\mathbf{B}$ in \cref{discremodeless}, we are able to deliver the residual action policy \eqref{residual} and safety-embedded reward \eqref{reward}.

\subsection{HA-Teacher: Physics Knowledge and Design} \label{HAdog}
Compared with HP-Student, HA-Teacher has relatively rich physics knowledge about system dynamics, which is directly and equivalently transformed from \cref{dogdynamics} as 
\begin{align}
\frac{d}{{dt}} \underbrace{\left[ {\begin{array}{*{20}{c}}
h\\
\widetilde{\mathbf{e}}\\
\mathbf{v}\\
\mathbf{w}
\end{array}} \right]}_{\mathbf{s}} &= \underbrace{\left[ {\begin{array}{*{20}{c}}
{{\mathbf{O}_{1 \times 1}}}&{{\mathbf{O}_{1 \times 3}}}&{1}&{{\mathbf{O}_{1 \times 5}}}\\
{{\mathbf{O}_{3 \times 3}}}&{{\mathbf{O}_{3 \times 3}}}&{{\mathbf{O}_{3 \times 3}}}&{\mathbf{R}(\phi ,\theta ,\psi)}\\
{{\mathbf{O}_{3 \times 3}}}&{{\mathbf{O}_{3 \times 3}}}&{{\mathbf{O}_{3 \times 3}}}&{{\mathbf{O}_{3 \times 3}}}\\
{{\mathbf{O}_{3 \times 3}}}&{{\mathbf{O}_{3 \times 3}}}&{{\mathbf{O}_{3 \times 3}}}&{{\mathbf{O}_{3 \times 3}}}
\end{array}} \right]}_{\widehat{\mathbf{A}}(\mathbf{s})} \cdot \left[ {\begin{array}{*{20}{c}}
h\\
\widetilde{\mathbf{e}}\\
\mathbf{v}\\
\mathbf{w} 
\end{array}} \right] + \underbrace{\left[ {\begin{array}{*{20}{c}}
{{\mathbf{O}_3}}&{{\mathbf{O}_3}}&{{\mathbf{O}_3}}&{{\mathbf{O}_3}}\\
{{\mathbf{O}_3}}&{{\mathbf{O}_3}}&{{\mathbf{O}_3}}&{{\mathbf{O}_3}}\\
{{\mathbf{O}_3}}&{{\mathbf{O}_3}}&{{\mathbf{I}_3}}&{{\mathbf{O}_3}}\\
{{\mathbf{O}_3}}&{{\mathbf{O}_3}}&{{\mathbf{O}_3}}&{{\mathbf{I}_3}}
\end{array}} \right]}_{\widehat{\mathbf{B}}(\mathbf{s})} \!~\cdot~\! \mathbf{a} \nonumber\\
&\hspace{10cm} + \mathbf{g}(\mathbf{s}), \label{exp1101cc} 
\end{align}
where $\widehat{\mathbf{A}}(\mathbf{s})$ and $\widehat{\mathbf{B}}(\mathbf{s})$ are known to HA-Teacher. The sampling technique transforms the continuous-time dynamics model \eqref{exp1101cc}  to the discrete-time one: 
\begin{align}
\mathbf{s}(k+1) = (\mathbf{I}_4 + T \cdot \widehat{\mathbf{A}}(\mathbf{s})) \cdot \mathbf{s}(k) + T \cdot \widehat{\mathbf{B}}(\mathbf{s}) \cdot \mathbf{a}(k) + T \cdot \mathbf{g}(\mathbf{s}), \nonumber
\end{align}
from which we obtain the knowledge of $\mathbf{A}(\overline{\mathbf{s}}^{*})$ and $\mathbf{B}(\overline{\mathbf{s}}^{*})$ in \cref{realsyshadesign} as 
\begin{align}
{\mathbf{A}}(\overline{\mathbf{s}}^{*}) = \mathbf{I}_4 + T \cdot \widehat{\mathbf{A}}(\overline{\mathbf{s}}^{*}) ~~\text{and}~~{\mathbf{B}}(\overline{\mathbf{s}}^{*}) = T \cdot \widehat{\mathbf{B}}(\overline{\mathbf{s}}^{*}). \label{ppko}
\end{align}

Meanwhile, for the center of the envelope patch \eqref{goal}, the model mismatch in \cref{assm}, and the switch-triggering condition and dwell time in \eqref{coordinator}, we let $\chi = 0.25$, $\kappa = 0.02$, $\varepsilon = 0.6$, and $\tau = 10$. To always have feasible LMIs \eqref{cc0} and \eqref{cc3}, we let $\alpha = 0.99$ and $\eta = 1.1$. These parameters also guarantee that the condition in \cref{cc300} is fulfilled. 

\subsection{Additional Experimental Results} 
\subsubsection{Trajectories} \label{addtextsim}
The real robot's trajectories of COM height and COM x-velocities under the control of the SeC-learning machine in the 5th episode, 10th episode, 15th episode, and 20th episode are shown in \cref{ep5sec,ep10sec,ep15sec,ep20sec}, respectively. The figures straightforwardly depict that the SeC-learning machine guarantees the safety of real robots in all picked episodes of continual learning.   

\begin{figure}[http]
    \centering
    \subfloat[Height trajectory]{\includegraphics[width=0.45\textwidth]{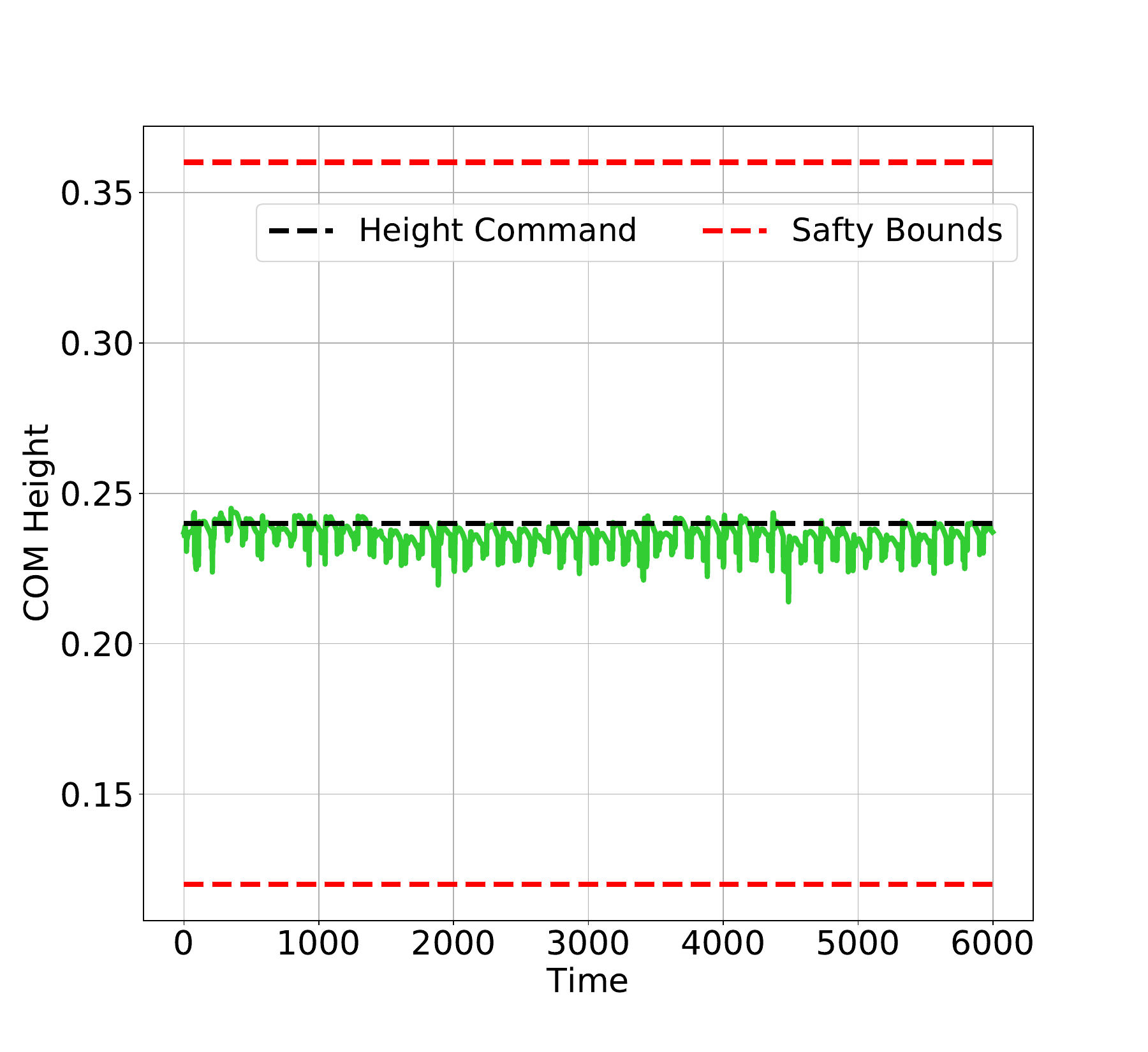}} 
    \centering
    \subfloat[Velocity trajectory]{\includegraphics[width=0.45\textwidth]{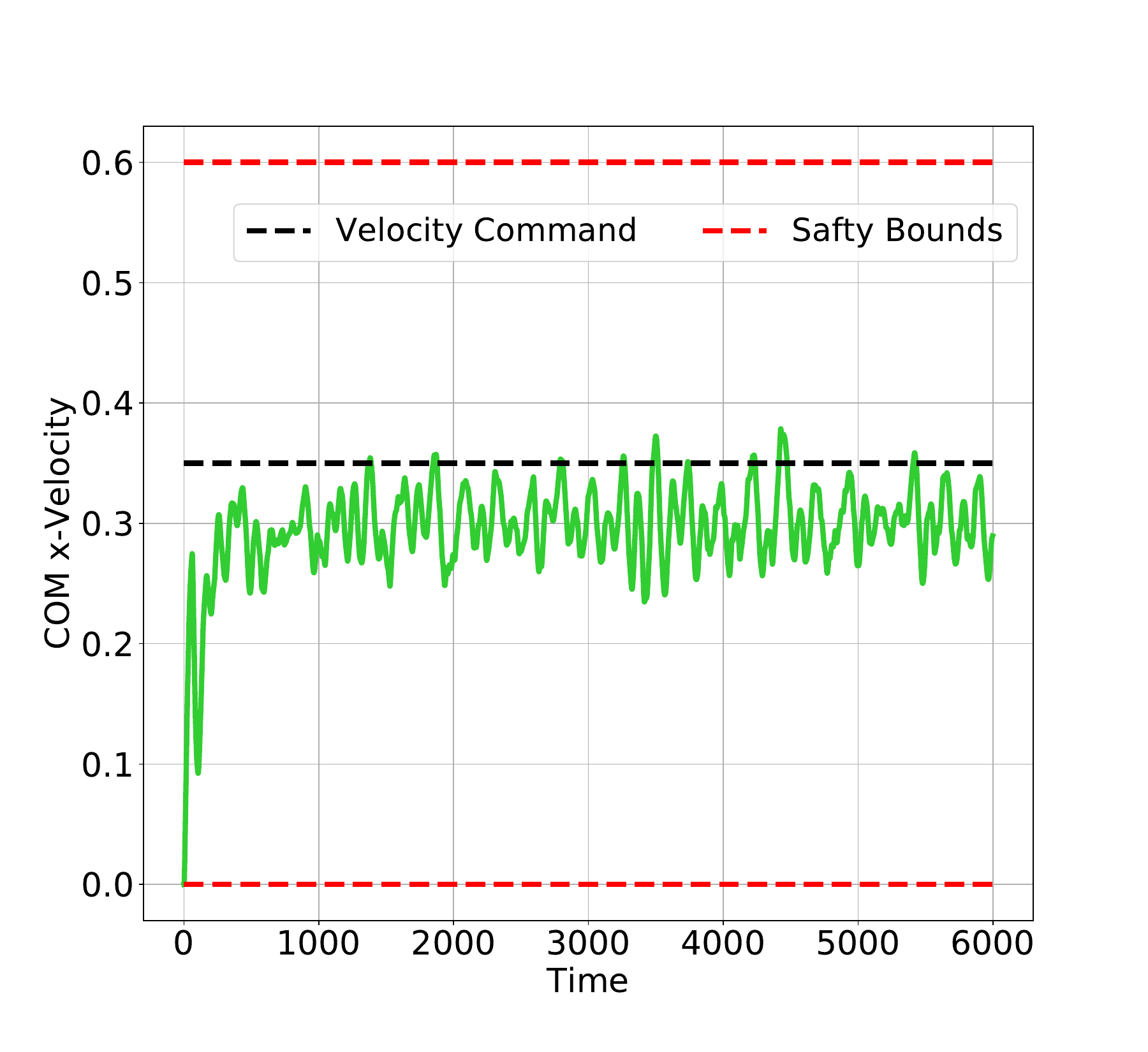}} 
    \centering
    \vspace{0.1cm}
\caption{Robot's trajectories under control of SeC learning machine in the \textbf{5th Episode}.}
\label{ep5sec}
\end{figure}
\begin{figure}[http]
    \centering
    \subfloat[Height trajectory]{\includegraphics[width=0.45\textwidth]{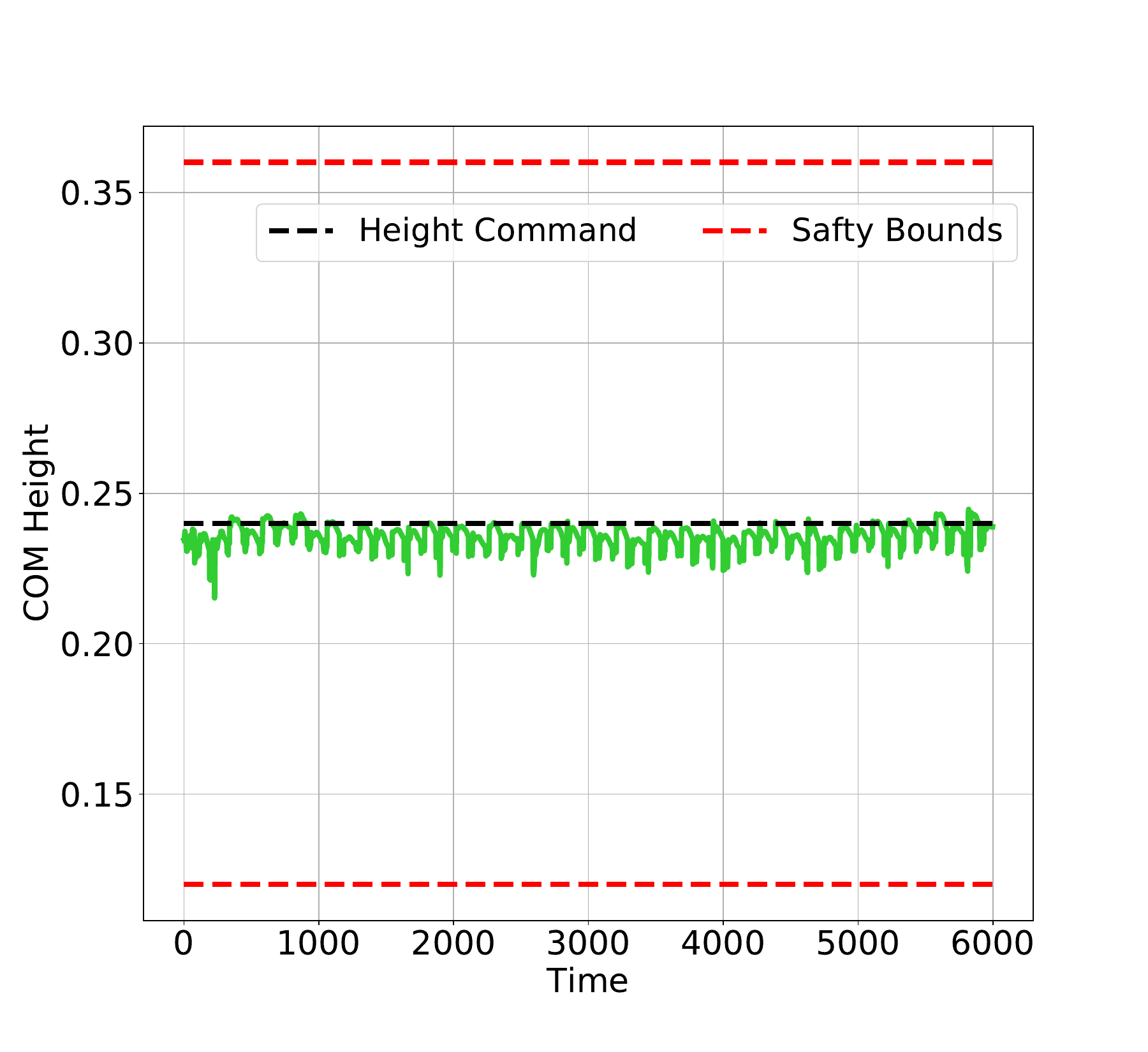}} 
    \centering
    \subfloat[Velocity trajectory]{\includegraphics[width=0.45\textwidth]{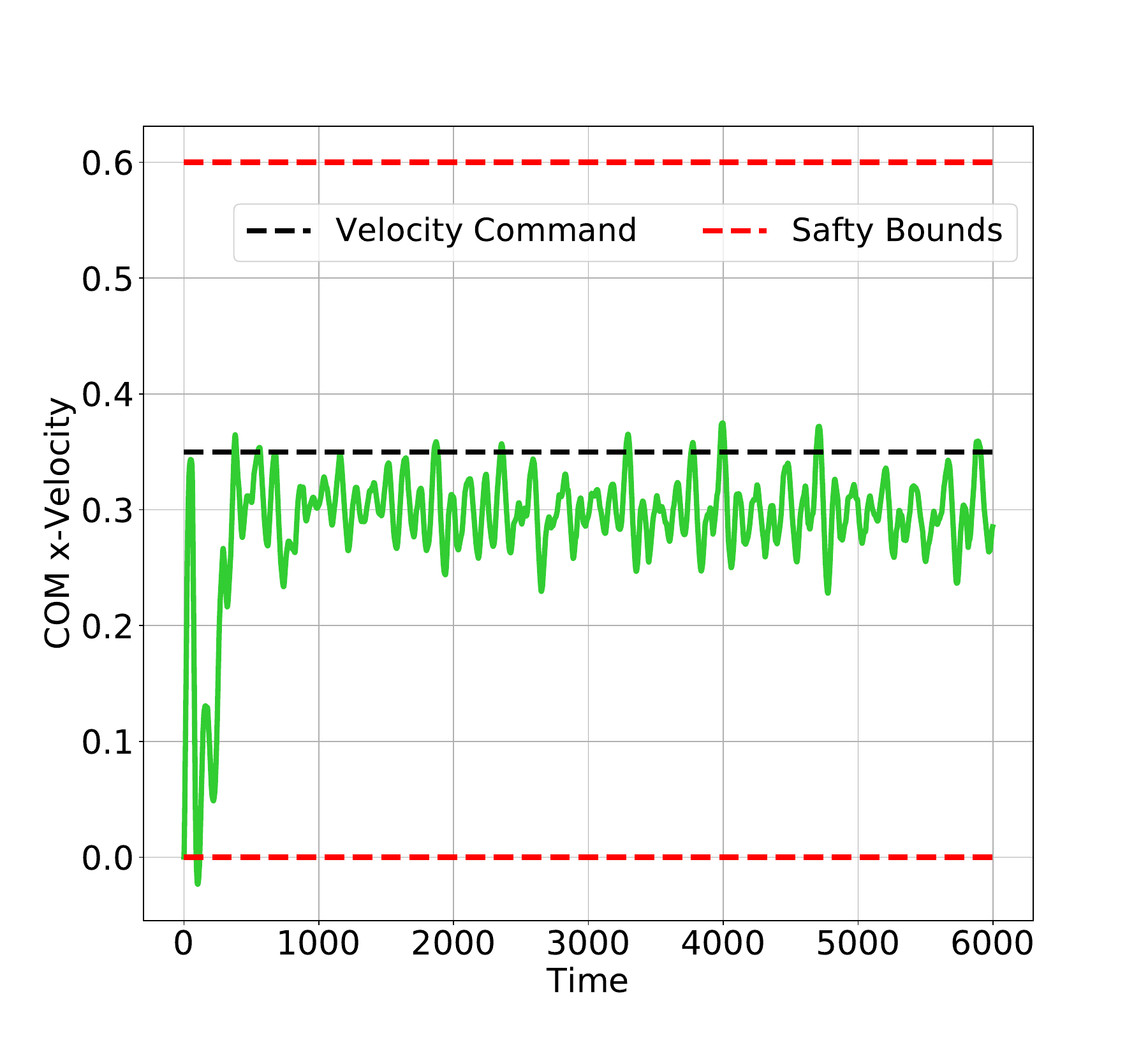}} 
    \centering
    \vspace{-0.0cm}
\caption{Robot's trajectories under control of SeC learning machine in the \textbf{10th Episode}.}
\label{ep10sec}
\end{figure}
\begin{figure}[http]
    \centering
    \subfloat[Height trajectory]{\includegraphics[width=0.45\textwidth]{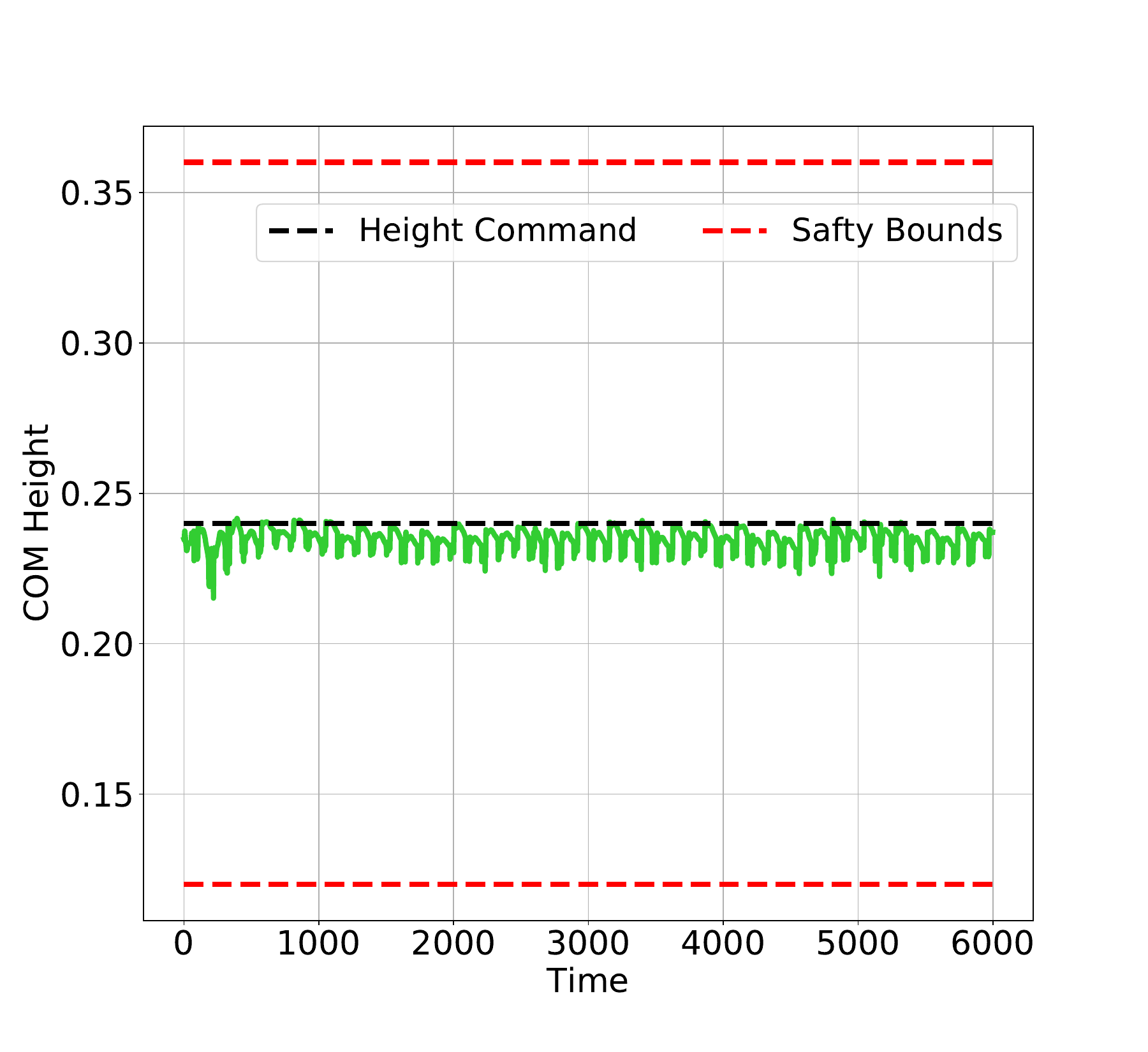}} 
    \centering
    \subfloat[Velocity trajectory]{\includegraphics[width=0.45\textwidth]{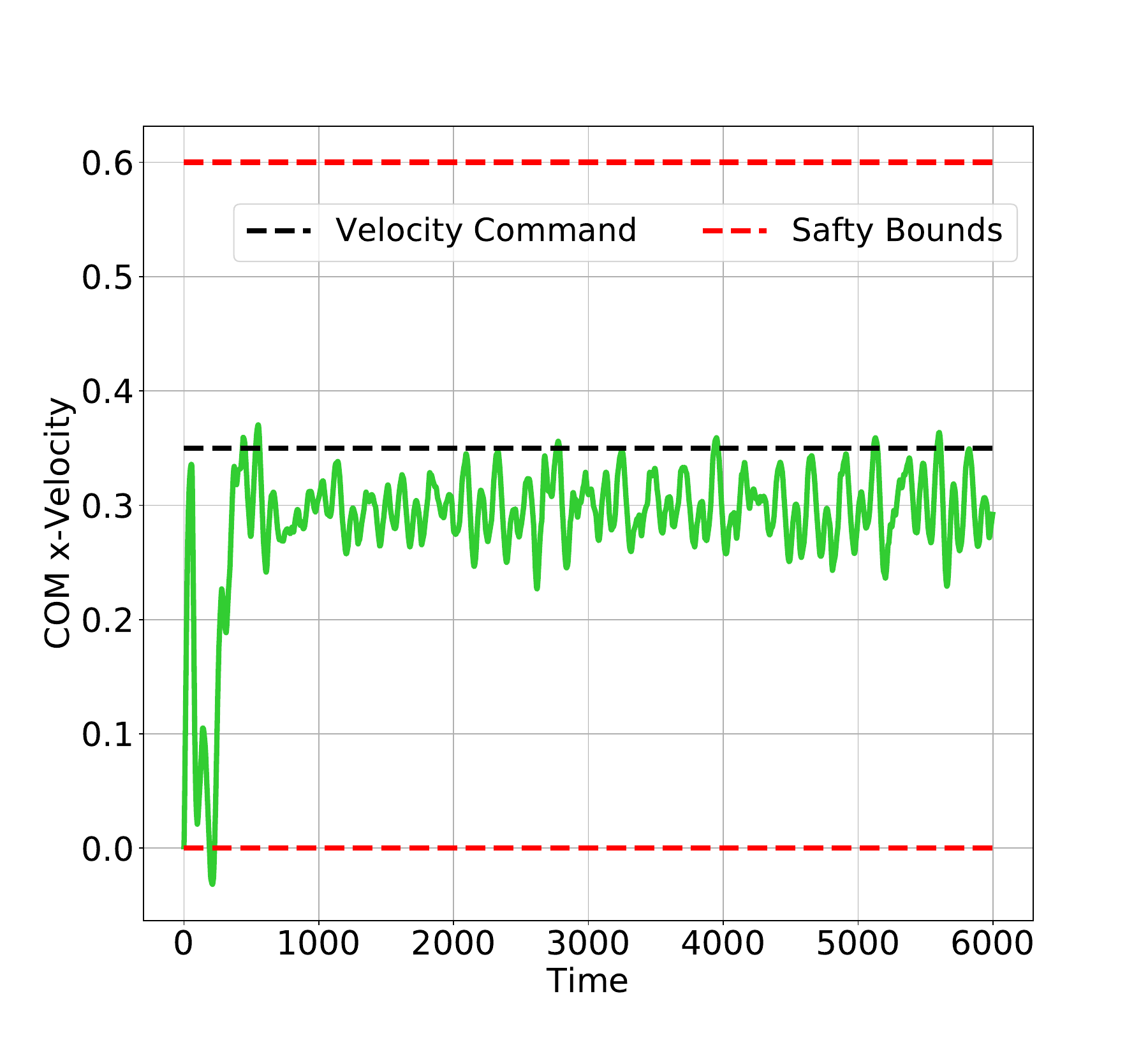}} 
    \centering
    \vspace{-0.0cm}
\caption{Robot's trajectories under control of SeC learning machine in the \textbf{15th Episode}.}
\label{ep15sec}
\end{figure}
\begin{figure}[http]
    \centering
    \subfloat[Height trajectory]{\includegraphics[width=0.45\textwidth]{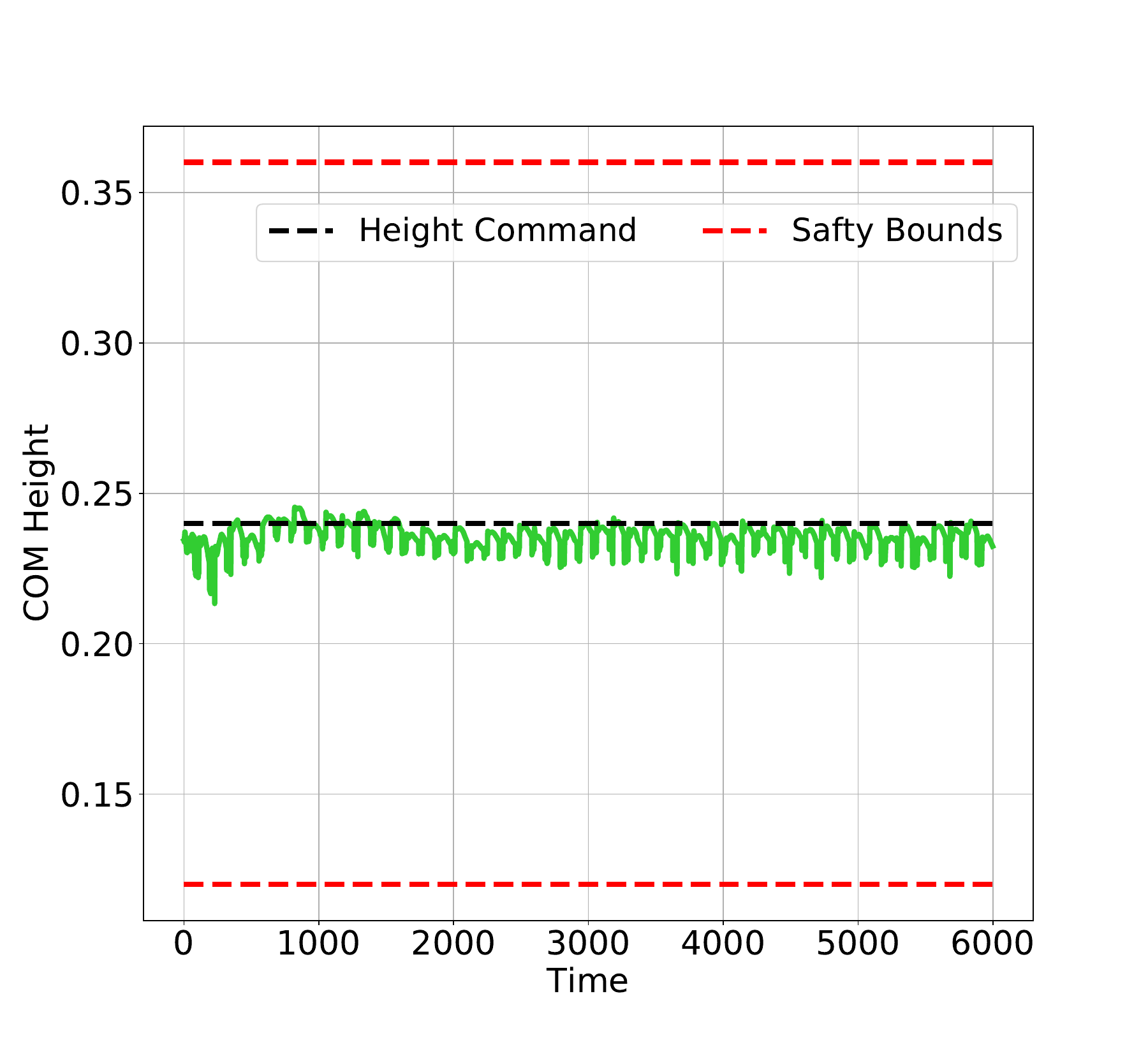}} 
    \centering
    \subfloat[Velocity trajectory]{\includegraphics[width=0.45\textwidth]{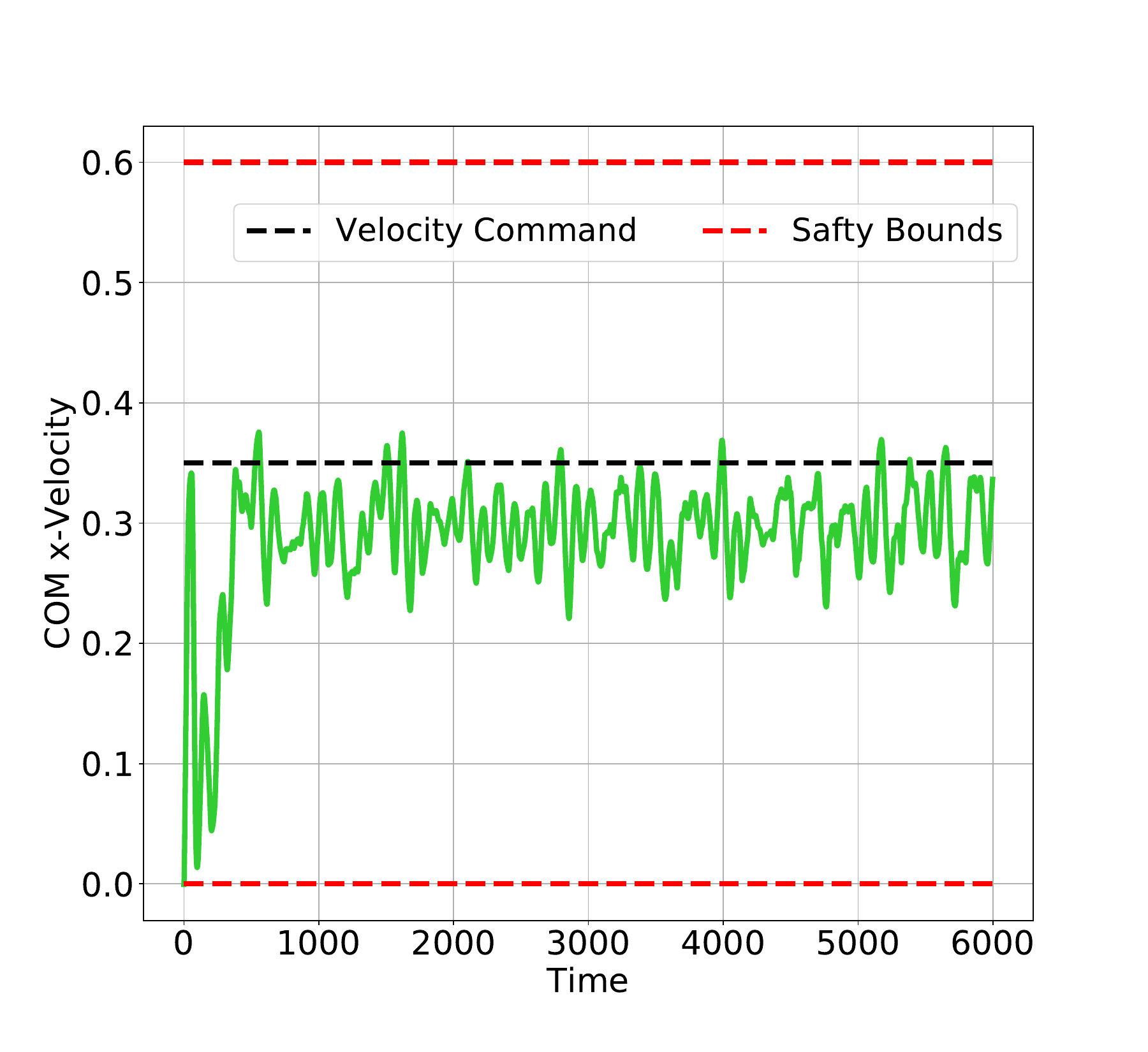}} 
    \centering
    \vspace{-0.0cm}
\caption{Robot's trajectories under control of SeC learning machine in the \textbf{20th Episode}.}
\label{ep20sec}
\end{figure}

\subsubsection{Reward} \label{addtextsimppo}
The reward curves in the iteration step for 20 episodes are shown in \cref{addewwards}. Observing \cref{addewwards}, we conclude that given the same reward for learning, the SeC-Learning Machine exhibits remarkably fast and stable learning, compared with continual learning without Simplex logic. 
\begin{figure}[http]
    \centering
    \subfloat{\includegraphics[width=0.70\textwidth]{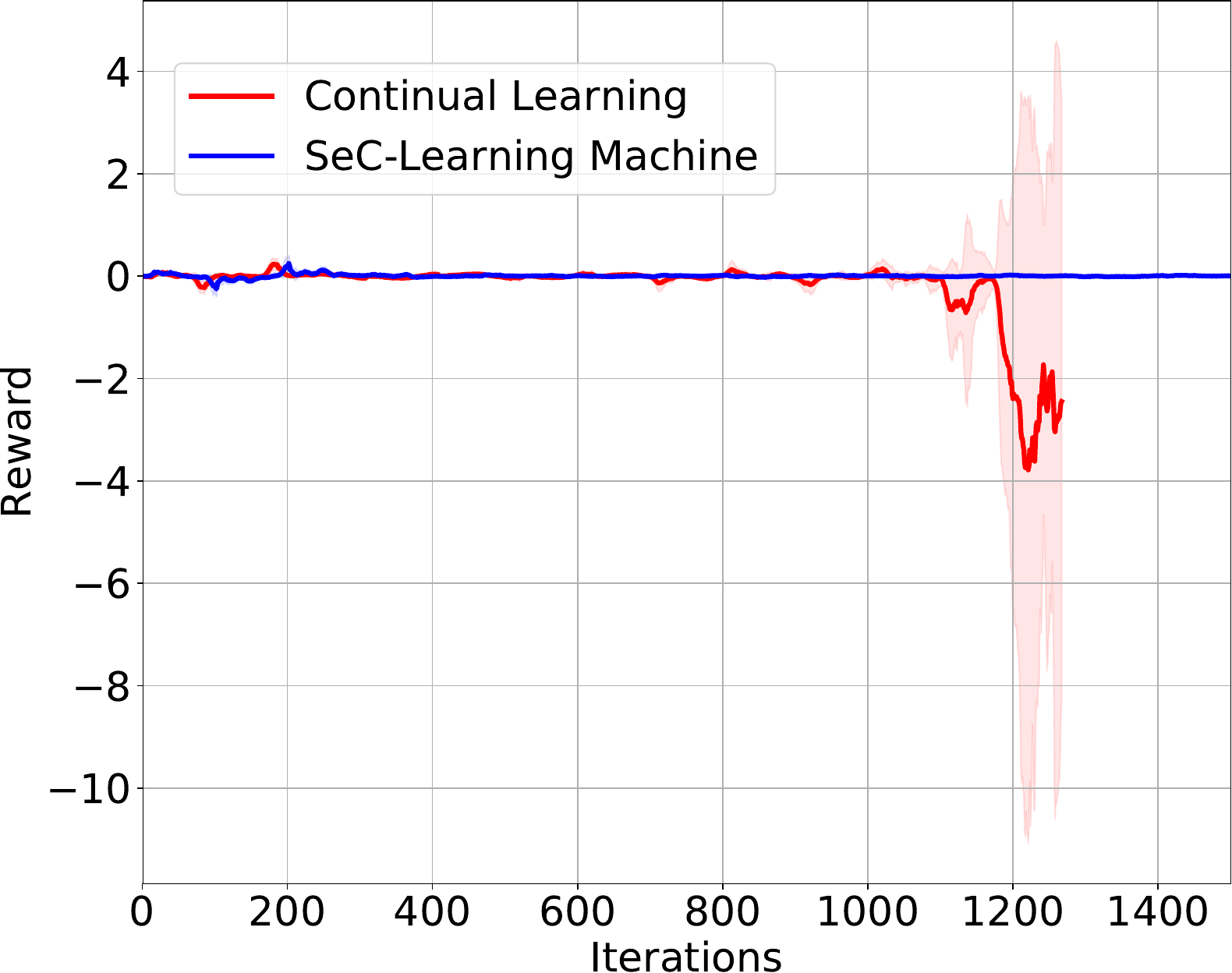}} 
    \centering
    \vspace{-0.0cm}
\caption{Reward curves in the term of iteration steps.}
\label{addewwards}
\end{figure}

\end{document}